
\documentclass{article}

\usepackage{txie_icml}
\usepackage{natbib}

\usepackage{microtype}
\usepackage{graphicx}
\usepackage{subcaption}
\usepackage{booktabs} 

\usepackage{hyperref}


\usepackage[scaled=.91]{helvet}

\usepackage[accepted]{icml2023}

\usepackage{amsmath}
\usepackage{amssymb}
\usepackage{mathtools}
\usepackage{amsthm}
\usepackage[inline]{enumitem}
\usepackage{adjustbox}

\usepackage[capitalize,noabbrev]{cleveref}

\theoremstyle{plain}
\newtheorem{theorem}{Theorem}[section]
\newtheorem{proposition}[theorem]{Proposition}
\newtheorem{lemma}[theorem]{Lemma}

\theoremstyle{definition}
\newtheorem{definition}[theorem]{Definition}
\newtheorem{assumption}[theorem]{Assumption}
\theoremstyle{remark}
\newtheorem{remark}[theorem]{Remark}
\newcommand{\mypar}[1]{\noindent \textbf{#1} \hspace{1mm}}

\usepackage[textsize=tiny]{todonotes}

\usepackage{xspace}
\def\algo{STR\xspace}

\usepackage[flushleft]{threeparttable}
\usepackage{tablefootnote}
\usepackage{multirow}

\icmltitlerunning{Supported Trust Region Optimization for Offline Reinforcement Learning}

\begin{document}

\twocolumn[
\icmltitle{Supported Trust Region Optimization for Offline Reinforcement Learning}



\icmlsetsymbol{equal}{*}

\begin{icmlauthorlist}
\icmlauthor{Yixiu Mao}{Thu}
\icmlauthor{Hongchang Zhang}{Thu}
\icmlauthor{Chen Chen}{Thu}
\icmlauthor{Yi Xu}{DLUT}
\icmlauthor{Xiangyang Ji}{Thu}
\end{icmlauthorlist}

\icmlaffiliation{Thu}{Department of Automation, Tsinghua University}
\icmlaffiliation{DLUT}{School of Artificial Intelligence, Dalian University of Technology}

\icmlcorrespondingauthor{Yixiu Mao}{myx21@mails.tsinghua.edu.cn}
\icmlcorrespondingauthor{Xiangyang Ji}{xyji@tsinghua.edu.cn}

\icmlkeywords{Machine Learning, ICML}

\vskip 0.3in
]



\printAffiliationsAndNotice{}  

\begin{abstract}
Offline reinforcement learning suffers from the out-of-distribution issue and extrapolation error. Most policy constraint methods regularize the density of the trained policy towards the behavior policy, which is too restrictive in most cases. 
We propose Supported Trust Region optimization~(STR)
which performs trust region policy optimization with the policy constrained within the support of the behavior policy, enjoying the less restrictive \textit{support constraint}.
We show that, when assuming no approximation and sampling error, STR guarantees strict policy improvement until convergence to the optimal support-constrained policy in the dataset. Further with both errors incorporated, STR still
guarantees safe policy improvement 
for each step.
Empirical results validate the theory of STR and demonstrate its state-of-the-art performance on MuJoCo locomotion domains and much more challenging AntMaze domains.
\end{abstract}

\section{Introduction}
Offline Reinforcement Learning~(RL)
aims to learn a policy from a fixed dataset without further interactions.
It can utilize existing large-scale datasets to learn safely and efficiently~\citep{gulcehre2020rl, fu2020d4rl}.
However, this also carries with it a major challenge: 
the evaluation of out-of-distribution (OOD) actions causes extrapolation error~\citep{fujimoto2019off} and overestimation.

Policy constraint methods try to address this by constraining the learned policy to be close to the behavior policy~\citep{wu2019behavior, kumar2019stabilizing,wang2020critic,fujimoto2021minimalist}.
Among them, Weighted Behavior Cloning~(WBC)
performs behavior cloning on the dataset, but assigns different weights to different data points to 
distill a better policy~\citep{chen2020bail,chen2021decision}. 
A common practice is to set the weight as the exponentiated advantage function, leading to Exponentiated Advantage-Weighted Behavior Cloning~(EAWBC)~\citep{wang2018exponentially,peng2019advantage,nair2020awac,wang2020critic,siegel2020keep}.
Mathematically, EAWBC
is equivalent to a policy improvement step in RL with a KL constraint towards 
an implicit baseline policy from which the imitated actions are sampled~(i.e. behavior policy).
As a result, the policy of existing EAWBC methods is implicitly constrained by a \textit{density constraint},
which is typically too restrictive to achieve good performance both theoretically and empirically~\citep{kumar2019stabilizing}.
As a simple example, when the dataset contains a small proportion of the optimal behavior, density constraint will lead to a sub-optimal policy.

On the other hand, from the perspective of optimization process, it is desirable that an RL algorithm can have the \textit{safe policy improvement} guarantee, i.e., have a worst-case performance degradation bound for \textit{each} policy update step.
Benefit from this property, online trust region methods~\citep{schulman2015trust,schulman2017proximal}
have shown supreme performance on both discrete and continuous tasks~\citep{duan2016benchmarking}.
However, in the offline setting with approximation and sampling error, few existing offline RL algorithms can ensure safe policy improvement for each step.

\begin{figure*}[t]
	\centering
	\includegraphics[width=0.98\textwidth]{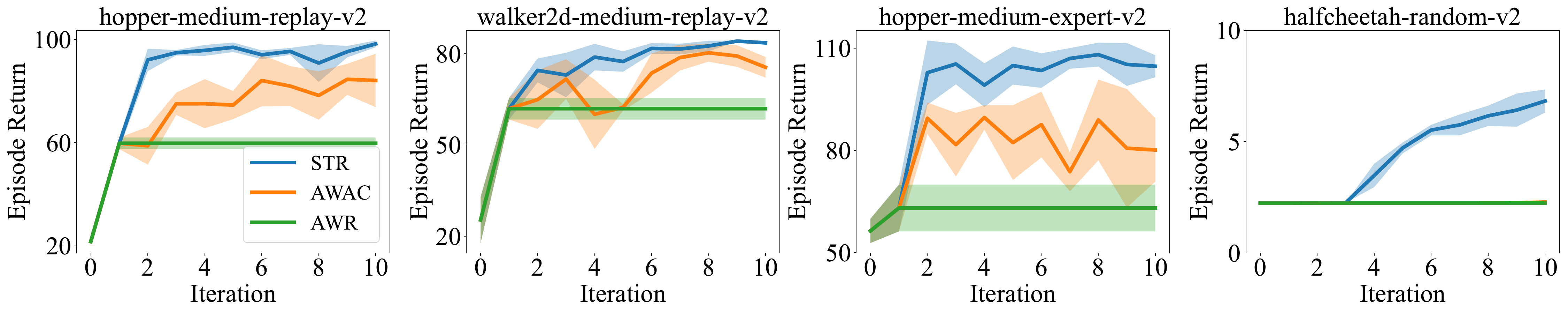}

\caption{\small{Experimental Verification of Theories.
Both policy evaluation and policy improvement
are trained to convergence at each iteration,
and all algorithms adopt the same hyperparameter that controls the constraint strength.
\algo based on supported trust region update
enjoys safe policy improvement for each iteration, and with a less restrictive \textit{support constraint}, the final performance is better.
By contrast, both one-step and multi-step EAWBC methods~(AWR and AWAC)
implicitly satisfying a \textit{density constraint} to the behavior policy have sub-optimal performance.
In halfcheetah-random, AWR and AWAC can hardly make a visible improvement over the behavior policy.
Also, AWAC cannot guarantee safe policy improvement~(most stark in walker2d-medium-replay and hopper-medium-expert).
}}
\label{fig:convergence}
\vspace{-2mm}
\end{figure*}

In this paper, we aim to address the above issues by proposing 
Supported Trust Region optimization~(STR) based on EAWBC.
STR
performs trust region policy optimization with the policy constrained within the support of the behavior policy $\beta$.
This less restrictive \textit{support constraint} allows to seek the best behavior in the dataset and is usually sufficient to mitigate the extrapolation error.
We start from an observation that the analytical policy update form of EAWBC is an equal-support update, which means the updated policy has the same support as the baseline policy.
Based on it, STR utilizes importance sampling on the dataset to mimic sampling from the current policy, which makes the implicit baseline policy in EAWBC become the projection of the current policy on $\beta$.
In this way, by initializing with an estimated behavior policy and adopting a relatively strong policy constraint, the policy of STR is able to deviate from $\beta$ step by step with trust region updates, while still satisfying the support constraint to mitigate extrapolation error. In this process, STR gradually “sharpens” the action distribution of $\beta$, 
giving higher weights to better actions, 
until it converges to the optimal action in the support of $\beta$, while prior EAWBC methods can only get a sub-optimal policy due to the implicit density constraint.

Theoretically, STR enjoys stronger guarantees. Under the same assumptions as prior EAWBC works~(exact tabular $Q$)~\citep{nair2020awac,wang2020critic}, STR guarantees strict policy improvement until convergence to the optimal support-constrained policy, exceeding the prior non-decreasing results
that also have no performance guarantee at convergence.
With $Q$-function approximation and sampling error,
prior EAWBC works lose guarantees, while STR still ensures safe policy improvement for each step.

Empirically, we test \algo on D4RL benchmark~\citep{fu2020d4rl}, including Gym-MuJoCo locomotion domains and much more challenging AntMaze domains. 
STR consistently outperforms state-of-the-art baselines and outperforms prior EAWBC methods by a large margin. 
We also conduct a validation experiment and confirm the theoretical superiority of STR, including the less restrictive support constraint and safe policy improvement~(\cref{fig:convergence}). Besides, compared with prior EAWBC methods, STR maintains good performance over a wide range of the hyperparameter that controls the policy constraint strength.

\section{Preliminaries}
\label{preliminaries}
\mypar{RL.}
In RL, the environment is typically assumed to be a Markov Decision Process~(MDP) $\Mcal=(\mathcal{S}, \mathcal{A}, \Pcal, R, \gamma, d_0)$, with state space $\mathcal{S}$, action space $\mathcal{A}$, transition dynamics $\Pcal: \Scal \times \Acal \to \Delta(\Scal)$, reward function $R: \Scal \times \Acal \to [0, \Rmax]$, discount factor $\gamma \in [0,1)$, and initial state distribution $d_0$~\citep{sutton2018reinforcement}. An agent interacts with the MDP according to a policy $\pi: \Scal \to \Delta(\Acal)$. The goal of the agent is to find a policy that maximizes the expected discounted return: $\eta(\pi)= \mathbb{E}_{\tau\sim\pi}[\sum_{t=0}^{\infty}\gamma^t r_t]$, with $r_t=R(s_t,a_t)$. Here $\tau$ denotes a trajectory $(s_0,a_0,r_0,s_1,\ldots)$ and $\tau\sim\pi$ is shorthand for indicating the distribution of $\tau$ depends on $\pi$: $s_0 \sim d_0, a_t \sim \pi(\cdot|s_t), s_t \sim \Pcal(\cdot|s_t,a_t)$.
For any policy $\pi$, we define the value function as $V^\pi(s)=\mathbb{E}_\pi[\sum_{t=0}^{\infty}\gamma^t r_t | s_0=s]$ and the state-action value function~($Q$-value function) as $Q^\pi(s,a)=\mathbb{E}_\pi[\sum_{t=0}^{\infty}\gamma^t r_t | s_0=s,a_0=a]$. By the boundedness of rewards, we have $0 \leq Q^\pi,V^\pi \leq \tfrac{\Rmax}{1-\gamma} =: \Vmax$. The advantage function is  $A^\pi(s,a)=Q^\pi(s,a)-V^\pi(s)$.
For a policy $\pi$, the Bellman operator $\Tcal^\pi$ is defined as $\left(\mathcal{T}^{\pi} f\right)(s, a):=R(s, a)+\gamma \mathbb{E}_{s' \sim \Pcal(\cdot|s,a)}\left[f\left(s', \pi(s')\right)\right]$, 
where $f(s',\pi(s')):=\E_{a'\sim \pi(\cdot|s')}f(s',a')$.
In addition, we use $d^\pi_t$ to denote the state occupancy of the policy $\pi$ at time step $t$: $d^{\pi}_t(s):=\mathbb{E}_\pi \left[ \mathbb{I}\left[s_{t}=s\right]\right]$, and use $d^\pi$ to denote the normalized and discounted state occupancy: $d^{\pi}(s)=(1-\gamma) \sum_{t=0}^{\infty} \gamma^{t} d^{\pi}_t(s)$. We also define the state-action occupancy with $\rho^\pi_t(s,a)=d^{\pi}_t(s)\pi(a|s)$ and $\rho^\pi(s,a)=d^{\pi}(s)\pi(a|s)$.

\mypar{Offline RL.} In offline RL, the agent 
is provided with a fixed dataset $\Dcal$ 
collected by some behavior policy $\beta$. We define $\Mcal_\Dcal$ as the empirical MDP induced by the dataset $\Dcal$ that uses the empirical transition model based on data counts. Ordinary approximate dynamic programming methods evaluate policy $\pi$ by minimizing temporal difference error~\citep{haarnoja2018soft}, according to the following loss
\begin{equation}
\begin{aligned}
\label{eqn:Q loss}
L_{Q}(\theta) = \mathbb{E}_{(s, a, s')\sim \mathcal{D}}[(&Q_\theta(s,a) - R(s,a)\\
&- \gamma \mathbb{E}_{a'\sim \pi_\phi(\cdot|s')}Q_{\theta'}(s',a'))^2],
\end{aligned}
\end{equation}
where $\pi_\phi$ is a policy parameterized by $\phi$,
$Q_\theta(s,a)$ is a $Q$ function parameterized by $\theta$, and $Q_{\theta'}(s,a)$ is a target network whose parameters are updated via Polyak averaging.

Besides policy evaluation, a typical policy iteration also includes policy improvement.
In continuous action space, a stochastic policy can be updated by reparameterization:
\begin{equation}
\phi \leftarrow \operatorname{argmax}_{\phi} \mathbb{E}_{s \sim \mathcal{D}, \epsilon \sim \mathcal{N}(0,1)}\left[Q_{\theta}\left(s, f_{\phi}\left(\epsilon; s\right)\right)\right]
\end{equation}

In offline RL, OOD actions $a'$ can produce erroneous values for $Q_{\theta'}(s',a')$ 
and lead to an inaccurate estimation of $Q$-values. Then in policy improvement, where the policy is optimized to maximize the estimated $Q_\theta$,  the policy will prefer OOD actions whose values have been overestimated, resulting in poor performance.
To address this issue, WBC methods avoid explicitly maximizing $Q$, but imitate the actions with high $Q$-values from $\Dcal$ to improve $\pi$.

\section{A Unified Framework for EAWBC Works}
\label{sec:unify framework}

\begin{table*}[t]
\caption{Performance improvement guarantees of EAWBC methods in offline RL.}

\label{tab:awbc theory}
\begin{center}
\begin{footnotesize}
\setlength{\tabcolsep}{5pt}
\begin{tabular}{lccccc}
\toprule
\multirow{2}{*}{Method} &\multirow{2}{*}{$\pi_{\text{pe}}$} & \multirow{2}{*}{$\pi_{\text{base}}$}  &\multirow{2}{*}{Step}  &\multicolumn{2}{c}{\begin{tabular}[c]{@{}c@{}}Performance Guarantees\\\end{tabular}}
\\
\cmidrule(r){5-6}
&&&& w/o sampling and $Q$ approx. error  & w/ sampling and $Q$ approx. error          \\ 
\midrule
AWR         &\multirow{2}{*}{$\beta$}  &\multirow{2}{*}{$\beta$} &\multirow{2}{*}{One-step} & \multirow{2}{*}{\begin{tabular}[c]{@{}c@{}}\textit{Strictly} increasing for \textit{one-step}\\ w/o final performance guarantee\end{tabular}} &\multirow{2}{*}{Safe improvement for \textit{one-step}}  \\
MARWIL      &            &            \\ 
\midrule
AWAC        &\multirow{2}{*}{$\pi_i$}  &\multirow{2}{*}{$\beta$} &\multirow{2}{*}{Multi-step} &\multirow{2}{*}{\begin{tabular}[c]{@{}c@{}}\textit{Non-decreasing} for \textit{each step}\\ w/o final performance guarantee\end{tabular}} &\multirow{2}{*}{No guarantee}  \\
CRR         &             &                                \\
\midrule
ABM         &$\pi_i$ &$\propto \beta \exp(\hat{A}^{\pi_i})$ &\multirow{1}{*}{Multi-step} &No guarantee &No guarantee        \\
\midrule
\textbf{\algo~(ours)} &$\pi_i$  &$\operatorname{Proj}_{\operatorname{supp}(\beta)}(\pi_i)$  &Multi-step &\begin{tabular}[c]{@{}c@{}}\textit{Strictly} increasing for \textit{each step}\\ to support-constrained optimal \end{tabular} &Safe improvement for \textit{each step} \\
\bottomrule
\end{tabular}
\end{footnotesize}
\end{center}
\end{table*}

In this section, we propose a unified framework for all prior EAWBC methods and point out their limitations. Later in \cref{sec:STR}, we will present our proposed algorithm based on this framework.
All derivations of this section could be found in \cref{app:EAWBC framework}.

The unified framework follows a policy iteration paradigm.
At the $i^{th}$ iteration, after evaluating some policy $\pi_{\text{pe}}$ to get an advantage estimate $\hat{A}^{\pi_{\text{pe}}}$, it solves the following constrained optimization problem to update the policy, where $\pi_{\text{base}}$ is some baseline policy:
\begin{equation}
\begin{aligned}
\label{eqn:unify}
& \pi_{i+1} = \argmax_\pi \underset{a \sim \pi}{\E} [\hat{A}^{\pi_{\text{pe}}} (s,a)] \\
s.t.~ \mathrm{D_{KL}}&\textstyle(\pi\|\pi_{\text{base}})[s] \leq \epsilon,\quad
\sum_a \pi(a|s)=1,~ \forall s
\end{aligned}   
\end{equation}
The optimization problem above has a closed-form solution:
\begin{equation}
\begin{aligned}
\label{eqn:unify close form}
     \pi_{i+1}(a|s) = &\pi_{\text{base}}(a|s)f(s,a;\pi_{\text{pe}})\\
     \text{where } f(s,a;\pi_{\text{pe}})&:=\textstyle\frac{1}{Z(s)}\exp\left(\frac{\hat{A}^{\pi_{\text{pe}}} (s,a)}{\lambda^*(s)}\right)
\end{aligned}
\end{equation}
Here $Z(s)=\sum_a \pi_{\text{base}}(a|s)\exp\left(\frac{ \hat{A}^{\pi_{\text{pe}}} (s,a)}{\lambda^*(s)}\right)$ is the per-state normalizing factor,
and $\lambda^*(s)$ is the solution of the following convex dual problem:
\begin{equation}
     \min_{\lambda \geq 0}~ \textstyle\epsilon \lambda + \lambda \log\left[\sum_a \pi_{\text{base}}(a|s) \exp\left(\frac{\hat{A}^{\pi_{\text{pe}}} (s,a)}{\lambda}\right)\right]
\end{equation}

In practice, the non-parametric solution in \Eqref{eqn:unify close form} can be projected onto the parametric policy class by minimizing the KL divergence:
\begin{equation}
    \argmin_{\phi} \underset{s \sim \Dcal}{\E} \left[\mathrm{D_{KL}}(\pi_{i+1}(\cdot|s)\|\pi_\phi(\cdot|s))\right]
\label{eqn:unify KL}
\end{equation}
By the relationship between $\pi_{i+1}$ and $\pi_{\text{base}}$ in \Eqref{eqn:unify close form},
\Eqref{eqn:unify KL} is equivalent to maximizing the following unified EAWBC objective:
\begin{equation}
\begin{aligned}
      & J_{U}(\phi)=
       \underset{s \sim \Dcal,a \sim \pi_{\text{base}}}{\E} \left[ f(s,a;\pi_{\text{pe}})\log(\pi_\phi(a|s))\right]
    \label{eqn:eawbc obj unify}
\end{aligned}
\end{equation}
All prior EAWBC methods except ABM~\citep{siegel2020keep} adopt the behavior policy $\beta$ as $\pi_{\text{base}}$ so that they can sample state-action pairs directly from $\Dcal$ to optimize $J_{U}(\phi)$.
Their main difference is what policy is selected as the evaluation policy $\pi_{\text{pe}}$. Typically, AWR~\citep{peng2019advantage} and MARWIL~\citep{wang2018exponentially} use $\beta$ as $\pi_{\text{pe}}$, while AWAC~\citep{nair2020awac} and CRR~\citep{wang2020critic} use the current policy $\pi_i$ as $\pi_{\text{pe}}$. 
Therefore, AWR and MARWIL can be classified as one-step methods that simply perform one step of policy improvement using an on-policy estimate $Q^\beta$~\citep{brandfonbrener2021offline}, while AWAC and CRR belong to multi-step methods that repeatedly evaluate off-policy $Q^\pi$.

However, due to the correspondence between the constrained optimization problem~(\Eqref{eqn:unify}) and the WBC problem~(\Eqref{eqn:eawbc obj unify}), when choosing $\beta$ as $\pi_{\text{base}}$, the maximization of $J_{U}(\phi)$ 
implicitly regularizes $\pi$'s density towards $\beta$.
Density constraint is overly restrictive in many cases~\citep{kumar2019stabilizing}.
With a small or moderate $\epsilon$, due to 
this implicit KL constraint,
the learned policy may be highly sub-optimal. The following lemma confirms this statement.
\begin{lemma}\label{lem:kl constrained eta}
If $\mathrm{D_{KL}}(\pi(\cdot|s)\|\beta(\cdot|s)) \leq \epsilon , \forall s$ is guaranteed, then the performance $\eta$ has the following bound
\begin{equation}
\textstyle
    \eta (\pi) \leq \eta(\beta)+\frac{\Vmax}{\sqrt{2}(1-\gamma)} \sqrt{\epsilon}
\end{equation}
\end{lemma}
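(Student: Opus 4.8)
The plan is to reduce the statement to a one-step estimate via the performance difference lemma and then control that step with Pinsker's inequality. First I would invoke the (standard) performance difference lemma to write $\eta(\pi)-\eta(\beta) = \frac{1}{1-\gamma}\,\E_{s\sim d^\pi}\,\E_{a\sim\pi(\cdot|s)}\big[A^\beta(s,a)\big]$, where $d^\pi$ is the normalized discounted state occupancy of the \emph{evaluated} policy $\pi$. Since $\E_{s\sim d^\pi}[1]=1$, it then suffices to prove the per-state bound $\E_{a\sim\pi(\cdot|s)}[A^\beta(s,a)] \le \tfrac{\Vmax}{\sqrt2}\sqrt\epsilon$ for every $s$, and integrate against $d^\pi$.

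For the per-state bound I would use that $\E_{a\sim\beta(\cdot|s)}[A^\beta(s,a)]=0$ (since $\E_{a\sim\beta}Q^\beta(s,a)=V^\beta(s)$), so that $\E_{a\sim\pi(\cdot|s)}[A^\beta(s,a)] = \sum_a\big(\pi(a|s)-\beta(a|s)\big)A^\beta(s,a)$. Splitting $\pi(\cdot|s)-\beta(\cdot|s)$ into its positive and negative parts, which are supported on disjoint sets and each carry mass equal to the total variation distance $t(s):=\mathrm{TV}\big(\pi(\cdot|s),\beta(\cdot|s)\big)$, this difference is at most $t(s)\big(\max_a A^\beta(s,a)-\min_a A^\beta(s,a)\big)$.

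The one step that needs care in order to land the stated constant (rather than a bound worse by a factor of $2$) is the range estimate: for fixed $s$, $A^\beta(s,a)=Q^\beta(s,a)-V^\beta(s)$ depends on $a$ only through $Q^\beta(s,a)\in[0,\Vmax]$, so $\max_a A^\beta(s,a)-\min_a A^\beta(s,a)\le\Vmax$. Hence $\E_{a\sim\pi(\cdot|s)}[A^\beta(s,a)]\le\Vmax\,t(s)$. Finally, Pinsker's inequality gives $t(s)\le\sqrt{\tfrac12\,\mathrm{D_{KL}}(\pi(\cdot|s)\|\beta(\cdot|s))}\le\sqrt{\epsilon/2}$, so $\E_{a\sim\pi(\cdot|s)}[A^\beta(s,a)]\le\Vmax\sqrt{\epsilon/2}=\tfrac{\Vmax}{\sqrt2}\sqrt\epsilon$; plugging into the performance difference identity yields $\eta(\pi)\le\eta(\beta)+\tfrac{\Vmax}{\sqrt2(1-\gamma)}\sqrt\epsilon$.

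The main obstacle is precisely the sharp constant. The naive route --- Hölder's inequality with $\|\pi(\cdot|s)-\beta(\cdot|s)\|_1\le\sqrt{2\epsilon}$ and $\|A^\beta\|_\infty\le\Vmax$ --- costs an extra factor of $2$. The fix is twofold: pass from $\|\cdot\|_1$ to $2\,\mathrm{TV}$ via the disjoint-support decomposition, and exploit that subtracting the $a$-independent constant $V^\beta(s)$ is free because $\pi(\cdot|s)-\beta(\cdot|s)$ sums to zero, which effectively replaces $A^\beta(s,\cdot)$ by $Q^\beta(s,\cdot)$ and halves the relevant range from $2\Vmax$ to $\Vmax$.
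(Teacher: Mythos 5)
Your proof is correct, and it follows the same skeleton as the paper's: performance difference lemma, a per-state bound on the advantage expectation, then Pinsker. Two differences are worth noting. First, you apply the performance difference lemma in the opposite direction (states drawn from $d^{\pi}$ with advantages $A^{\beta}$, versus the paper's $d^{\beta}$ with $A^{\pi}$); this is immaterial, as both reduce to bounding $\sum_a\bigl(\pi(a|s)-\beta(a|s)\bigr)Q(s,a)$ for a $Q$ valued in $[0,\Vmax]$. Second, and more substantively, your handling of the constant is the more rigorous of the two. The paper bounds $|Q^{\pi}|\le \Vmax$ and then identifies $\sum_a|\beta(a|s)-\pi(a|s)|$ with $\mathrm{D_{TV}}(\pi\|\beta)[s]$, which contradicts the convention $\mathrm{D_{TV}}=\tfrac12\sum_a|\cdot|$ that it uses in \cref{prop:trust region}; carried out consistently, that chain only yields the weaker constant $\sqrt{2}\,\Vmax/(1-\gamma)$. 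Your centering step --- exploiting that $\pi(\cdot|s)-\beta(\cdot|s)$ sums to zero, so that only the range of $Q^{\beta}(s,\cdot)$ (at most $\Vmax$) rather than twice its sup-norm enters, via the disjoint positive/negative parts each of mass $\mathrm{D_{TV}}$ --- is exactly what is needed to legitimately land the stated $\Vmax/(\sqrt{2}(1-\gamma))$. So your argument is not merely an alternative route; it supplies the missing justification for the constant the lemma actually claims.
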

To be less pessimistic,
$\epsilon$ needs to be large
and $\lambda^*(s)$ in \Eqref{eqn:unify close form} will be close to $0$.
Then when $\hat{A}^{\pi_{\text{pe}}}$ is estimated poorly, this will lead to a large and catastrophic policy update~\citep{duan2016benchmarking}.
We point out this trade-off between optimality and stability is the main reason that prior methods output a sub-optimal policy in practice. Our experiments in \cref{sec:temp} support this claim.

To relax this implicit density constraint a bit, ABM~\citep{siegel2020keep} first learns a policy $\pi_{{abm}}$ at iteration $i$ by maximizing $J_{U}(\phi)$
with $\pi_{\text{base}} = \beta, \pi_{\text{pe}}=\pi_i$. Then it obtains the next iterate $\pi_{i+1}$ by maximizing $J_{U}(\phi)$ again with $\pi_{\text{base}} = \pi_{{abm}}, \pi_{\text{pe}}=\pi_i$. 
In this way, ABM intuitively relaxes one density constraint to two coupled ones. 
However, it is still a density constraint method in essence, 
and the second optimization needs to imitate the actions from $\pi_{{abm}}$ rather than the dataset, which will bring extrapolation error as we find empirically in \cref{sec:experiments}.

From a theoretical perspective, the guarantees of prior EAWBC works are not strong.
It is proved in~\citet{wang2020critic} that when assuming exact tabular $Q$ estimation,
the multi-step methods~(CRR and AWAC) lead to a non-decreasing $Q$.
However even with this strong assumption,
these works cannot
prove strict monotonicity
and have no performance guarantee at convergence.
On the other hand, 
when considering sampling error and $Q$-function approximation, 
we always get an approximated and inaccurate $Q$, 
which is the main reason why offline RL algorithms suffer from extrapolation error and overestimation. 
However, no prior EAWBC works analyze this important
setting. 
We find that with an approximate $Q$, only one-step EAWBC methods~(AWR, MARWIL) can ensure
safe policy improvement for that only one step, and
those multi-step methods fail to guarantee similar results.
We summarize in \cref{tab:awbc theory} the theories of EAWBC methods including our proposed STR to present a clear comparison.
For a more detailed discussion of related works, please see \cref{app:related works}.

\section{Supported Trust Region Optimization for Offline RL with Safe Policy Improvement} \label{sec:STR}

In this paper, we will 
relax the implicit density constraint of EAWBC methods to a support constraint, and propose a stable Supported Trust Region algorithm - STR. Benefit from the less restrictive support constraint, STR enjoys much stronger guarantees than prior EAWBC methods. In addition, STR imitates the samples totally from the dataset, thus avoiding the extrapolation error issue of ABM.

\begin{definition}[Support-constrained policy]
\label{def:support contraint}
The support-constrained policy class $\Pi$ is defined as 
\begin{equation}
\Pi = \{ \pi ~|~ \pi( a | s) = 0 \text{ whenever } \beta( a | s) = 0 \}
\end{equation}
\end{definition}
The support constraint set $\Pi$ actually includes the KL density constraint set $\Pi_d = \{ \pi ~|~ \mathrm{D_{KL}}(\pi\|\beta)[s] \leq \epsilon \}$ of prior EAWBC works, which means the support constraint is less restrictive.
It allows to seek the best behavior in the dataset and is usually sufficient to mitigate the extrapolation error.
Following prior works~\citep{kumar2019stabilizing}, we also define the optimal support-constrained policy $\pi^*_{\Pi}$.

\begin{definition}[Optimal support-constrained policy]
\label{def:optimal support contraint policy}
The optimal support-constrained policy $\pi^*_{\Pi}$ is defined as:
\begin{equation}
\label{eqn:support constrained optimal pi}
 \pi^*_{\Pi}(\cdot|s):=\argmax _{\pi \in \Pi} Q^*_{\Pi}(s, \pi(s))  
\end{equation}
where $Q^*_{\Pi}$ satisfies the support-constrained Bellman optimality equation:
\begin{equation*}
\label{eqn:support constrained optimal Q}
Q^*_{\Pi}(s,a)=R(s, a)+\gamma \underset{s' \sim P(\cdot|s, a)}{\E}\left[\max _{\pi \in \Pi} Q^*_{\Pi}(s', \pi(s'))\right]
\end{equation*}
\end{definition}
Our key observation is that the closed-form policy update of EAWBC~(\Eqref{eqn:unify close form}) is an equal-support update, which means  $\operatorname{supp}(\pi_{i+1})=\operatorname{supp}(\pi_{\text{base}})$
\footnote{It follows directly as $f(s,a;\pi_{\text{pe}})>0$. 
We use the exact definition of support~$(=0)$.}.
Considering this, if we initialize $\pi_1$ as $\beta$ and choose the current policy $\pi_i$ to be $\pi_{\text{base}}$,
by a recursive argument, $\pi_i$ is still within the support of $\beta$, but the density value of $\pi_i$ can deviate much from $\beta$, even with a small constraint constant $\epsilon$. 
It allows $\pi_i$ to seek the best behavior in the dataset with small update steps.
Therefore, we consider to optimize the following EAWBC objective at iteration $i$, with $\pi_{\text{base}} = \pi_i, \pi_{\text{pe}}=\pi_i$.
\begin{equation}
\begin{aligned}
  J(\phi)=
  \underset{s \sim \Dcal,a \sim \pi_{i}}{\E} \left[ f(s,a;\pi_{i}) \log(\pi_\phi(a|s))\right]
    \label{eqn:eawbc obj pi_i}
\end{aligned}
\end{equation}
However, in practice, 
various errors 
may make $\pi_i$ deviate from $\beta$'s support. Even worse, this deviation will accumulate with iterations, eventually leading to large extrapolation error. To address this issue, 
rather than $\pi_{\text{base}} = \pi_i$, we expect the baseline policy $\pi_{\text{base}}$ to be the projection of $\pi_i$ on $\beta$. We find that Importance Sampling~(IS) can satisfy this requirement exactly.
Instead of sampling from $\pi_i$ to optimize $J(\phi)$, STR adopts IS to sample from $\beta$ and weight the objective by an IS ratio $\pi_i(a|s)/\beta(a|s)$.
\begin{equation}
\begin{aligned}
\textstyle
      J_{IS}(\phi)=
      \underset{s,a \sim \Dcal}{\E} \left[ \frac{\pi_i(a|s)}{\beta(a|s)}f(s,a;\pi_{i})\log(\pi_\phi(a|s))\right]
    \label{eqn:eawbc obj is}
\end{aligned}
\end{equation}
This IS operation is crucial 
to reduce extrapolation error, because it implicitly only weighted-imitates the in-$\beta$-support actions of $\pi_i$.
We give a more detailed explanation here.
IS computes $\mathbb{E}_{q}[p({x})f(x)/q({x})]$ to estimate $\mathbb{E}_p f(x)$.
When $\operatorname{supp}(p) \subseteq \operatorname{supp}(q)$ holds, IS is unbiased. However when the support condition does not hold, IS actually computes $\int_{\operatorname{supp}(q)} p(x)f(x) dx$. 
For the EAWBC objective, $\pi_i$ is $p$ and $\beta$ is $q$. Maximizing the IS objective $J_{IS}(\phi)$ in \Eqref{eqn:eawbc obj is} is equivalent to maximizing
\begin{equation}
    \tilde{J}(\phi)= \underset{s \sim \Dcal,a \sim \tilde{\pi}_{i}}{\E} \left[ f(s,a;\pi_{i})\log(\pi_\phi(a|s))\right]
\end{equation}
where $\tilde{\pi}_i:=\operatorname{Proj}_{\operatorname{supp}(\beta)}(\pi_i)$ is obtained by projecting $\pi_i$ onto $\beta$'s support:
\begin{equation*}
    \tilde{\pi}_i(a|s) = \frac{\mathbb{I}[\beta(a|s)>0]\pi_i(a|s)}{\sum_a \mathbb{I}[\beta(a|s)>0]\pi_i(a|s)}
\end{equation*}
Therefore, if the support constraint is violated at some iteration $i$ due to various errors: $\operatorname{supp}(\pi_{i}) \not\subseteq \operatorname{supp}(\beta)$,  $J_{IS}(\phi)$ will only weighted-imitate the in-$\beta$-support actions of $\pi_i$ and automatically pull $\pi_{i+1}$ back into $\beta$'s support, thus mitigating the extrapolation error in practice.

\begin{algorithm}[t]
\caption{\algo(Tabular)}
\label{alg:tabular}
\begin{algorithmic}
\STATE {\bfseries Input:} Offline dataset $\Dcal$, behavior policy $\beta$, constant $\epsilon$.
\STATE Initialize policy $\pi_1$ with $\beta$.
\FOR{$i = 1,2,\dotsc,N$}
\STATE Policy evaluation: 
\STATE \qquad compute $\hat{Q}^{\pi_i}$ in empirical MDP $\Mcal_\Dcal$
\STATE Compute $\hat{A}^{\pi_i}$:
\STATE \qquad$ \hat{A}^{\pi_i}(s,a) = \hat{Q}^{\pi_i}(s,a) - \E_{a \sim \pi_i}[\hat{Q}^{\pi_i}(s,a)]$.
\STATE Policy improvement: 
\STATE \qquad$\pi_{i+1}(a|s)=\frac{1}{Z(s)}\pi_i(a|s)\exp\left(\frac{\hat{A}^{\pi_i} (s,a)}{\lambda^*(s)}\right)$.
\ENDFOR
\end{algorithmic}
\end{algorithm}

\subsection{Theory of \algo with a tabular $Q$}
\label{sec:str tabular}

\cref{alg:tabular} instantiates a version of \algo in the tabular setting,
which is very concise and can clearly show the theoretical advantages of \algo.

The following proposition characterizes the equal-support property of \algo formally and shows that the $\pi$-induced state~(state-action) distribution also has the same property.

\begin{proposition}\label{prop:tabular pi supp}
For $\pi_i$ in \cref{alg:tabular}, 
$\operatorname{supp}(\pi_{i}) = \operatorname{supp}(\beta), \forall i$. It further implies $\operatorname{supp}(d^{\pi_i}) = \operatorname{supp}(d^{\beta})$ and $\operatorname{supp}(\rho^{\pi_i}) = \operatorname{supp}(\rho^{\beta})$.\footnote{Here $\operatorname{supp}(\rho^{\pi_i}) = \operatorname{supp}(\rho^{\beta})$ is similar to the definition of batch-constrained policies in~\citet{fujimoto2019off}.}
\end{proposition}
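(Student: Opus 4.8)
The plan is to establish the three claims in order: first the equal-support identity $\operatorname{supp}(\pi_i)=\operatorname{supp}(\beta)$ by induction on the iteration index $i$, then lift it to the discounted state occupancy by a second induction on the time step, and finally combine the two to obtain the state-action occupancy claim. All of it is support bookkeeping, so the write-up will stay short.

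For the policy support I would induct on $i$. The base case is immediate since \cref{alg:tabular} initializes $\pi_1=\beta$. For the inductive step, assume $\operatorname{supp}(\pi_i)=\operatorname{supp}(\beta)$ and recall that the update is $\pi_{i+1}(a|s)=\pi_i(a|s)\,f(s,a;\pi_i)$ with $f(s,a;\pi_i)=\exp\!\big(\hat A^{\pi_i}(s,a)/\lambda^*(s)\big)/Z(s)$. Under the exact tabular assumption $\hat A^{\pi_i}$ is finite, and the dual optimum $\lambda^*(s)$ is finite and strictly positive whenever the KL constraint is active (which is the regime of a sufficiently small $\epsilon$); hence the exponential is strictly positive and $Z(s)=\sum_a \pi_i(a|s)\exp\!\big(\hat A^{\pi_i}(s,a)/\lambda^*(s)\big)>0$ as a sum of positive terms over the nonempty set $\operatorname{supp}(\pi_i(\cdot|s))$. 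Consequently $\pi_{i+1}(a|s)>0$ if and only if $\pi_i(a|s)>0$, i.e. $\operatorname{supp}(\pi_{i+1})=\operatorname{supp}(\pi_i)=\operatorname{supp}(\beta)$, which closes the induction.

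Next I would record the elementary fact that any two policies $\pi,\pi'$ with $\operatorname{supp}(\pi(\cdot|s))=\operatorname{supp}(\pi'(\cdot|s))$ for all $s$ induce $t$-step state marginals with the same support for every $t$. This follows by induction on $t$: the time-$0$ marginal is $d^\pi_0=d_0$, which does not depend on $\pi$, and the recursion $d^\pi_t(s)=\sum_{s',a} d^\pi_{t-1}(s')\,\pi(a|s')\,P(s|s',a)$ shows that positivity of $d^\pi_t(s)$ depends only on $\operatorname{supp}(d^\pi_{t-1})$, $\operatorname{supp}(\pi)$ and the fixed transition kernel, hence is identical for $\pi$ and $\pi'$ by the inductive hypothesis and the common support profile. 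Applying this with $\pi=\pi_i$, $\pi'=\beta$ (valid by the first step), and using that $d^\pi=(1-\gamma)\sum_{t\geq 0}\gamma^t d^\pi_t$ is a combination with strictly positive coefficients so $\operatorname{supp}(d^\pi)=\bigcup_{t\geq 0}\operatorname{supp}(d^\pi_t)$, gives $\operatorname{supp}(d^{\pi_i})=\operatorname{supp}(d^\beta)$; the same reasoning applies verbatim in the empirical MDP $\Mcal_\Dcal$. Finally, since $\rho^\pi(s,a)=d^\pi(s)\pi(a|s)$, a pair $(s,a)$ lies in $\operatorname{supp}(\rho^\pi)$ iff $d^\pi(s)>0$ and $\pi(a|s)>0$; combining the state-occupancy identity with the conditional-support identity yields $\operatorname{supp}(\rho^{\pi_i})=\operatorname{supp}(\rho^\beta)$.

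The only genuinely delicate point is the strict positivity of $f(s,a;\pi_i)$ in the first step, which hinges on the trust-region multiplier $\lambda^*(s)$ being finite and nonzero: a vanishing $\lambda^*$ would turn the update greedy and could strictly shrink the support, so this is exactly where the "relatively strong constraint" (small $\epsilon$) assumption is used. Everything else is routine, the only point worth flagging explicitly being that $d^\pi_0=d_0$ is policy-independent, which correctly anchors the occupancy induction.
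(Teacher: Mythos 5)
Your proof is correct and follows essentially the same route as the paper's: induction on $i$ via strict positivity of the exponential reweighting factor $f(s,a;\pi_i)$, then lifting the equal-support property to $d^{\pi_i}$ and $\rho^{\pi_i}$ through the shared transition dynamics and the factorization $\rho^\pi(s,a)=d^\pi(s)\pi(a|s)$. The paper's version is merely terser --- it asserts $f(s,a;\pi_i)>0$ without discussing when $\lambda^*(s)$ is finite and positive, and dismisses the occupancy step with ``it follows directly'' --- so your explicit induction on $t$ and your caveat about $\lambda^*(s)$ are refinements of, not departures from, its argument.
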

All proofs of \cref{sec:STR} could be found in \cref{app:theo_proofs}.

\begin{remark}
\cref{alg:tabular} assumes direct access to $\beta$. Although we can only obtain an estimated $\hat{\beta}=n(s,a)/{n(s)}$ in practice where $n$ is the number of data points in $\Dcal$, it satisfies $\operatorname{supp}(\hat{\beta}) \subseteq  \operatorname{supp}(\beta)$, and all theoretical results of STR will still hold, except that 
the optimal $\beta$-support-constrained policy becomes $\hat{\beta}$-support-constrained one.
Further, when assuming no sampling error~$|\Dcal|=\infty$, $\hat{\beta}$ and $\beta$ are the same.
\end{remark}

Based on \cref{prop:tabular pi supp}, we show that without approximation and sampling error~(tabular $Q$ and infinite $\Dcal$), 
policy evaluation of STR under the empirical MDP $\Mcal_\Dcal$ gives the exact $Q$ function under the true MDP.

\begin{proposition}\label{prop:tabular pe}
In tabular MDP, if the offline dataset $\Dcal$ is generated by a behavior policy $\beta$ and $|\mathcal{D}| = \infty$, then we can have an exact evaluation of $Q^{\pi_{i}}$ for all $\pi_i$ in \cref{alg:tabular}.
\end{proposition}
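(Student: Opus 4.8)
The plan is to combine the equal-support property of \cref{prop:tabular pi supp} with the fact that, with infinitely much data, the empirical transition model recovers the true one on exactly the state--action pairs that the behavior policy can reach. Write $\mathcal{E} := \operatorname{supp}(\rho^{\beta}) = \{(s,a)\, :\, d^{\beta}(s)>0,\ \beta(a|s)>0\}$. The structural fact I would establish first is that $\mathcal{E}$ is forward-invariant under the dynamics of any $\pi_i$: if $(s,a)\in\mathcal{E}$, $\Pcal(s'|s,a)>0$ and $\pi_i(a'|s')>0$, then $s'$ is reachable by $\beta$, so $d^{\beta}_{t+1}(s')\ge d^{\beta}_{t}(s)\beta(a|s)\Pcal(s'|s,a)>0$ for the $t$ with $d^{\beta}_t(s)>0$, hence $d^{\beta}(s')>0$; moreover by \cref{prop:tabular pi supp} $\operatorname{supp}(\pi_i)=\operatorname{supp}(\beta)$, so $\beta(a'|s')>0$; thus $(s',a')\in\mathcal{E}$. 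Consequently the true Bellman-evaluation equation $Q^{\pi_i}(s,a)=R(s,a)+\gamma\,\mathbb{E}_{s'\sim\Pcal(\cdot|s,a)}\,\mathbb{E}_{a'\sim\pi_i(\cdot|s')}\,Q^{\pi_i}(s',a')$, restricted to $(s,a)\in\mathcal{E}$, references only the values of $Q^{\pi_i}$ on $\mathcal{E}$, and likewise $Q^{\pi_i}|_{\mathcal{E}}$ is a fixed point of $\Tcal^{\pi_i}$ viewed as an operator on functions over $\mathcal{E}$.

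Second, I would show that with $|\Dcal|=\infty$ the empirical MDP $\Mcal_\Dcal$ and the true MDP $\Mcal$ coincide on $\mathcal{E}$. Because $\Dcal$ is generated by rolling out $\beta$, every pair with $d^{\beta}_t(s)\beta(a|s)>0$ for some $t$ --- equivalently every $(s,a)\in\mathcal{E}$, since $d^{\beta}=(1-\gamma)\sum_t\gamma^t d^{\beta}_t$ --- occurs with strictly positive probability in each trajectory and is therefore observed infinitely often. By the Markov property the recorded successor states at a fixed $(s,a)$ are i.i.d.\ draws from $\Pcal(\cdot|s,a)$, so the count-based estimate $\hat\Pcal(s'|s,a)=n(s,a,s')/n(s,a)$ converges to $\Pcal(s'|s,a)$; and since the reward is a deterministic function of $(s,a)$, $\hat R(s,a)=R(s,a)$ once $(s,a)$ is seen (a law-of-large-numbers step covers stochastic rewards as well). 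Hence $\hat\Pcal(\cdot|s,a)=\Pcal(\cdot|s,a)$ and $\hat R(s,a)=R(s,a)$ for all $(s,a)\in\mathcal{E}$.

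Third, I would finish with a contraction/uniqueness argument. View the empirical policy-evaluation operator $\hat{\Tcal}^{\pi_i}$ and the true operator $\Tcal^{\pi_i}$ as maps on bounded functions over $\mathcal{E}$; both are $\gamma$-contractions in the sup-norm, so each has a unique fixed point, and $\hat Q^{\pi_i}$ (the function produced by exact policy evaluation under $\Mcal_\Dcal$) and $Q^{\pi_i}|_{\mathcal{E}}$ are those fixed points. By the first step both operators only read off values of the argument function on $\mathcal{E}$ when evaluated at a point of $\mathcal{E}$, and by the second step they use the same $\Pcal$ and $R$ there, so $(\hat{\Tcal}^{\pi_i}f)(s,a)=(\Tcal^{\pi_i}f)(s,a)$ for every $(s,a)\in\mathcal{E}$ and every $f$. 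Therefore the two fixed points agree on $\mathcal{E}$, i.e.\ $\hat Q^{\pi_i}(s,a)=Q^{\pi_i}(s,a)$ for all $(s,a)\in\mathcal{E}$. Since $\pi_i$ is supported on $\operatorname{supp}(\beta)$, \cref{alg:tabular} only ever queries $\hat Q^{\pi_i}$ --- and the advantage $\hat A^{\pi_i}$ built from it --- at pairs in $\mathcal{E}$, which gives the claimed exact evaluation of $Q^{\pi_i}$; this holds for every $i$ because \cref{prop:tabular pi supp} supplies $\operatorname{supp}(\pi_i)=\operatorname{supp}(\beta)$ at every iteration.

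The step I expect to be the main obstacle is making the ``evaluation depends only on $\mathcal{E}$'' claim airtight: the empirical model is genuinely undefined off the data support, so one must verify that the policy-evaluation problem for $\pi_i$ never needs transitions outside $\mathcal{E}$, and this hinges entirely on the equal-support property together with forward-invariance of $\mathcal{E}$. The remaining pieces --- exact model recovery on $\mathcal{E}$ from infinite data and uniqueness of the Bellman fixed point --- are routine, the only minor fork being deterministic versus stochastic rewards, which the same law-of-large-numbers argument handles.
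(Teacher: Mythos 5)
Your proof is correct and follows essentially the same route as the paper: both use the equal-support property of \cref{prop:tabular pi supp} to argue that evaluating $\pi_i$ only ever touches state--action pairs in $\operatorname{supp}(\rho^\beta)$, where the infinite-data empirical model coincides with the true model. The only difference is that the paper delegates the final step (forward-invariance plus uniqueness of the Bellman fixed point on that set) to Lemma~1 of \citet{fujimoto2019off}, whereas you prove it self-contained via the contraction argument.
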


With an exact tabular $Q$, 
we show that \cref{alg:tabular} guarantees strict policy improvement for each iteration until it converges to the optimal support-constrained policy $\pi^*_{\Pi}$.
With the same assumption as CRR~\citep{wang2020critic}, it is much stronger than the prior non-decreasing results
that also have no performance guarantee at convergence.

\begin{theorem}[Strict policy improvement for each step]
\label{thm:tabular monotone}
If we have the exact tabular estimation of $Q$,
then $\pi_i$ in \cref{alg:tabular} guarantees monotonic improvement:
\begin{equation}
    Q^{\pi_{i+1}}(s,a) \geq Q^{\pi_{i}}(s,a)\quad \forall s,a.
\end{equation}
and the improvement is strict in at least one $(s,a)$ pair until the optimal support-constrained policy $\pi^*_{\Pi}$ is found.
\end{theorem}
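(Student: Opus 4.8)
The plan is to recognize the tabular update of \cref{alg:tabular} as a \emph{comonotone reweighting} of $\pi_i$ and then invoke a soft version of the monotone policy-improvement theorem, finally upgrading ``non-decreasing'' to ``strictly increasing until optimal'' by exploiting that the reweighting is strictly increasing in the advantage. Under the theorem's hypothesis (which is attainable by \cref{prop:tabular pe}) the evaluation is exact, so $\hat Q^{\pi_i}=Q^{\pi_i}$ and $\hat A^{\pi_i}=A^{\pi_i}$, and the update reads $\pi_{i+1}(a|s)=\pi_i(a|s)\,g_s(a)$ with $g_s(a):=Z(s)^{-1}\exp\!\big(A^{\pi_i}(s,a)/\lambda^*(s)\big)$ non-decreasing in $A^{\pi_i}(s,a)$ and $\mathbb{E}_{a\sim\pi_i(\cdot|s)}[g_s(a)]=1$. \textbf{Step 1 (positive advantage of the new policy).} Writing $\delta(s):=\mathbb{E}_{a\sim\pi_{i+1}(\cdot|s)}[A^{\pi_i}(s,a)]$ and using $\mathbb{E}_{a\sim\pi_i(\cdot|s)}[A^{\pi_i}(s,a)]=0$, we get $\delta(s)=\mathbb{E}_{a\sim\pi_i(\cdot|s)}[g_s(a)A^{\pi_i}(s,a)]$, which is exactly the covariance under $\pi_i(\cdot|s)$ of $g_s(a)$ and $A^{\pi_i}(s,a)$; since both are non-decreasing in $A^{\pi_i}(s,a)$, Chebyshev's sum inequality gives $\delta(s)\ge0$, with equality iff $A^{\pi_i}(s,\cdot)$ is constant on $\operatorname{supp}(\pi_i(\cdot|s))$, which by \cref{prop:tabular pi supp} equals $\operatorname{supp}(\beta(\cdot|s))$.

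\textbf{Step 2 (monotonicity).} With $\delta(s)\ge0$ for every $s$, I would run the textbook monotone-improvement argument: from $Q^{\pi_i}(s,a)=R(s,a)+\gamma\,\mathbb{E}_{s'\sim\Pcal(\cdot|s,a)}[V^{\pi_i}(s')]$ together with $V^{\pi_i}(s')\le\mathbb{E}_{a'\sim\pi_{i+1}(\cdot|s')}[Q^{\pi_i}(s',a')]$, substitute repeatedly and pass to the limit to obtain $Q^{\pi_{i+1}}(s,a)\ge Q^{\pi_i}(s,a)$ for all $(s,a)$; equivalently, the performance-difference lemma yields $V^{\pi_{i+1}}(s)-V^{\pi_i}(s)=\tfrac{1}{1-\gamma}\mathbb{E}_{s'\sim d_s^{\pi_{i+1}}}[\delta(s')]\ge0$, where $d_s^{\pi_{i+1}}$ is the discounted state occupancy of $\pi_{i+1}$ started at $s$ (so $d_s^{\pi_{i+1}}(s)\ge1-\gamma$).

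\textbf{Step 3 (strictness and the fixed point).} I would argue by contraposition: suppose the improvement is strict at no $(s,a)$ pair, i.e.\ $Q^{\pi_{i+1}}\equiv Q^{\pi_i}$. Then $V^{\pi_{i+1}}(s)=\mathbb{E}_{a\sim\pi_{i+1}(\cdot|s)}[Q^{\pi_i}(s,a)]=V^{\pi_i}(s)+\delta(s)$; plugging this into the Bellman identity for $Q^{\pi_{i+1}}$ and comparing with that for $Q^{\pi_i}$ forces $\mathbb{E}_{s'\sim\Pcal(\cdot|s,a)}[\delta(s')]=0$ for all $(s,a)$, hence $\delta\equiv0$ on all reachable states; by Step 1 this means $A^{\pi_i}(s,\cdot)\equiv0$ on $\operatorname{supp}(\beta(\cdot|s))$ there, i.e.\ $Q^{\pi_i}(s,a)=V^{\pi_i}(s)=\max_{\pi\in\Pi}Q^{\pi_i}(s,\pi(s))$. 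Substituting back into $Q^{\pi_i}(s,a)=R(s,a)+\gamma\,\mathbb{E}_{s'}[V^{\pi_i}(s')]$ shows $V^{\pi_i}$ is a fixed point of the support-constrained Bellman optimality operator; by uniqueness of that fixed point (the operator is a $\gamma$-contraction) $Q^{\pi_i}=Q^*_\Pi$, so $\pi_i=\pi^*_\Pi$ per \cref{def:optimal support contraint policy}. Therefore, whenever $\pi_i\ne\pi^*_\Pi$, there must be some $(s,a)$ with $Q^{\pi_{i+1}}(s,a)>Q^{\pi_i}(s,a)$, which is the claimed strict improvement, and the process terminates only at $\pi^*_\Pi$.

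\textbf{Where I expect the difficulty.} Steps 1--2 are essentially the classical monotone-improvement theorem once the update is written as a comonotone reweighting. The delicate part is Step 3: making precise that a vanishing covariance of a \emph{strictly} increasing weight $g_s$ forces $A^{\pi_i}(s,\cdot)$ to be constant on the support (including the degenerate dual case $\lambda^*(s)=0$, where $g_s$ degenerates to the greedy indicator), handling the ``reachable states'' bookkeeping so that $\delta\equiv0$ propagates to the conclusion $Q^{\pi_i}=Q^*_\Pi$, and invoking uniqueness of the support-constrained Bellman fixed point so that $\pi_i=\pi^*_\Pi$ follows cleanly.
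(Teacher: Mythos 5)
Your proof is correct and shares the paper's overall skeleton --- establish $\delta(s):=\mathbb{E}_{a\sim\pi_{i+1}(\cdot|s)}[A^{\pi_i}(s,a)]\ge 0$ at every state, telescope to get $Q^{\pi_{i+1}}\ge Q^{\pi_i}$, and identify the stalling point with the support-constrained Bellman optimality equation --- but it proves the central inequality by a genuinely different mechanism. The paper never inspects the closed form at this stage: it notes that $\pi_{i+1}$ solves the per-state KL-constrained program for which $\pi_i$ is strictly feasible, so the objective cannot decrease; and for the characterization at convergence it argues that an interior maximizer of that program is, by convexity, a global maximizer over the enlarged feasible set $\{\pi:\mathrm{D_{KL}}(\pi\|\pi_i)<\infty\}=\Pi$, which yields the support-constrained optimality equation. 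You instead read everything off the explicit exponential reweighting: $\delta(s)$ is the covariance under $\pi_i(\cdot|s)$ of $A^{\pi_i}(s,\cdot)$ with a non-decreasing function of itself, hence non-negative by the Chebyshev/FKG inequality, with equality iff the advantage is constant (hence zero, since its $\pi_i$-mean vanishes) on $\operatorname{supp}(\beta(\cdot|s))$. Your route is more elementary --- no Lagrangian duality or interior-point reasoning --- and delivers the equality case in the same stroke, at the cost of a separate treatment of the degenerate $\lambda^*(s)=0$ case, which you correctly flag. One caveat you share with the paper: your Step 3 deduces $\delta\equiv 0$ from $Q^{\pi_{i+1}}\equiv Q^{\pi_i}$ via the Bellman identity, which only pins down $\delta$ at states having an incoming transition; the paper's write-up instead negates ``$\delta(s)>0$ at some $s$'' and never bridges that to ``strict $Q$-improvement at some $(s,a)$,'' so both arguments are silent about states unreachable from any $(s,a)$ --- a shared corner case, not a flaw specific to your proposal.
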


\subsection{Theory of \algo with an approximate $Q$}
\label{sec:str approximation}
In this section, we relax the assumption in \cref{sec:str tabular} that prior EAWBC works also make, by incorporating $Q$-function approximation and sampling error.
Specifically, we model the $Q$ function by a value function class $\mathcal{F} \subseteq\left(\mathcal{S} \times \mathcal{A} \rightarrow\left[0, V_{\max }\right]\right)$ and remove the assumption on $|\mathcal{D}|$.
With function approximation, 
we optimize one single objective by weighting each state with $d^{\pi_i}(s)$:
\begin{equation}
\label{eqn:str optimization}
\max_\pi ~ \alpha \underset{\substack{s \sim d^{\pi_i} \\ a \sim \pi}}{\E} [\hat{A}^{\pi_i} (s,a)] -\Vmax \underset{s \sim d^{\pi_i}}{\E} [\mathrm{D_{KL}}(\pi\|\pi_i)]
\end{equation}
For ease of presentation, we use a penalty form rather than a constraint form here. Note that all prior EAWBC works use the penalty form in practice.

The optimization problem above also has a closed form solution, which is irrelevant to $d^{\pi_i}(s)$
and is the same as \Eqref{eqn:unify close form} except for a fixed Lagrange multiplier:
\begin{equation}
\begin{aligned}
\label{eqn:str close form update apporx}
\textstyle
 \pi_{i+1}(a|s)&=\textstyle\frac{1}{Z(s)}\pi_i(a|s)\exp(\frac{\alpha \hat{A}^{\pi_i} (s,a)}{\Vmax})\\
 \normalsize{\text{where }} Z(s)&=\textstyle\sum_a \pi_i(a|s)\exp(\frac{\alpha \hat{A}^{\pi_i} (s,a)}{\Vmax})
\end{aligned}
\end{equation}
\Eqref{eqn:str close form update apporx} is still an equal-support update and the property in \cref{prop:tabular pi supp} still holds\footnote{The proof follows directly as that of \cref{prop:tabular pi supp}.}.
Now we make two standard assumptions in the offline setting~\citep{chen2019information}.
\begin{assumption}[Approximate Completeness]
\label{asm:approx complete}
For any $\pi_i$ in \algo, the following bound holds:
\begin{equation}
    \max_{f\in\Fcal}\min_{g\in\Fcal} \|g - \Tcal^{\pi_i} f\|_{2,\rho^\beta}^2 \leq \epsilon_{\text{complete}}
\end{equation}
\end{assumption}
Here $\|\cdot\|_{2,\rho^\beta}:=\sqrt{\mathbb{E}_{\rho^\beta}\left[(\cdot)^{2}\right]}$ is the $\rho^\beta$-weighted 2-norm
\footnote{We will use the notation $\|f\|_{2,\Dcal}$ for an empirical distribution of the dataset $\Dcal$,where $\|f\|_{2,\Dcal}=\sqrt{\frac{1}{|\Dcal|}\sum_{(s,a,s')\in \Dcal}f(s,a,s')^2}$.}. 
\begin{assumption}[Concentrability]
\label{asm:concentrability}
For the policy $\pi_i$ in \algo, there exists a constant $C$ such that,
\begin{equation}
    \forall t, s, a:  \frac{\rho^{\pi_i}_t(s,a)}{\rho^\beta(s,a)} \leq C
\end{equation}
\end{assumption}
By the equal-support property of STR: $\operatorname{supp}(\rho^{\pi_i}) = \operatorname{supp}(\rho^{\beta})$, this concentrability assumption is very likely to hold under any dataset distribution $\rho^\beta$.

\begin{theorem}[FQE error bound]
\label{thm:fqe}
Under \cref{asm:approx complete} and \cref{asm:concentrability}, with probability at least $1-\delta$, after $K$ iterations of Fitted Q Evaluation~$\rm{(FQE)}$, which initializes $f_0 \in \Fcal$ arbitrarily, and iterates $K$ times:
\begin{equation*}
    \textstyle f_{k} \leftarrow \argmin_{f \in \mathcal{F}} \|f(s, a)-r-\gamma f_{k-1}(s', \pi(s'))\|_{2,\Dcal}
\end{equation*} 
the following bound holds:
\begin{small}
\begin{align}
\label{eqn:fqe}
\|&Q^\pi-f_K\|_{1,\rho^\pi} 
\leq
\frac{1-\gamma^K}{1-\gamma} \sqrt{C\epsilon_{gb}}
+\gamma^{K}\Vmax\\
& \normalsize{\text{where }} \epsilon_{gb} := \frac{{44 \Vmax}^{2} \log (|\mathcal{F}|K / \delta)}{|\Dcal|}+20 \epsilon_{\text {complete}}\nonumber
\end{align}
\end{small}
\end{theorem}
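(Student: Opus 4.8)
\noindent
The plan is to follow the now-standard template for finite-sample analysis of Fitted Q Evaluation (FQE) in the offline setting, adapting the argument of \citet{chen2019information} to the fixed evaluation policy $\pi$. First I would set up the one-step decomposition. For each iteration $k$, let $g_k^* := \Tcal^\pi f_{k-1}$ be the (generally non-representable) Bellman backup, and decompose the error $\|f_k - Q^\pi\|$ into (i) the \emph{estimation/statistical error} $\|f_k - g_k^*\|_{2,\rho^\beta}$ incurred by minimizing the empirical squared loss over the finite dataset instead of the population loss, and (ii) the \emph{propagated error} from the previous iterate, $\gamma \|f_{k-1} - Q^\pi\|$, picked up because $\Tcal^\pi$ is a $\gamma$-contraction and $Q^\pi$ is its fixed point. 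The approximation-theoretic piece — how far $g_k^*$ is from $\Fcal$ — is controlled directly by \cref{asm:approx complete} with the bound $\epsilon_{\text{complete}}$.

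\noindent
Second, I would prove the key statistical lemma: with probability at least $1-\delta$ (after a union bound over the $K$ iterations and over the finite class $\Fcal$, which is where the $\log(|\Fcal|K/\delta)$ factor and the numerical constant $44\Vmax^2$ enter), the empirical risk minimizer $f_k$ satisfies
\begin{equation*}
\|f_k - \Tcal^\pi f_{k-1}\|_{2,\rho^\beta}^2 \;\le\; \tfrac{44\Vmax^2\log(|\Fcal|K/\delta)}{|\Dcal|} + 20\,\epsilon_{\text{complete}} \;=\; \epsilon_{gb}.
\end{equation*}
This is the usual ``slow rate'' generalization bound for least-squares regression with bounded targets in $[0,\Vmax]$, combined with \cref{asm:approx complete} to absorb the irreducible approximation bias; the constants $44$ and $20$ come from the specific concentration inequality (Bernstein-type) and the cross-term handling. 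Third, I would unroll the recursion: writing $e_k := f_k - Q^\pi$ and noting $Q^\pi = \Tcal^\pi Q^\pi$, one gets $|e_k| \le |f_k - \Tcal^\pi f_{k-1}| + \gamma P^\pi |e_{k-1}|$ pointwise (with $P^\pi$ the transition kernel under $\pi$), so telescoping over $k=1,\dots,K$ gives $|e_K| \le \sum_{j=0}^{K-1}\gamma^j (P^\pi)^j |f_{K-j}-\Tcal^\pi f_{K-j-1}| + \gamma^K |e_0|$. Taking the $\rho^\pi$-weighted $1$-norm, bounding $\|e_0\|_{1,\rho^\pi}\le \Vmax$, and pushing each $(P^\pi)^j$ term back to a $\rho^\beta$-weighted norm is exactly where \cref{asm:concentrability} is used: each change of measure costs a factor $\sqrt{C}$ (after Cauchy–Schwarz to pass from the $1$-norm to the $2$-norm), turning $\|f_{K-j}-\Tcal^\pi f_{K-j-1}\|_{1,\rho^{\pi_j}}$ into $\sqrt{C\,\epsilon_{gb}}$. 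Summing the geometric series $\sum_{j=0}^{K-1}\gamma^j = \frac{1-\gamma^K}{1-\gamma}$ yields the claimed bound.

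\noindent
The main obstacle I anticipate is the statistical lemma in the second step, specifically getting the concentration to hold \emph{uniformly over all $f_{k-1}$ that FQE could possibly produce}. Because $f_k$ is chosen data-dependently, one cannot simply apply a single regression bound; the clean fix is a union bound over the finite hypothesis class $\Fcal$ (so that the bound holds simultaneously for \emph{every} candidate previous iterate and every candidate minimizer), which is exactly why $|\Fcal|$ appears inside the log and why finiteness of $\Fcal$ is assumed. A secondary subtlety is handling the cross-term between the approximation error (from \cref{asm:approx complete}) and the statistical fluctuation — this is what inflates the coefficient on $\epsilon_{\text{complete}}$ from $1$ to $20$ and requires an AM–GM / Young's inequality split rather than a naive triangle inequality. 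The contraction-and-concentrability unrolling in the third step is routine once the lemma is in hand, and the equal-support property of \algo (\cref{prop:tabular pi supp}) is what makes \cref{asm:concentrability} benign, so I would only remark on it rather than belabor it.
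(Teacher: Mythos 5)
Your proposal is correct and follows essentially the same route as the paper: a least-squares generalization bound with a union bound over $\Fcal$ and the $K$ iterations (the paper's \cref{applem:generalization bound}), followed by the one-step decomposition into statistical error and a $\gamma$-contracted propagated term, unrolled $K$ times with \cref{asm:concentrability} paying a $\sqrt{C}$ change-of-measure at each step, and a final Jensen step from the $2$-norm to the $1$-norm. The only cosmetic difference is that the paper tracks the recursion through the time-indexed occupancies $d^\pi_t \times \pi$ and averages over $t$ at the end, whereas you unroll pointwise via $(P^\pi)^j$ first; these are equivalent.
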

The first term in \Eqref{eqn:fqe} is the sampling and approximation error term, which goes to $0$ 
with more data and a smaller inherent Bellman error
$\epsilon_{\text {complete}}$. The second term is the optimization error term that goes to $0$ with more iterations.

Then we formally present the ``trust region" property of STR.
Although we do not constrain $\mathrm{D_{KL}}(\pi\|\pi_i)$ explicitly, $\pi_{i+1}$ and $\pi_i$ in \algo are close to each other.
\begin{proposition}
[Trust Region]
\label{prop:trust region}
For any $\pi_{i+1},\pi_{i}$ satisfying \Eqref{eqn:str close form update apporx}, the following policy difference bound holds:
\begin{align*}
\mathrm{D_{TV}}(\pi_{i+1} \| \pi_i)&[s] \leq \alpha , \forall s\\
\mathrm{D_{KL}}(\pi_{i} \| \pi_{i+1})&[s] \leq \alpha , \forall s\\
\mathrm{D_{KL}}(\pi_{i+1} \| \pi_{i})[s] \leq ~&\alpha(e^\alpha-e^{-\alpha})/{2} , \forall s
\end{align*}
\end{proposition}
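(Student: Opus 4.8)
The plan is to reduce all three inequalities to elementary estimates on the pointwise likelihood ratio $r(a):=\pi_{i+1}(a|s)/\pi_i(a|s)$. Fix a state $s$ and write $w(a):=\frac{\alpha}{\Vmax}\bigl(\hat A^{\pi_i}(s,a)-\E_{a\sim\pi_i}[\hat A^{\pi_i}(s,a)]\bigr)$; by \Eqref{eqn:str close form update apporx} the state-dependent shift cancels between the numerator and the normalizer, so $\pi_{i+1}(a|s)=\pi_i(a|s)e^{w(a)}/Z(s)$ with $Z(s)=\E_{a\sim\pi_i}[e^{w(a)}]$ and $r(a)=e^{w(a)}/Z(s)$. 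First I would record the structural facts that drive everything, all consequences of $\hat Q^{\pi_i}(s,\cdot)\in[0,\Vmax]$: the range of $\hat A^{\pi_i}(s,\cdot)$ over actions equals that of $\hat Q^{\pi_i}(s,\cdot)$ and is therefore $\le\Vmax$, so $w(\cdot)$ has range $\le\alpha$; and since $\E_{a\sim\pi_i}[w]=0$ by construction, it follows that $w(a)\in[-\alpha,\alpha]$ for every $a$, and $1\le Z(s)\le e^{\alpha}$ (lower bound by Jensen, upper bound from $w\le\alpha$).

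Granting these, $\mathrm{D_{KL}}(\pi_i\|\pi_{i+1})[s]\le\alpha$ is immediate: expanding, $\mathrm{D_{KL}}(\pi_i\|\pi_{i+1})[s]=\sum_a\pi_i(a|s)\bigl(\ln Z(s)-w(a)\bigr)=\ln Z(s)\le\alpha$, using $\E_{a\sim\pi_i}[w]=0$ and $w\le\alpha$. For the total-variation bound I would use an interpolation argument: define $\pi^{(t)}(a|s)\propto\pi_i(a|s)e^{t w(a)}$ for $t\in[0,1]$, so that $\pi^{(0)}=\pi_i$ and $\pi^{(1)}=\pi_{i+1}$, and differentiate to obtain $\partial_t\pi^{(t)}(a|s)=\pi^{(t)}(a|s)\bigl(w(a)-\E_{\pi^{(t)}}[w]\bigr)$. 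Hence $\|\partial_t\pi^{(t)}(\cdot|s)\|_1=\E_{\pi^{(t)}}\bigl|w-\E_{\pi^{(t)}}[w]\bigr|$, which is at most the range of $w(\cdot)$, hence $\le\alpha$, uniformly in $t$; integrating over $t\in[0,1]$ yields $\|\pi_{i+1}(\cdot|s)-\pi_i(\cdot|s)\|_1\le\alpha$, i.e.\ $\mathrm{D_{TV}}(\pi_{i+1}\|\pi_i)[s]\le\alpha/2\le\alpha$. (The even cruder estimate $|w|\le\alpha$ alone, via $|w(a)-\E_{\pi^{(t)}}[w]|\le2\alpha$, already gives exactly $\alpha$.)

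The bound on $\mathrm{D_{KL}}(\pi_{i+1}\|\pi_i)[s]$ is the step I expect to be the main obstacle, because the obvious crude estimates overshoot the claimed $\frac{\alpha}{2}(e^{\alpha}-e^{-\alpha})=\alpha\sinh\alpha$. The route I would take: first pass to the $\chi^2$-divergence using $\ln x\le x-1$, giving $\mathrm{D_{KL}}(\pi_{i+1}\|\pi_i)[s]\le\chi^2(\pi_{i+1}\|\pi_i)[s]=\operatorname{Var}_{a\sim\pi_i}\!\bigl(r(a)\bigr)$. Since $\E_{a\sim\pi_i}[r]=1$ and $r(a)\in[e^{-\alpha}/Z(s),\,e^{\alpha}/Z(s)]$, the Bhatia--Davis inequality $\operatorname{Var}(X)\le(\max X-\E X)(\E X-\min X)$ bounds this by $\bigl(e^{\alpha}/Z(s)-1\bigr)\bigl(1-e^{-\alpha}/Z(s)\bigr)$; as a function of $Z(s)\ge1$ this is decreasing, so it is maximized at $Z(s)=1$, where it equals $(e^{\alpha}-1)(1-e^{-\alpha})=2(\cosh\alpha-1)$. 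Finally the power-series comparison $2(\cosh\alpha-1)=\sum_{k\ge1}\frac{2\alpha^{2k}}{(2k)!}\le\sum_{k\ge1}\frac{2k\,\alpha^{2k}}{(2k)!}=\alpha\sinh\alpha$ closes the argument. The genuinely delicate point is that the naive Popoviciu bound $\operatorname{Var}\le(\mathrm{range})^2/4$ would only give $\sinh^2\alpha$, which exceeds $\alpha\sinh\alpha$; one must instead exploit the normalization $\E_{a\sim\pi_i}[r]=1$ of the likelihood ratio, which is exactly the information Bhatia--Davis uses.
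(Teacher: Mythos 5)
Your proof is correct, and for two of the three bounds it takes a genuinely different route from ours. The reverse-KL bound is the same argument: $\mathrm{D_{KL}}(\pi_i\|\pi_{i+1})[s]=\log Z(s)\le\alpha$, using $\E_{a\sim\pi_i}[\hat A^{\pi_i}(s,a)]=0$ and $Z(s)\le e^{\alpha}$. For the total-variation bound we instead bound the likelihood ratio pointwise, $e^{-2\alpha}\le \pi_{i+1}/\pi_i\le e^{\alpha}$ and hence $\bigl|\pi_{i+1}/\pi_i-1\bigr|\le 1-e^{-2\alpha}\le 2\alpha$, which is exactly where our (cosmetic) restriction $\alpha\le 0.48$ enters; your exponential-tilt interpolation $\pi^{(t)}\propto\pi_i e^{tw}$ with $\|\partial_t\pi^{(t)}\|_1=\E_{\pi^{(t)}}\bigl|w-\E_{\pi^{(t)}}[w]\bigr|\le \max_a w-\min_a w\le\alpha$ needs no smallness assumption on $\alpha$ and in fact yields the stronger conclusion $\mathrm{D_{TV}}\le\alpha/2$. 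For the forward KL we bound $\E_{\pi_{i+1}}[w]-\log Z(s)\le\E_{\pi_i}[we^{w}]$ and solve the extremal problem $\max\sum_i y_ix_ie^{x_i}$ over mean-zero $x\in[-\alpha,\alpha]^N$, whose optimizer puts half the mass at each endpoint and gives $\alpha(e^{\alpha}-e^{-\alpha})/2$; your route through $\mathrm{D_{KL}}\le\chi^2=\operatorname{Var}_{\pi_i}(r)$ followed by Bhatia--Davis with the normalization $\E_{\pi_i}[r]=1$ gives the sharper intermediate bound $2(\cosh\alpha-1)$, and your termwise series comparison $2(\cosh\alpha-1)\le\alpha\sinh\alpha$ correctly recovers the stated constant, again without any restriction on $\alpha$. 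Your observation that Popoviciu's inequality is insufficient and that one must exploit $\E_{\pi_i}[r]=1$ is exactly the right delicate point. What your approach buys is uniformity in $\alpha$ and slightly tighter constants for the TV and forward-KL bounds; what ours buys is that the elementary ratio estimate $|r-1|\le 2\alpha$ is reused verbatim in the advantage-error bound (\cref{applem:adv error}) feeding \cref{thm:safe policy improvement}, so the constants there are already matched to it.
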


Finally,
we show that with a moderate $\alpha$, \algo is guaranteed to be safe and avoid performance collapse.
\begin{theorem}[Safe policy improvement for each step]
\label{thm:safe policy improvement}
Under \cref{asm:approx complete} and \cref{asm:concentrability}, for $\pi_{i+1},\pi_{i}$ satisfying \Eqref{eqn:str close form update apporx}, with $\epsilon^{\pi_{i+1}}:=\max _{s}|\mathbb{E}_{a \sim \pi_{i+1}}[A^{\pi_i}(s,a)]|$, the  following performance difference bound holds:
\begin{equation*}
\begin{aligned}
&\textstyle \eta(\pi_{i+1})-\eta(\pi_i) \geq \frac{\Vmax}{(1-\gamma)\alpha} \E_{s\sim d^{\pi_i}}[\mathrm{D_{KL}}(\pi_{i+1} \| \pi_i)]\\
&\quad \textstyle - \frac{2\alpha}{1-\gamma} \left(\frac{\gamma\epsilon^{\pi_{i+1}}}{1-\gamma} +\frac{1-\gamma^K}{1-\gamma}  \sqrt{C\epsilon_{gb}} +\gamma^{K}\Vmax\right)
\end{aligned}
\end{equation*}
\end{theorem}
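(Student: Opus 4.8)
The plan is to combine the classical performance-difference identity with the trust-region bound of \cref{prop:trust region} and the FQE error bound of \cref{thm:fqe}. First I would invoke the performance difference lemma in the form $\eta(\pi_{i+1}) - \eta(\pi_i) = \frac{1}{1-\gamma}\E_{s\sim d^{\pi_{i+1}}}\E_{a\sim\pi_{i+1}}[A^{\pi_i}(s,a)]$, where $A^{\pi_i}$ is the true advantage. The issue is twofold: (i) the expectation is over $d^{\pi_{i+1}}$, not the $d^{\pi_i}$ we can actually control, and (ii) we only have an approximate advantage $\hat{A}^{\pi_i}$, not the true one. I would handle (i) by a standard distribution-shift argument, writing $d^{\pi_{i+1}} - d^{\pi_i}$ in terms of $\mathrm{D_{TV}}(\pi_{i+1}\|\pi_i)$ and bounding the resulting slack by something proportional to $\alpha \cdot \sup_s|\E_{a\sim\pi_{i+1}}A^{\pi_i}(s,a)| = \alpha\,\epsilon^{\pi_{i+1}}$ times the usual $\gamma/(1-\gamma)$ horizon factor — this produces the $\frac{2\alpha}{1-\gamma}\cdot\frac{\gamma\epsilon^{\pi_{i+1}}}{1-\gamma}$ piece.

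Next I would connect the true advantage term $\E_{s\sim d^{\pi_i}}\E_{a\sim\pi_{i+1}}[A^{\pi_i}(s,a)]$ to the quantity that the STR update actually optimizes. The key algebraic fact is that the closed-form update \Eqref{eqn:str close form update apporx} is exactly the maximizer of the penalized objective \Eqref{eqn:str optimization} with the estimated advantage $\hat{A}^{\pi_i}$; evaluating the optimality/first-order condition at $\pi_{i+1}$ versus at $\pi_i$ (which is feasible and gives objective value $0$) yields $\alpha\,\E_{s\sim d^{\pi_i}}\E_{a\sim\pi_{i+1}}[\hat{A}^{\pi_i}(s,a)] \geq \Vmax\,\E_{s\sim d^{\pi_i}}[\mathrm{D_{KL}}(\pi_{i+1}\|\pi_i)]$, i.e.\ the improvement in estimated advantage dominates the (scaled) KL. Rearranging gives the leading positive term $\frac{\Vmax}{(1-\gamma)\alpha}\E_{s\sim d^{\pi_i}}[\mathrm{D_{KL}}(\pi_{i+1}\|\pi_i)]$ once the $\frac{1}{1-\gamma}$ from the performance difference lemma is included.

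Then I would pay for replacing $\hat{A}^{\pi_i}$ by the true $A^{\pi_i}$. Since $A^{\pi_i}(s,a) = Q^{\pi_i}(s,a) - \E_{a'\sim\pi_i}Q^{\pi_i}(s,a')$ and likewise for the estimate built from $f_K$, the discrepancy is controlled by $\|Q^{\pi_i} - f_K\|$ on the relevant distribution; I would push the expectation over $a\sim\pi_{i+1}$ onto $\rho^{\pi_{i+1}}$ (or bound $\pi_{i+1}$ by $\pi_i$ using the trust-region closeness, picking up the $e^{\alpha}$-type factors absorbed into constants) so that \cref{thm:fqe} applies, giving an error of order $\frac{1-\gamma^K}{1-\gamma}\sqrt{C\epsilon_{gb}} + \gamma^K\Vmax$. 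Multiplying the per-step advantage error by the horizon $\frac{1}{1-\gamma}$ and by the $2\alpha$ that arises from the $\mathrm{D_{TV}}\le\alpha$ trust-region bound yields precisely the negative $\frac{2\alpha}{1-\gamma}\big(\frac{1-\gamma^K}{1-\gamma}\sqrt{C\epsilon_{gb}} + \gamma^K\Vmax\big)$ term. Collecting the positive KL term and the two error terms gives the claimed inequality.

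The main obstacle I anticipate is the bookkeeping in step two and three — specifically, making the distribution changes ($d^{\pi_i}$ vs.\ $d^{\pi_{i+1}}$, and $\pi_i$ vs.\ $\pi_{i+1}$ inside the advantage expectation) line up so that every error contribution is expressible through the already-proven \cref{prop:trust region} and \cref{thm:fqe}, and tracking the exact constants ($2\alpha$, the $\gamma$ factors, the horizon powers) so they match the statement rather than merely being order-correct. The conceptual core — that the closed-form update is the exact optimizer of \Eqref{eqn:str optimization} so the estimated-advantage improvement exceeds the scaled KL — is straightforward; the delicate part is ensuring the true-advantage version only loses the FQE error, no more.
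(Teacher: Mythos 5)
Your proposal is correct and follows essentially the same route as the paper: the CPO-style lower bound $\eta(\pi_{i+1})-\eta(\pi_i)\geq\frac{1}{1-\gamma}\E_{s\sim d^{\pi_i},a\sim\pi_{i+1}}[A^{\pi_i}]-\frac{2\gamma\epsilon^{\pi_{i+1}}}{(1-\gamma)^2}\mathrm{D_{TV}}$ combined with the TV bound of \cref{prop:trust region}, the substitution of $\hat{A}^{\pi_i}$ for $A^{\pi_i}$ at a cost of $2\alpha$ times the FQE error, and the optimality of $\pi_{i+1}$ for \Eqref{eqn:str optimization} against the feasible point $\pi_i$ to get $\alpha\,\E[\hat{A}^{\pi_i}]\geq\Vmax\,\E[\mathrm{D_{KL}}(\pi_{i+1}\|\pi_i)]$. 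The only detail worth pinning down is the advantage-error step, which the paper isolates as a separate lemma: since $A^{\pi_i}$ and $\hat{A}^{\pi_i}$ both have zero mean under $\pi_i$, the discrepancy collapses to $\E_{\rho^{\pi_i}}[(\tfrac{\pi_{i+1}}{\pi_i}-1)(Q^{\pi_i}-\hat{Q}^{\pi_i})]$ with $|\tfrac{\pi_{i+1}}{\pi_i}-1|\leq 2\alpha$ pointwise, so \cref{thm:fqe} applies directly under $\rho^{\pi_i}$ (rather than $\rho^{\pi_{i+1}}$) and yields the $2\alpha$ factor you anticipated.
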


\subsection{The Practical Implementation of \algo}
\label{sec:str implementation}

\begin{algorithm}[t]
\caption{\algo(Practical)}
\label{alg:practical}
\begin{algorithmic}
\STATE {\bfseries Input:} Offline dataset $\Dcal$, constant $\lambda>0$.
\STATE Initialize  behavior policy $\beta_\omega$, policy network $\pi_\phi$, $Q$-network $Q_\theta$, and target $Q$-network $Q_{\theta'}$
\STATE // {\bfseries Behavior Policy Pre-training}
\FOR{each gradient step}
\STATE Update $\omega$ by maximizing $J_{\beta}(\omega)$ in \Eqref{eqn:mle}
\ENDFOR
\STATE // {\bfseries Policy Training}
\STATE Initialize policy $\pi_\phi$ with $\beta_\omega$
\FOR{each gradient step}
\STATE Update $\theta$ by minimizing $L_Q(\theta)$ in \Eqref{eqn:Q loss}
\STATE Update $\phi$ by maximizing $J_{\pi}(\phi)$ in \Eqref{eqn:pi objective}
\STATE Update target network: $\theta' \leftarrow (1-\tau){\theta'} + \tau\theta$
\ENDFOR
\end{algorithmic}
\end{algorithm}

We design the practical algorithm to be as simple as possible to avoid some complex modules confusing our algorithm’s impact on the final performance.

\mypar{Policy Improvement.} Instead of training each iteration to convergence in both the evaluation and improvement stages, practical offline RL algorithms usually take one gradient step.
Therefore, 
we maximize the following objective for policy improvement:
\begin{equation}
\label{eqn:pi objective}
\textstyle
  J_\pi({\phi}) = \underset{(s,a) \sim \Dcal}{\E}  \left[\frac{\bar{\pi}_{\phi}(a|s)}{\beta(a|s)}\exp(\frac{A_\theta (s,a)}{\lambda})\log(\pi_\phi(a|s))\right]
\end{equation}
where $\bar{\pi}_\phi$ means the detach of gradient,
and
\begin{equation}
\textstyle
    A_\theta(s,a):=Q_\theta(s,a)-\E_{\hat{a}\sim \pi_\phi} [Q_\theta(s,\hat{a})]
\label{eqn:compute A}
\end{equation}
We represent the policy with a Gaussian distribution.
In \Eqref{eqn:compute A}, we find that replacing the expectation with 
the policy's mean already obtains good performance. Also, it simplifies the training process without learning a $V$-function. 

Besides, as all prior EAWBC works, we omit the partition function $Z(s)$ in \Eqref{eqn:pi objective},
because it only affects the relative weight of different states in the training objective, not different actions.
It is theoretically unimportant and empirically hard to estimate.
We give a detailed and specific explanation for STR in \cref{app:omit normalize}.
\begin{table*}[t]\centering
\caption{Averaged normalized scores on MuJoCo locomotion and AntMaze tasks over five seeds.
}\label{tab:baselines_mujoco}
\footnotesize
\begin{adjustbox}{max width=460pt}
\begin{tabular}{l||rrrrrrrrr|r}
\hline
Dataset~(v2) & BC    & OneStep & TD3+BC & CQL   & IQL   & AWAC  & CRR  &ABM &MPO  & STR \\ \hline
halfcheetah-med & 42.0 & 50.4 & 48.3 & 47.0 & 47.4 & 47.9 & 47.1 &\textbf{50.9}&39.7&  \textbf{51.8$\pm$0.3} \\
hopper-med & 56.2 &  {87.5} & 59.3 & 53.0 & 66.2 & 59.8 & 38.1&39.4 &0.7& \textbf{101.3$\pm$0.4} \\
walker2d-med & 71.0 &  \textbf{84.8} & 83.7 & 73.3 & 78.3 & 83.1 & 59.7 &17.2&-0.2& \textbf{85.9$\pm$1.1} \\
halfcheetah-med-replay & 36.4 & 42.7 & 44.6 & 45.5 & 44.2 & 44.8 & 44.4&43.4 &\textbf{51.6}&  \textbf{47.5$\pm$0.2} \\
hopper-med-replay & 21.8 &  {98.5} & 60.9 & 88.7 & 94.7 & 69.8 & 25.5 &74.7&48.2&  \textbf{100.0$\pm$1.2} \\
walker2d-med-replay & 24.9 & 61.7 & 81.8 & 81.8 & 73.8 & 78.1 & 27.0 &\textbf{86.9}&3.8&  \textbf{85.7$\pm$2.2} \\
halfcheetah-med-exp & 59.6 & 75.1 & 90.7 & 75.6 & 86.7 & 64.9 & 85.2&70.2 &13.2&  \textbf{94.9$\pm$1.6} \\
hopper-med-exp & 51.7 &  \textbf{108.6} & 98.0 &  {105.6} & 91.5 & 100.1 & 53.0 &1.4& 0.7& \textbf{111.9$\pm$0.6} \\
walker2d-med-exp & 101.2 &  \textbf{111.3} &  \textbf{110.1} &  {107.9} &  \textbf{109.6} & \textbf{110.0} & 91.3 &0.1& -0.3& \textbf{110.2$\pm$0.1} \\
halfcheetah-exp &  {92.9} & 88.2 &  \textbf{96.7} &  \textbf{96.3} &  \textbf{95.0} & 81.7 & 93.5 &17.6&-3.8&  \textbf{95.2$\pm$0.3} \\
hopper-exp &  {110.9} & 106.9 & 107.8 & 96.5 &  {109.4} & 109.5 & 108.7&2.4 &0.7&  \textbf{111.2$\pm$0.3} \\
walker2d-exp & 107.7 &  \textbf{110.7} &  \textbf{110.2} & 108.5 &  \textbf{109.9} & \textbf{110.1} & 108.9 &64.9&-0.3&  \textbf{110.1$\pm$0.1} \\
halfcheetah-rand & 2.6 & 2.3 & 11.0 & 17.5 & 13.1 & 6.1 & 13.6&2.3 &\textbf{27.4}&  {20.6$\pm$1.1} \\
hopper-rand & 4.1 & 5.6 & 8.5 & 7.9 & 7.9 & 9.2 & 16.1 &15.2&\textbf{31.7}&  \textbf{31.3$\pm$0.3} \\
walker2d-rand & 1.2 & 6.9 & 1.6 & 5.1 & 5.4 & 0.2 & 4.9 &2.6&1.6& 4.7$\pm$3.8 \\
\hline
locomotion total & 784.2 & 1041.2 & 1013.2 & 1010.2 & 1033.1 & 975.6 & 817&489.1   &214.6&  \textbf{1162.2} \\ 
\hline \hline
antmaze-umaze & 66.8 & 54.0 & 73.0 & 82.6 & {89.6} & 80.0 & 43.8&87.0 &0.0& \textbf{93.6$\pm$4.0} \\
antmaze-umaze-diverse & 56.8 & 57.8 & 47.0 & 10.2 & 65.6 & 52.0 & 42.8&25.4  &0.0&   \textbf{77.4$\pm$7.2} \\
antmaze-med-play & 0.0 & 0.0 & 0.0 & 59.0 &   {76.4} & 0.0 & 0.4&0.0  &0.0&   \textbf{82.6$\pm$5.4} \\
antmaze-med-diverse & 0.0 & 0.6 & 0.2 & 46.6 &   {72.8} & 0.2 & 0.5&0.2  &0.0&   \textbf{87.0$\pm$4.2} \\
antmaze-large-play & 0.0 & 0.0 & 0.0 & 16.4 & \textbf{42.0} & 0.0 & 0.0&0.0  &0.0&   \textbf{42.8$\pm$8.7} \\
antmaze-large-diverse & 0.0 & 0.2 & 0.0 & 3.2 &   \textbf{46.0} & 0.0 & 0.0&0.0  &0.0&   \textbf{46.8$\pm$7.6} \\\hline
antmaze total & 123.6 & 112.6 & 120.2 & 218   & 392.4 & 132.2 & 87.6   &112.6& 0.0&   \textbf{430.2} \\\hline
\end{tabular}%
\end{adjustbox}
\end{table*}

\mypar{Density Estimator.}
We learn a Gaussian density estimator $\beta_\omega$ for the behavior policy by maximizing
\begin{equation}
\label{eqn:mle}
J_\beta(\omega)=\mathbb{E}_{s,a\sim \mathcal{D}}\log \beta_{\omega}(a|s),
\end{equation}
where $\omega$ is the parameter of the estimated behavior policy. 

\mypar{Policy Initialization.}
Theoretically, \algo needs to initialize the actor with the behavior policy. 
In our implementation, we set the network structure of the actor and $\beta_\omega$ to be the same, and initialize the actor with $\beta_\omega$ directly. 

\mypar{Importance Sampling.}
To reduce the high variance of importance sampling~\citep{precup2001off}, \algo adopts Self-Normalized Importance Sampling~(SNIS) in practice,
which normalizes the IS ratio across the batch.

\mypar{Overall Algorithm.}
Putting everything together, we present a practical version of \algo in \cref{alg:practical}.

\section{Experiments}
\label{sec:experiments}
We test the effectiveness of STR~(\cref{alg:practical}) in terms of performance, safe policy improvement, and hyperparameter robustness using the D4RL benchmark~\citep{fu2020d4rl}. More experimental details are provided in \cref{app:implementation}.

\subsection{Comparisons on D4RL Benchmarks}
We evaluate STR on D4RL in comparison to prior methods. 

\mypar{Tasks.}
We conduct experiments in Gym-MuJoCo locomotion domains and more challenging AntMaze domains.
The latter consist of sparse-reward tasks and require “stitching” fragments of suboptimal trajectories traveling undirectedly to find a path from the start to the goal of the maze.

\mypar{Baselines.}
Our offline RL baselines include Behavior Cloning~(BC), OneStep RL~\citep{brandfonbrener2021offline}, TD3+BC~\citep{fujimoto2021minimalist}, CQL~\citep{kumar2020conservative}, IQL~\citep{kostrikov2022offline}, AWAC~\citep{nair2020awac}, CRR~\citep{wang2020critic}, ABM~\citep{siegel2020keep}, and MPO~\citep{abdolmaleki2018maximum}. 

\mypar{Comparison with Baselines.}
Results
are shown in \cref{tab:baselines_mujoco}. For learning curves, please refer to \cref{app:learning curves}. We find that \algo substantially outperforms state-of-the-art methods. It is worth noting that \algo outperforms prior EAWBC methods by a large margin, which further demonstrates the advantages of the supported trust region update proposed by \algo. Moreover, since ABM and MPO imitate the actions sampled from a learned policy rather than the dataset, they suffer from extrapolation error and fail in most tasks that have a narrow data coverage.

\mypar{Runtime.} We test the runtime of \algo on halfcheetah-medium-replay on a GeForce RTX 3090. The results of \algo and other baselines are shown in \Cref{fig:ablation_runtime}~(Right). 
The runtime of STR
is comparable to other baselines. Note that it only takes two minutes for the pre-training part.

\subsection{Experimental Verification of the Theories}
\label{sec:Experimental Verification of the Theories}

\begin{figure*}[t]
	\centering
	\includegraphics[width=0.99\textwidth]{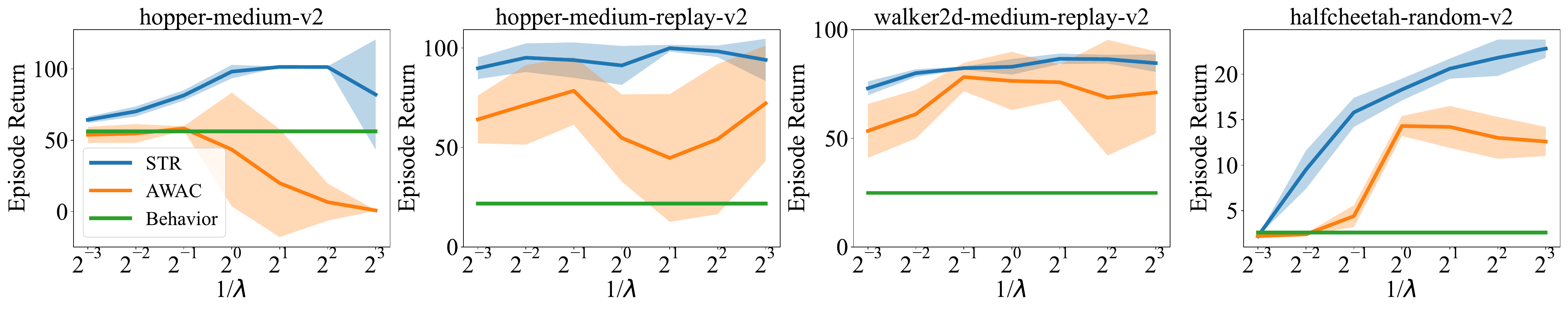}

\caption{\small{Performance of STR~(support constraint) and AWAC~(density constraint) with different constraint strength.
As the abscissa $1/\lambda$ increases, the constraint becomes looser. 
STR is more robust to $\lambda$.
The plots show the average and standard deviation over 5 seeds.
}}
\label{fig:temp}
\vspace{1mm}
\end{figure*}

\begin{figure*}[t]
	\centering
	\includegraphics[width=0.98\textwidth]{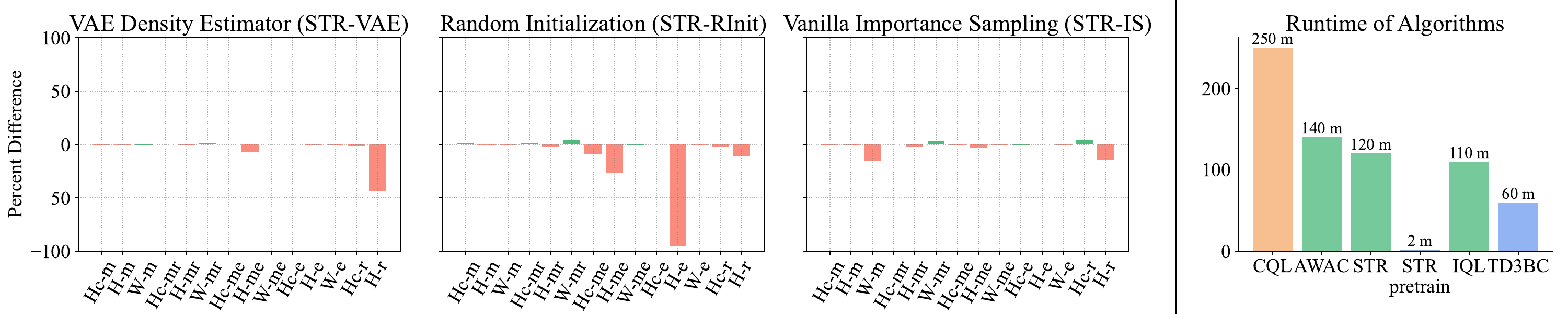}

\caption{\small{(Left)~Percent difference of the performance of an ablation of STR, compared with the original algorithm. Hc = HalfCheetah, H = Hopper, W = Walker2d, m = medium, mr = medium-replay, me = medium-expert, e=expert, r=random.
(Right)~Runtime of various offline RL algorithms for halfcheetah-medium-replay on a GeForce RTX 3090.}}
\label{fig:ablation_runtime}
\vspace{-2mm}
\end{figure*}

In this section, we demonstrate the safe policy improvement property of STR from experiments.
For this, each policy iteration is trained to convergence in both evaluation and improvement parts.
Also, we adopt a relatively large temperature $\lambda$ in \Eqref{eqn:pi objective} that corresponds to a strong constraint.

The results are shown in \cref{fig:convergence}, which well support the theory.
The performance of AWR and AWAC is limited by the implicit density constraint toward the sub-optimal behavior policy~(\cref{lem:kl constrained eta}). 
By contrast, with a less restrictive support constraint, \algo is able to deviate more from the behavior policy to achieve better performance.
In halfcheetah-random, due to the highly sub-optimal behavior policy, AWAC and AWR can hardly make a visible improvement, while STR 
obtains better performance.
Furthermore,
\algo has an approximately monotonic performance improvement process
in all domains,
which demonstrates the safe policy improvement property of STR~(\cref{thm:safe policy improvement}). 
In contrast, AWAC has severe performance drops at some iterations (most stark in walker2d-medium-replay and hopper-medium-expert).

\subsection{Empirical Study on the Constraint Strength}
\label{sec:temp}

We investigate the effect of constraint strength 
on STR and AWAC~(best performance among prior EAWBC methods in \cref{tab:baselines_mujoco}).
For this, we vary the temperature $\lambda$ in \Eqref{eqn:pi objective} which is positively correlated with the constraint strength.
The results are presented in \cref{fig:temp}. Note as the abscissa $1/\lambda$ increases, the constraint becomes looser.

As expected, when the constraint is relatively strong, due to the superiority of the support constraint, the performance of STR is better than that of AWAC, and both algorithms are relatively stable~(with small standard deviations). As the constraint gets slightly weaker, the performance of both algorithms improves. 
However, when the constraint becomes too loose, in hopper-medium both STR and AWAC suffer from overestimation and are unstable~(with large standard deviations), while in hopper-medium-replay and walker2d-medium-replay, STR is more stable than AWAC.
In general, STR not only maintains good performance over a wide range of $\lambda$, but is also better than AWAC for every $\lambda$.

\subsection{Ablation Study}
\label{sec:ablation}

We perform an ablation study over the components in our method, including the behavior density estimator, policy initialization, and importance sampling techniques. The results are shown in \cref{fig:ablation_runtime}.

\mypar{Behavior Density Estimator.}
Following previous works~\citep{fujimoto2019off,wu2022supported}, we consider to replace the Gaussian density estimator $\beta_\omega$ with conditional variational auto-encoder~\citep{sohn2015learning}. We refer to this variant as STR-VAE. 
Overall, the performance of STR-VAE and STR is almost the same, except that STR-VAE obtains worse results on hopper-random.

\mypar{Policy Initialization.}
Consistent with the theory of STR, we initialize the policy with the pre-trained $\beta_\omega$. 
Here we evaluate an STR variant with random policy initialization, termed STR-RInit.
STR-RInit achieves similar performance in most tasks except hopper-expert and hopper-medium-expert. 
The reason is that,
in practice,
the IS operation in STR will implicitly pull $\pi_\phi$ back into $\beta$'s support, no matter how $\pi_\phi$ is initialized. Therefore, an inaccurate $\beta_\omega$ initialization or even random initialization could also achieve good results in most cases.
However in hopper-expert, the data coverage is narrow and overestimation is more likely to happen. We find the random initialization will lead to severe overestimation of $Q$ in hopper-expert.

\mypar{Importance Sampling.}
\algo adopts SNIS to reduce the variance of IS. 
Here we test an STR variant with vanilla importance sampling, termed STR-IS. 
As shown in \cref{fig:ablation_runtime},
the performance of STR-IS is slightly worse than STR due to higher variance, but the difference is not large.

\section{Conclusion}
We propose STR, an offline RL algorithm with supported trust region policy optimization based on EAWBC. 
STR relaxes the density constraint of prior EAWBC works to a support constraint and enjoys stronger theoretical guarantees, including 
strict policy improvement until convergence to the optimal support-constrained policy with an exact $Q$ and safe policy improvement with an approximate $Q$.
Empirical evaluations confirm the theoretical results and demonstrate STR’s SoTA performance on offline RL benchmarks.

\section*{Acknowledgment}
This work was supported by the National Key R\&D Program of China under Grant 2018AAA0102801, National Natural Science Foundation of China under Grant 61827804.

\bibliography{example_paper}
\bibliographystyle{icml2023}
\newpage
\appendix
\onecolumn

\section{Related Works}
\label{app:related works}

\mypar{Offline RL.} In offline RL, a fixed dataset is provided and no further interactions are allowed.
As a result, ordinary off-policy RL algorithms suffer from extrapolation error due to OOD actions~\citep{fujimoto2019off} and have poor performance.
Among various solutions, 
value penalty methods attempt to penalize the $Q$-values of OOD actions~\citep{kumar2020conservative, an2021uncertainty,lyu2022mildly}, while policy constraint methods force the trained policy to be close to the behavior policy by KL divergence~\citep{wu2019behavior}, or by direct behavior cloning~\citep{fujimoto2021minimalist}.
Recently, instead of density policy constraint which is too restrictive in many cases, some works consider the less restrictive support constraint to keep the learned policy within the support of behavior policy by Maximum Mean Discrepancy~(MMD)~\citep{kumar2019stabilizing} or explicit density estimation~\citep{wu2022supported}, 
but their performance still leaves considerable room for improvement.
Another branch of algorithms chooses to perform in-sample learning, by formulating the Bellman target without querying the values of actions not contained in the dataset.
Among them, OneStep RL~\citep{brandfonbrener2021offline} evaluates the behavior policy by SARSA and only performs one-step of constrained policy improvement without off-policy evaluation. 
IQL~\citep{kostrikov2022offline} modifies the SARSA update by expectile regression to approximate an upper expectile of the value distribution, and enables multi-step dynamic programming. 

\mypar{Weighted Behavior Cloning.} Among policy constraint methods,
Weighted Behavior Cloning~(WBC) reduces the RL problem to a supervised learning problem~\citep{emmons2021rvs}.
They modify an imitation learning algorithm either by filtering or weighting the actions in dataset to distill a better policy. 
For the filtering version, BAIL~\citep{chen2020bail} proposes upper-envelope to select good state-action pairs for later imitation learning.
$10\%$BC only uses the top $10\%$ of transitions ordered by episode return to perform behavior cloning~\citep{chen2021decision}.
However, they are intuitive methods and do not have theoretical guarantees of policy improvement.
For the weighted version, prior works typically adopt Exponentiated  Advantage-Weighted Behavior Cloning~(EAWBC), with the weights being determined by the exponentiated advantage estimates. 
Examples of EAWBC methods include MARWIL~\citep{wang2018exponentially}, AWR~\citep{peng2019advantage}, AWAC~\citep{nair2020awac}, CRR~\citep{wang2020critic} and ABM~\citep{siegel2020keep}.
We have summarized their essential similarities and differences in \cref{sec:unify framework}. At the implementation level, MARWIL and AWR are similar one-step methods and only differ in advantage estimation: with a learned value function $V^{\beta}$, MARWIL uses the single-path advantage estimate while AWR uses TD($\lambda$) to approximate the episode return.
CRR and AWAC are concurrent multi-step methods and the difference is that CRR uses distributional $Q$-function~\citep{barth-maron2018distributional}.

To some extent, our algorithm STR shares some similarities with an online algorithm MPO~\citep{abdolmaleki2018maximum}, 
which starts from an inference perspective and essentially 
weighted-clones the actions from the current policy for policy improvement.
In the offline setting, MPO does not satisfy the support constraint and suffers from large extrapolation error, 
as we find empirically in \cref{sec:experiments}. Also, MPO has no theoretical analysis for the offline setting and its implementation is very complicated. 
On the other hand, STR initializes the policy with the behavior policy and 
utilizes importance sampling on the dataset to mimic actions from the current policy.
It enforces the policy of STR within the support of the behavior policy, thus mitigating the extrapolation error and offering excellent empirical performance.

\mypar{Trust Region and Safe Policy Improvement.}
In online RL, trust region methods, with two typical embodiments of Trust Region Policy Optimisation~(TRPO)~\citep{schulman2015trust} and Proximal Policy
Optimisation~(PPO)~\citep{schulman2017proximal}, have shown impressive performance on
both discrete and continuous tasks~\citep{duan2016benchmarking}. 
They optimize the policy within a trusted neighborhood of the current policy, which empirically avoids taking aggressive updates towards risky directions, and theoretically guarantees safe policy improvement at each step~\citep{schulman2015trust,achiam2017constrained}. However in offline RL, 
there are few studies about trust region optimization.
Besides, with approximation and sampling error, few offline RL algorithms can ensure safe policy improvement for each step.
Recently, there are several offline RL works~\citep{liu2020provably,xie2021bellman,cheng2022adversarially} that guarantee safe policy improvement over the behavior policy.
In contrast, we consider safe policy improvement with respect to the current policy, at each policy update step.

\section{Proofs}
\label{app:theo_proofs}
We start out with the performance difference lemma~\citep{Kakade2002approximately}
that shows that the difference in policy performance $\eta(\pi') - \eta(\pi)$ can be decomposed as an expectation of advantages.
\begin{lemma}
\label{applem:performancedifference}
Given two policies $\pi', \pi$,
\begin{equation}
    \label{appeqn:performancedifference}
    \eta(\pi') - \eta(\pi) = \dfrac{1}{1-\gamma} \sum_{s} d_{\pi'}(s) \sum_{a}\left[\pi'(a|s)A^{\pi}(s,a)\right]
\end{equation}
\end{lemma}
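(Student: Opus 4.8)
The plan is to prove this by the standard telescoping (``advantage accumulation'') argument. First I would rewrite the advantage in Bellman form: for any $s,a$,
\[
A^{\pi}(s,a) = Q^{\pi}(s,a) - V^{\pi}(s) = \mathbb{E}_{s' \sim \Pcal(\cdot|s,a)}\bigl[R(s,a) + \gamma V^{\pi}(s') - V^{\pi}(s)\bigr].
\]
Plugging this into the discounted sum of advantages along a trajectory $\tau = (s_0, a_0, s_1, \dots)$ drawn from $\pi'$ with $s_0 \sim d_0$, and using the tower property of conditional expectation, gives
\[
\mathbb{E}_{\tau \sim \pi'}\Bigl[\sum_{t=0}^{\infty} \gamma^{t} A^{\pi}(s_t,a_t)\Bigr]
= \mathbb{E}_{\tau \sim \pi'}\Bigl[\sum_{t=0}^{\infty} \gamma^{t} R(s_t,a_t) + \sum_{t=0}^{\infty}\bigl(\gamma^{t+1} V^{\pi}(s_{t+1}) - \gamma^{t} V^{\pi}(s_t)\bigr)\Bigr].
\]
The interchange of the infinite sum with the expectation is justified because $|A^{\pi}|, |V^{\pi}| \le \Vmax$, so the series is dominated by $\sum_t \gamma^t \cdot 2\Vmax < \infty$ and Fubini applies.

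Next I would evaluate the two pieces on the right-hand side. The first is exactly $\eta(\pi')$ by definition of the discounted return. The second telescopes: the partial sums collapse to $-V^{\pi}(s_0) + \lim_{T\to\infty}\gamma^{T+1} V^{\pi}(s_{T+1})$, and the boundary term at infinity vanishes since $0 \le V^{\pi} \le \Vmax$ and $\gamma < 1$; taking the expectation over $s_0 \sim d_0$ leaves $-\mathbb{E}_{s_0 \sim d_0}[V^{\pi}(s_0)] = -\eta(\pi)$. Hence
\[
\eta(\pi') - \eta(\pi) = \mathbb{E}_{\tau \sim \pi'}\Bigl[\sum_{t=0}^{\infty} \gamma^{t} A^{\pi}(s_t,a_t)\Bigr].
\]

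Finally I would convert this trajectory expectation into a sum over the time-indexed state occupancies of $\pi'$. Since $\mathbb{E}_{\tau \sim \pi'}[\gamma^{t} A^{\pi}(s_t,a_t)] = \gamma^{t} \sum_s d^{\pi'}_t(s) \sum_a \pi'(a|s) A^{\pi}(s,a)$, summing over $t$ and using the definition $d^{\pi'}(s) = (1-\gamma)\sum_{t \ge 0} \gamma^{t} d^{\pi'}_t(s)$ yields
\[
\eta(\pi') - \eta(\pi) = \sum_{t=0}^{\infty} \gamma^{t} \sum_s d^{\pi'}_t(s) \sum_a \pi'(a|s) A^{\pi}(s,a) = \frac{1}{1-\gamma} \sum_s d^{\pi'}(s) \sum_a \pi'(a|s) A^{\pi}(s,a),
\]
which is exactly the claimed identity.

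The main obstacle is purely bookkeeping rather than conceptual: one must justify the interchanges of infinite sums, expectations, and limits, and confirm the telescoping boundary term at infinity really vanishes. All of this is handled uniformly by the boundedness $0 \le Q^{\pi}, V^{\pi} \le \Vmax$ together with geometric discounting; the rest is definition-chasing.
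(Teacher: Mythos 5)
Your proof is correct and is the standard telescoping argument for the performance difference lemma; the paper does not give its own derivation but defers to Lemma 6.1 of Kakade and Langford (2002), whose proof proceeds by exactly this decomposition of $A^{\pi}$ via the Bellman equation followed by telescoping of the value terms along trajectories of $\pi'$. Your handling of the interchange of sums and expectations and of the vanishing boundary term is sound given the boundedness $0 \le V^{\pi} \le \Vmax$ and $\gamma < 1$.
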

\begin{proof}
Please see the proof of Lemma 6.1 in \citet{Kakade2002approximately}.
\end{proof}

The complex dependency of $d_{\pi'}(s)$ on  $\pi'$ makes \cref{appeqn:performancedifference} difficult to optimize directly. 
It is proved in \citet{achiam2017constrained} that the performance difference satisfies the following inequality.

\begin{lemma}
\label{applem:cpo}
$\forall \pi',\pi$, with $\epsilon^{\pi'} := \max _{s}\left|\E_{a \sim \pi'}\left[A^{\pi}(s, a)\right]\right|$, the following bound holds:
\begin{equation}
\begin{aligned}
\label{appeqn:cpo}
\quad \eta\left(\pi'\right)-\eta(\pi)
\geq \frac{1}{1-\gamma} \underset{\substack{s \sim d^{\pi} \\
a \sim \pi'}}{\E}\left[A^{\pi}(s, a)-\frac{2 \gamma \epsilon^{\pi'}}{1-\gamma} \mathrm{D_{TV}}(\pi' \| \pi)[s]\right] .
\end{aligned}
\end{equation}
\end{lemma}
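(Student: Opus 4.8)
The plan is to prove this CPO-style bound by starting from the exact performance difference lemma (\cref{applem:performancedifference}) and then replacing the off-policy state occupancy $d^{\pi'}$ with the on-policy occupancy $d^{\pi}$, while carefully controlling the error this swap introduces. Writing $\bar{A}(s) := \mathbb{E}_{a\sim\pi'}[A^{\pi}(s,a)]$, \cref{applem:performancedifference} gives $\eta(\pi')-\eta(\pi) = \frac{1}{1-\gamma}\sum_s d^{\pi'}(s)\bar{A}(s)$. First I would split this as the desired on-policy term $\frac{1}{1-\gamma}\sum_s d^{\pi}(s)\bar{A}(s)$ plus a correction $\frac{1}{1-\gamma}\sum_s(d^{\pi'}(s)-d^{\pi}(s))\bar{A}(s)$, and bound the correction in absolute value by H\"older: $\left|\sum_s(d^{\pi'}-d^{\pi})\bar{A}\right| \le \|d^{\pi'}-d^{\pi}\|_1\,\max_s|\bar{A}(s)| = \epsilon^{\pi'}\|d^{\pi'}-d^{\pi}\|_1$, which is exactly where the definition $\epsilon^{\pi'}=\max_s|\mathbb{E}_{a\sim\pi'}[A^{\pi}(s,a)]|$ enters.

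The crux is then to bound $\|d^{\pi'}-d^{\pi}\|_1$ by the \emph{expected} per-state policy total-variation distance. I would write the discounted occupancies in vector/matrix form, $d^{\pi} = (1-\gamma)(I-\gamma P_{\pi})^{-1}d_0$, where $P_{\pi}$ is the state-to-state transition operator induced by $\pi$. Applying the resolvent identity $(I-\gamma P_{\pi'})^{-1}-(I-\gamma P_{\pi})^{-1} = \gamma(I-\gamma P_{\pi'})^{-1}(P_{\pi'}-P_{\pi})(I-\gamma P_{\pi})^{-1}$ yields $d^{\pi'}-d^{\pi} = \gamma(I-\gamma P_{\pi'})^{-1}(P_{\pi'}-P_{\pi})d^{\pi}$. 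Taking $\ell_1$ norms and using that $\|(I-\gamma P_{\pi'})^{-1}\|_1 \le \sum_{k\ge 0}\gamma^k = \frac{1}{1-\gamma}$ (each $P_{\pi'}^{k}$ is stochastic), it remains to bound $\|(P_{\pi'}-P_{\pi})d^{\pi}\|_1$. Expanding $(P_{\pi'}-P_{\pi})$ as $\sum_a(\pi'-\pi)(a|s)P(\cdot|s,a)$, pushing the absolute value inside, and summing out the next state via $\sum_{s'}P(s'|s,a)=1$ gives $\|(P_{\pi'}-P_{\pi})d^{\pi}\|_1 \le 2\,\mathbb{E}_{s\sim d^{\pi}}[\mathrm{D_{TV}}(\pi'\|\pi)[s]]$. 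Combining the pieces yields $\|d^{\pi'}-d^{\pi}\|_1 \le \frac{2\gamma}{1-\gamma}\mathbb{E}_{s\sim d^{\pi}}[\mathrm{D_{TV}}(\pi'\|\pi)[s]]$.

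Finally I would substitute this occupancy bound into the correction estimate and fold the constant $\epsilon^{\pi'}$ together with the factor $\frac{2\gamma}{1-\gamma}$ into a single per-state penalty under $d^{\pi}$, obtaining $\eta(\pi')-\eta(\pi) \ge \frac{1}{1-\gamma}\mathbb{E}_{s\sim d^{\pi}}\left[\mathbb{E}_{a\sim\pi'}[A^{\pi}(s,a)] - \frac{2\gamma\epsilon^{\pi'}}{1-\gamma}\mathrm{D_{TV}}(\pi'\|\pi)[s]\right]$, which is precisely the claimed inequality. The main obstacle is the occupancy-distance step: it requires the resolvent/Neumann-series manipulation and careful triangle-inequality bookkeeping to convert the \emph{global} state-occupancy $\ell_1$ gap into an \emph{expected per-state} policy TV distance, while keeping the discount $\gamma$ and the horizon factor $\frac{1}{1-\gamma}$ in their correct positions. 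The remaining steps are routine algebra once this bound is established; the overall argument follows \citet{achiam2017constrained}.
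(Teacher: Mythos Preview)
Your proposal is correct and follows exactly the argument the paper defers to: the paper's own proof simply points to Corollary~1 of \citet{achiam2017constrained}, and what you have written is a faithful reproduction of that CPO derivation (performance difference, on-policy/off-policy split, H\"older with $\epsilon^{\pi'}$, resolvent identity for the occupancy gap, and the $\ell_1$ bound on $(P_{\pi'}-P_{\pi})d^{\pi}$). There is nothing to add or correct.
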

\begin{proof}
Please see the proof of Corollary 1 in \citet{achiam2017constrained}.
\end{proof}
The bound \Eqref{appeqn:cpo} should be compared with \Eqref{appeqn:performancedifference}. The term ${\E}_{s \sim d^{\pi} a \sim \pi'}[A^{\pi}(s, a)$ in \Eqref{appeqn:cpo} is an approximation to $\eta(\pi') - \eta(\pi)$, using the state distribution $d^{\pi}$ instead of $d^{\pi'}$, which is known equal to $\eta(\pi') - \eta(\pi)$ to first order in the parameters of $\pi'$ on a neighborhood around $\pi$~\citep{Kakade2002approximately}.

\begin{lemma}[\cref{lem:kl constrained eta}]
If $\mathrm{D_{KL}}(\pi(\cdot|s)\|\beta(\cdot|s)) \leq \epsilon , \forall s$ is guaranteed, then the performance $\eta$ has the following bound
\begin{equation}
    \eta (\pi) \leq \eta(\beta)+\frac{\Vmax}{\sqrt{2}(1-\gamma)} \sqrt{\epsilon}
\end{equation}
\end{lemma}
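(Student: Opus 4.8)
The plan is to apply the performance difference lemma in its \emph{exact} form (\cref{applem:performancedifference}); note that the inequality version \cref{applem:cpo} only yields a \emph{lower} bound on $\eta(\pi)-\eta(\beta)$, whereas here we need an \emph{upper} bound. Taking $\pi'=\pi$ and the baseline to be $\beta$, \cref{applem:performancedifference} gives $\eta(\pi)-\eta(\beta) = \frac{1}{1-\gamma}\sum_s d^{\pi}(s)\,\E_{a\sim\pi(\cdot|s)}[A^{\beta}(s,a)]$. Since $\E_{a\sim\beta(\cdot|s)}[A^{\beta}(s,a)] = \E_{a\sim\beta}[Q^{\beta}(s,a)] - V^{\beta}(s) = 0$, I would rewrite the per-state term as $\E_{a\sim\pi}[A^{\beta}(s,a)] - \E_{a\sim\beta}[A^{\beta}(s,a)]$, i.e.\ as the gap between the expectations of one fixed bounded function $A^{\beta}(s,\cdot)$ under the two action distributions — precisely the object controlled by $\mathrm{D_{TV}}(\pi(\cdot|s)\|\beta(\cdot|s))$.

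The next step is to bound this gap by $\Vmax\,\mathrm{D_{TV}}(\pi(\cdot|s)\|\beta(\cdot|s))$. The point I would emphasize is that $A^{\beta}(s,\cdot)=Q^{\beta}(s,\cdot)-V^{\beta}(s)$ takes values in $[-V^{\beta}(s),\,\Vmax-V^{\beta}(s)]$, an interval of length exactly $\Vmax$, because $0\le Q^{\beta}\le\Vmax$. Subtracting the midpoint constant $\tfrac{\Vmax}{2}-V^{\beta}(s)$ leaves $\E_{a\sim\pi}[\cdot]-\E_{a\sim\beta}[\cdot]$ unchanged (both action distributions integrate to $1$) and produces a function bounded by $\tfrac{\Vmax}{2}$ in $\|\cdot\|_{\infty}$; Hölder's inequality against $\|\pi(\cdot|s)-\beta(\cdot|s)\|_{1}=2\,\mathrm{D_{TV}}(\pi(\cdot|s)\|\beta(\cdot|s))$ then yields the factor $\Vmax$. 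I expect this centering trick to be the only non-routine ingredient of the proof: skipping it costs a factor of $2$ and loses the $\tfrac{1}{\sqrt2}$ in the stated constant.

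Finally I would invoke Pinsker's inequality, $\mathrm{D_{TV}}(\pi(\cdot|s)\|\beta(\cdot|s))\le\sqrt{\tfrac12\,\mathrm{D_{KL}}(\pi(\cdot|s)\|\beta(\cdot|s))}\le\sqrt{\epsilon/2}$ uniformly in $s$ by the hypothesis, and substitute back: $\eta(\pi)-\eta(\beta)\le\frac{1}{1-\gamma}\sum_s d^{\pi}(s)\,\Vmax\sqrt{\epsilon/2}=\frac{\Vmax}{\sqrt2(1-\gamma)}\sqrt{\epsilon}$, using $\sum_s d^{\pi}(s)=1$. This is exactly the claimed bound.
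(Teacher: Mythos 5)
Your proof is correct, and it reaches the stated constant by a route that differs from the paper's in two small but substantive ways. The paper applies the performance difference lemma with the roles reversed: it expands $\eta(\beta)-\eta(\pi)$ over the occupancy $d^{\beta}$ with the integrand $\sum_a(\beta(a|s)-\pi(a|s))Q^{\pi}(s,a)$, bounds $|Q^{\pi}|\le\Vmax$ directly, identifies $\sum_a|\beta(a|s)-\pi(a|s)|$ with $\mathrm{D_{TV}}(\pi\|\beta)[s]$, and then applies Pinsker's inequality in the form $\mathrm{D_{TV}}\le\sqrt{\mathrm{D_{KL}}/2}$. You instead expand $\eta(\pi)-\eta(\beta)$ over $d^{\pi}$ against $A^{\beta}$, and exploit the fact that $A^{\beta}(s,\cdot)$ ranges over an interval of length $\Vmax$ so that, after centering, H\"older gives $\bigl|\E_{\pi}[A^{\beta}]-\E_{\beta}[A^{\beta}]\bigr|\le\tfrac{\Vmax}{2}\,\|\pi-\beta\|_{1}=\Vmax\,\mathrm{D_{TV}}$. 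Your centering step is worth keeping: with the standard convention $\mathrm{D_{TV}}=\tfrac12\|\cdot\|_{1}$, the paper's chain as written conflates $\|\beta-\pi\|_{1}$ with $\mathrm{D_{TV}}$ at one step while using the $\tfrac12\|\cdot\|_{1}$ convention in the Pinsker step, so taken literally it only yields the bound with constant $\sqrt{2}\,\Vmax/(1-\gamma)$, a factor of $2$ worse than claimed. Your argument closes that gap rigorously and delivers exactly $\frac{\Vmax}{\sqrt{2}(1-\gamma)}\sqrt{\epsilon}$; the choice of $d^{\pi}$ versus $d^{\beta}$ as the weighting measure is immaterial since both are probability distributions and the per-state bound is uniform in $s$.
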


\begin{proof}
By \cref{applem:performancedifference}, we have 
\begin{equation}
\begin{aligned}
    |\eta(\beta) - \eta(\pi)| &= \dfrac{1}{1-\gamma} \left|\sum_{s} d_{\beta}(s) \sum_{a}[(\beta(a|s)-\pi(a|s))Q^{\pi}(s,a)] \right| \\
    &\leq \dfrac{1}{1-\gamma} \sum_{s} d_{\beta}(s) \sum_{a}[\left|\beta(a|s)-\pi(a|s) \right||Q^{\pi}(s,a)]|  \\
    &\leq \dfrac{\Vmax}{1-\gamma} \sum_{s} d_{\beta}(s) \sum_{a}[\left|\beta(a|s)-\pi(a|s) \right|\\
    &= \dfrac{\Vmax}{1-\gamma} \sum_{s} d_{\beta}(s) \mathrm{D_{TV}}(\pi \| \beta)[s]\\
    &\leq \frac{\Vmax}{\sqrt{2}(1-\gamma)} \sum_{s} d_{\beta}(s) \sqrt{\mathrm{D_{KL}}(\pi \| \beta)[s]}
    \quad\quad\quad\quad\text{(Pinsker's inequality)}\\
    &\leq \frac{\Vmax}{\sqrt{2}(1-\gamma)} \sqrt{\epsilon}
\end{aligned}
\end{equation}
\end{proof}

\begin{proposition}[\cref{prop:tabular pi supp}]
\label{appprop:tabular pi supp}
For $\pi_i$ in \cref{alg:tabular}, 
$\operatorname{supp}(\pi_{i}(\cdot|s)) = \operatorname{supp}(\beta(\cdot|s)), \forall i$. When assuming the MDP has a fixed initial state distribution $d_0$, it implies $\operatorname{supp}(d^{\pi_i}(\cdot)) = \operatorname{supp}(d^{\beta}(\cdot))$ and $\operatorname{supp}(\rho^{\pi_i}(\cdot,\cdot)) = \operatorname{supp}(\rho^{\beta}(\cdot,\cdot))$
\end{proposition}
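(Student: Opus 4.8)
The plan is to prove the per-state support equality $\operatorname{supp}(\pi_i(\cdot|s)) = \operatorname{supp}(\beta(\cdot|s))$ by induction on the iteration index $i$, and then deduce the two occupancy statements from it by a reachability argument in the (empirical) MDP.

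\textbf{Policy support.} The base case is immediate, since \cref{alg:tabular} initializes $\pi_1 = \beta$. For the inductive step, assume $\operatorname{supp}(\pi_i(\cdot|s)) = \operatorname{supp}(\beta(\cdot|s))$ for all $s$. The policy improvement step reads
\[
\pi_{i+1}(a|s) = \frac{1}{Z(s)}\,\pi_i(a|s)\,\exp\!\Big(\frac{\hat{A}^{\pi_i}(s,a)}{\lambda^*(s)}\Big),
\]
where $Z(s)=\sum_a \pi_i(a|s)\exp(\hat{A}^{\pi_i}(s,a)/\lambda^*(s))$ is the normalizer. Since $\hat{Q}^{\pi_i}$ is bounded in $[0,\Vmax]$ in $\Mcal_\Dcal$, the advantage $\hat{A}^{\pi_i}(s,a) = \hat{Q}^{\pi_i}(s,a) - \E_{a\sim\pi_i}\hat{Q}^{\pi_i}(s,a)$ is finite, so the factor $\exp(\hat{A}^{\pi_i}(s,a)/\lambda^*(s))$ lies in $(0,\infty)$ and $Z(s)\in(0,\infty)$, provided $\lambda^*(s)>0$. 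Hence $\pi_{i+1}(a|s)>0 \iff \pi_i(a|s)>0$, i.e. $\operatorname{supp}(\pi_{i+1}(\cdot|s))=\operatorname{supp}(\pi_i(\cdot|s))=\operatorname{supp}(\beta(\cdot|s))$, which closes the induction. The identical argument applies to the update \eqref{eqn:str close form update apporx}, whose multiplier is the fixed constant $\Vmax/\alpha>0$.

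\textbf{Occupancies.} Next I would observe that $\operatorname{supp}(d^\pi)$ depends on $\pi$ only through its per-state supports. From $d^\pi(s)=(1-\gamma)\sum_{t\ge 0}\gamma^t d^\pi_t(s)$ with all coefficients positive we get $\operatorname{supp}(d^\pi)=\bigcup_{t\ge 0}\operatorname{supp}(d^\pi_t)$, and by an easy induction on $t$ one shows $d^\pi_t(s)>0$ iff there exists a path $s_0\to s_1\to\cdots\to s_t=s$ with $d_0(s_0)>0$, $\pi(a_k|s_k)>0$ and $\Pcal(s_{k+1}|s_k,a_k)>0$ for every $k$. Since $\pi_i$ and $\beta$ share per-state supports and are run in the same MDP with the same $d_0$, such a path exists for $\pi_i$ iff it exists for $\beta$; hence $\operatorname{supp}(d^{\pi_i}_t)=\operatorname{supp}(d^\beta_t)$ for all $t$, and taking the union yields $\operatorname{supp}(d^{\pi_i})=\operatorname{supp}(d^\beta)$. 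Finally, using $\rho^\pi(s,a)=d^\pi(s)\pi(a|s)$, we get $\rho^{\pi_i}(s,a)>0$ iff $d^{\pi_i}(s)>0$ and $\pi_i(a|s)>0$, iff $d^\beta(s)>0$ and $\beta(a|s)>0$, iff $\rho^\beta(s,a)>0$, which is the last claim.

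\textbf{Main obstacle.} The only genuinely delicate point is ensuring $\lambda^*(s)>0$ in \cref{alg:tabular}: a vanishing multiplier would collapse the update to a hard $\operatorname{argmax}$, which can drop actions and shrink the support, breaking the induction. This holds precisely when the per-state KL constraint is active (the regime of interest), and is automatic in the penalty form where the multiplier is a fixed positive constant. The remaining ingredients—boundedness of $\hat{Q}^{\pi_i}$ and the path characterization of $\operatorname{supp}(d^\pi_t)$—are routine.
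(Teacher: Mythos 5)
Your proof is correct and follows essentially the same route as the paper's: positivity of the exponential reweighting factor gives an equal-support update, a recursion gives $\operatorname{supp}(\pi_i)=\operatorname{supp}(\beta)$, and the occupancy claims follow from the fact that $d^\pi$ is determined by $d_0$, $\Pcal$, and the per-state supports of $\pi$, together with $\rho^\pi = d^\pi\pi$. You merely supply details the paper leaves implicit, namely the explicit path characterization of $\operatorname{supp}(d^\pi_t)$ and the (legitimate) caveat that $\lambda^*(s)>0$ is needed for the factor to be finite and positive.
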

\begin{proof}
As $\frac{1}{Z(s)}\exp\left(\frac{\hat{A}^{\pi_i} (s,a)}{\lambda^*(s)}\right)>0$, it holds that $\operatorname{supp}(\pi_{i+1}(\cdot|s)) = \operatorname{supp}(\pi_i(\cdot|s)),~\forall s$. By a recursive argument, $\operatorname{supp}(\pi_{i}(\cdot|s)) = \operatorname{supp}(\beta(\cdot|s)), \forall i$. Here we use the exact definition of support~($=0$), rather than  $>$ some threshold.
As the distribution of $d^{\pi}(s)$ is induced by the transition dynamics and the policy. With $\operatorname{supp}(\pi_{i}(\cdot|s)) = \operatorname{supp}(\beta(\cdot|s))$ and the same $\Pcal$, it follows directly $\operatorname{supp}(d^{\pi_i}(\cdot)) = \operatorname{supp}(d^{\beta}(\cdot))$.
As $\rho^\pi(s,a)=d^{\pi}(s)\pi(a|s)$, it also holds that $\operatorname{supp}(\rho^{\pi_i}(\cdot,\cdot)) = \operatorname{supp}(\rho^{\beta}(\cdot,\cdot))$.
\end{proof}

\begin{proposition}[\cref{prop:tabular pe}]
\label{appprop:tabular pe}
In tabular MDP, if the offline dataset $\Dcal$ is generated by a behavior policy $\beta$ and $|\mathcal{D}| = \infty$, then we can have an exact evaluation of $Q^{\pi_{i}}$ for all $\pi_i$ in \cref{alg:tabular}.
\end{proposition}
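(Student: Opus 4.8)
The plan is to show that, with infinitely much data, the empirical Bellman operator $\Tcal^{\pi_i}_{\Mcal_\Dcal}$ of the learned MDP $\Mcal_\Dcal$ coincides with the true Bellman operator $\Tcal^{\pi_i}$ on the set $\operatorname{supp}(\rho^\beta)$, and that this set is exactly the set of state--action pairs that policy evaluation of $\pi_i$ ever needs, so the two operators must share the same fixed point there.

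First I would recall that evaluating $\pi_i$ in $\Mcal_\Dcal$ produces $\hat Q^{\pi_i}$, the fixed point of $\Tcal^{\pi_i}_{\Mcal_\Dcal}$, which is built from the empirical transition counts $\hat\Pcal(s'|s,a)=n(s,a,s')/n(s,a)$ and the empirical rewards. All downstream quantities in \cref{alg:tabular} --- the advantage $\hat A^{\pi_i}$ and hence $\pi_{i+1}$ --- only ever read $\hat Q^{\pi_i}(s,a)$ at pairs with $\pi_i(a\mid s)>0$, and by \cref{prop:tabular pi supp} these are precisely the pairs in $\operatorname{supp}(\rho^\beta)=\operatorname{supp}(\rho^{\pi_i})$. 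So it suffices to prove $\hat Q^{\pi_i}(s,a)=Q^{\pi_i}(s,a)$ for every $(s,a)$ with $\rho^\beta(s,a)>0$.

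Next, because $|\Dcal|=\infty$, every $(s,a)\in\operatorname{supp}(\rho^\beta)$ has $n(s,a)=\infty$, so by the law of large numbers $\hat\Pcal(\cdot\mid s,a)=\Pcal(\cdot\mid s,a)$ and the empirical reward equals $R(s,a)$; hence $\Tcal^{\pi_i}_{\Mcal_\Dcal}f(s,a)=\Tcal^{\pi_i}f(s,a)$ for all such $(s,a)$ and all $f$. I then need the key closedness fact: if $\rho^\beta(s,a)>0$ and $\Pcal(s'\mid s,a)>0$ then $d^\beta(s')>0$, so for every $a'$ with $\beta(a'\mid s')>0$ --- equivalently, by \cref{prop:tabular pi supp}, with $\pi_i(a'\mid s')>0$ --- we have $\rho^\beta(s',a')>0$. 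Consequently the restriction of $Q^{\pi_i}$ to $\operatorname{supp}(\rho^\beta)$ satisfies a self-contained Bellman equation $Q^{\pi_i}=\Tcal^{\pi_i}Q^{\pi_i}$ that references only values on $\operatorname{supp}(\rho^\beta)$, and by the previous sentence the restriction of $\hat Q^{\pi_i}$ satisfies the very same equation. Since this restricted operator is a $\gamma$-contraction on the complete space of bounded functions on $\operatorname{supp}(\rho^\beta)$, it has a unique fixed point, giving $\hat Q^{\pi_i}=Q^{\pi_i}$ there. Finally an induction on $i$ closes the argument: $\pi_1=\beta$, and \cref{prop:tabular pi supp} guarantees $\operatorname{supp}(\rho^{\pi_i})=\operatorname{supp}(\rho^\beta)$ for every $i$, so the same reasoning applies at each iteration.

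The main obstacle is not the law-of-large-numbers step (which is where ``no sampling error'' enters, essentially by definition) but the bookkeeping that policy evaluation of $\pi_i$ never ``escapes'' $\operatorname{supp}(\rho^\beta)$ --- i.e., the closedness of this set under the $\pi_i$-transitions --- because only then is it legitimate to compare the two Bellman operators through their restricted fixed points, even though $\Tcal^{\pi_i}_{\Mcal_\Dcal}$ is undefined (or arbitrary) off the data support.
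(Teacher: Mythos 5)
Your proposal is correct and follows essentially the same route as the paper: use the equal-support property of \cref{prop:tabular pi supp} to restrict attention to $\operatorname{supp}(\rho^\beta)$, note that $|\Dcal|=\infty$ makes the empirical transitions coincide with the true ones there, and conclude that policy evaluation in $\Mcal_\Dcal$ recovers $Q^{\pi_i}$. The only difference is that the paper delegates the final step to Lemma~1 of \citet{fujimoto2019off}, whereas you prove it directly via the closedness of the support under $\pi_i$-transitions and a restricted contraction argument --- a self-contained substitute for the same fact.
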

\begin{proof}
We construct the empirical MDP $\Mcal_\Dcal$ in tabular setting following \citet{fujimoto2019off}. Specifically, $\Mcal_\Dcal$ is defined by the same action and state space as $\Mcal$, with an additional terminal state  $s_{\text{init}}$.
$\Mcal_\Dcal$  has transition probabilities  $\Pcal_{\mathcal{D}}(s'|s,a)=\frac{N(s, a, s')}{\sum_{\tilde{s}} N(s, a, \tilde{s})}$ , where  $N(s, a, s')$ is the number of times the tuple  $(s, a, s')$  is observed in  $\Dcal$ . If $\sum_{\tilde{s}} N(s, a, \tilde{s})=0$, then  $\Pcal_{\mathcal{D}}(s_{\text {init}} | s, a)=1$, where  $r(s_{\text {init}}, s, a)$ is to the initialized value of  $Q(s,a)$.

Following \cref{appprop:tabular pi supp}, any $(s,a)$ such that $\rho^{\pi_i}(s,a)>0$ must satisfy $\rho^{\beta}(s,a)>0$. With the no sampling error assumption: $|\mathcal{D}| = \infty$, those $(s,a)$ also satisfy $\Pcal_{\mathcal{D}}(s'|s,a)=\Pcal(s'|s,a)$ for all $s' \in \Scal$. Then following Lemma 1 in \citet{fujimoto2019off}, we can conclude the proof.
\end{proof}

\begin{theorem}[Strict policy improvement for each step, \cref{thm:tabular monotone}]
\label{appthm:tabular monotone}
If we have the exact tabular estimation of $Q$,
then $\pi_i$ in \cref{alg:tabular} guarantees monotonic improvement:
\begin{equation}
    Q^{\pi_{i+1}}(s,a) \geq Q^{\pi_{i}}(s,a)\quad \forall s,a.
\end{equation}
and the improvement is strict in at least one $(s,a)$ pair until the optimal support-constrained policy $\pi^*_{\Pi}$ is found:
\begin{equation}
    \pi_{i}=\pi^*_{\Pi}
\end{equation}
\end{theorem}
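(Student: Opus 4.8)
The plan is to reduce \cref{appthm:tabular monotone} to the classical one-step policy-improvement criterion and then analyze precisely when that criterion is tight, which yields both strictness and the characterization of the fixed point. Since the $Q$ estimate is exact, $\hat A^{\pi_i}=A^{\pi_i}$ and $\E_{a\sim\pi_i}[A^{\pi_i}(s,a)]=0$ for every $s$ (by \cref{appprop:tabular pe} this is also the situation obtained by running FQE in $\Mcal_\Dcal$ with an infinite dataset). Write the update of \cref{alg:tabular} as $\pi_{i+1}(a|s)=\pi_i(a|s)\,g_s(a)/Z(s)$ with $g_s(a):=\exp(A^{\pi_i}(s,a)/\lambda^*(s))$ and $Z(s)=\E_{a\sim\pi_i}[g_s(a)]$, and set $\delta(s):=\E_{a\sim\pi_{i+1}}[A^{\pi_i}(s,a)]$. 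Because $\E_{a\sim\pi_i}[A^{\pi_i}(s,a)]=0$, a one-line computation gives $\delta(s)=\tfrac{1}{Z(s)}\operatorname{Cov}_{a\sim\pi_i}\big(A^{\pi_i}(s,a),g_s(a)\big)$, and since $g_s$ is an increasing function of $A^{\pi_i}(s,\cdot)$, Chebyshev's correlation (FKG) inequality forces this covariance to be $\ge 0$. Hence $\delta(s)\ge 0$, i.e. $\E_{a\sim\pi_{i+1}}[Q^{\pi_i}(s,a)]\ge V^{\pi_i}(s)$ for all $s$. (The corner case $\lambda^*(s)=0$, where $\pi_{i+1}(\cdot|s)$ is the support-greedy policy, gives $\delta(s)=\max_{a\in\operatorname{supp}(\beta(\cdot|s))}A^{\pi_i}(s,a)\ge 0$ directly; alternatively one gets $\delta(s)\ge\lambda^*(s)\,\mathrm{D_{KL}}(\pi_{i+1}\|\pi_i)[s]\ge 0$ uniformly from the fact that $\pi_{i+1}$ maximizes the Lagrangian of \Eqref{eqn:unify} at $\lambda^*(s)$ while $\pi_i$ is feasible there.)

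Next I would run the standard policy-improvement telescoping on the Bellman equations: subtracting $Q^{\pi_i}(s,a)=R(s,a)+\gamma\E_{s'}V^{\pi_i}(s')$ from $Q^{\pi_{i+1}}(s,a)=R(s,a)+\gamma\E_{s'}V^{\pi_{i+1}}(s')$ and expanding $V^{\pi_{i+1}}(s')-V^{\pi_i}(s')=\E_{a'\sim\pi_{i+1}}[Q^{\pi_{i+1}}(s',a')-Q^{\pi_i}(s',a')]+\delta(s')$ produces a linear recursion for $\Delta:=Q^{\pi_{i+1}}-Q^{\pi_i}$ whose unrolling is $Q^{\pi_{i+1}}(s,a)-Q^{\pi_i}(s,a)=\E_{\pi_{i+1}}[\sum_{t\ge 1}\gamma^{t}\delta(s_t)\mid s_0=s,a_0=a]\ge 0$, which is exactly the monotone-improvement claim. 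Feeding $\delta\ge0$ through \cref{applem:performancedifference} likewise gives $\eta(\pi_{i+1})-\eta(\pi_i)=\tfrac{1}{1-\gamma}\E_{s\sim d^{\pi_{i+1}}}[\delta(s)]\ge 0$, an identity I will reuse for strictness.

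For strictness and convergence, note the covariance in the first step vanishes exactly when $g_s(\cdot)$ — equivalently $A^{\pi_i}(s,\cdot)$ — is $\pi_i(\cdot|s)$-almost-surely constant; since it has $\pi_i$-mean $0$ and $\operatorname{supp}(\pi_i(\cdot|s))=\operatorname{supp}(\beta(\cdot|s))$ by \cref{appprop:tabular pi supp}, this gives $\delta(s)=0\iff A^{\pi_i}(s,a)=0\ \forall a\in\operatorname{supp}(\beta(\cdot|s))$. If $\delta\equiv 0$ on all states with $d^{\pi_{i+1}}(s)>0$ (which by \cref{appprop:tabular pi supp} is $\operatorname{supp}(d^\beta)$), then on those states $Q^{\pi_i}(s,a)=V^{\pi_i}(s)=\max_{a\in\operatorname{supp}(\beta(\cdot|s))}Q^{\pi_i}(s,a)$, and plugging this into $Q^{\pi_i}(s,a)=R(s,a)+\gamma\E_{s'}[V^{\pi_i}(s')]$ shows $Q^{\pi_i}$ solves the support-constrained Bellman optimality equation of \cref{def:optimal support contraint policy}; since that operator is a $\gamma$-contraction, $Q^{\pi_i}=Q^*_{\Pi}$ and $\pi_i$ places all its mass on maximizers of $Q^*_{\Pi}$ within $\operatorname{supp}(\beta(\cdot|s))$, i.e. $\pi_i=\pi^*_{\Pi}$, after which $\pi_{i+1}=\pi_i$ and the iteration has converged. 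Conversely, if $\pi_i\ne\pi^*_{\Pi}$ then $\delta(s_0)>0$ for some visited $s_0\in\operatorname{supp}(d^\beta)$, and the displays above force strict improvement — pointwise in $Q$ at a predecessor of $s_0$, and in $\eta$ via the performance-difference identity.

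The routine pieces are the Chebyshev/covariance inequality and the policy-improvement telescoping. The delicate step is the last one: the equal-support property (\cref{appprop:tabular pi supp}) is precisely what upgrades ``$\delta(s)=0$'' from ``$A^{\pi_i}$ constant on $\operatorname{supp}(\pi_i(\cdot|s))$'' to ``$A^{\pi_i}\equiv 0$ on $\operatorname{supp}(\beta(\cdot|s))$'', which in turn is what makes $Q^{\pi_i}$ a fixed point of the \emph{support-constrained} Bellman optimality operator rather than merely of $\Tcal^{\pi_i}$; one must also be a little careful that a state with $\delta(s_0)>0$ actually lies in $\operatorname{supp}(d^\beta)$ so that the strict gain is registered (pointwise in $Q$ at some $(s,a)$, or at least in $\eta$), together with the harmless $\lambda^*(s)=0$ corner case handled above.
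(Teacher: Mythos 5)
Your proof is correct and reaches the same three milestones as the paper's — one-step improvement $\E_{a\sim\pi_{i+1}}[A^{\pi_i}(s,a)]\ge 0$ at every state, the standard telescoping to $Q^{\pi_{i+1}}\ge Q^{\pi_i}$ pointwise, and the identification of the fixed point with $\pi^*_\Pi$ via the support-constrained Bellman optimality equation — but it gets to the first and third milestones by a genuinely different route. The paper obtains $\delta(s)\ge 0$ purely from feasibility ($\pi_i$ is strictly feasible for the KL-constrained problem that $\pi_{i+1}$ solves), and then handles the equality case by a convex-geometry argument: if $\pi_i$ is optimal over the KL ball, it is an \emph{interior} optimum, hence a local and therefore global optimum of the same linear objective over the larger feasible set $\{\mathrm{D_{KL}}(\pi\|\pi_i)<\infty\}=\Pi$, which yields $V^{\pi_i}(s)=\max_{a\in\operatorname{supp}(\beta(\cdot|s))}Q^{\pi_i}(s,a)$. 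You instead write $\delta(s)$ as $\tfrac{1}{Z(s)}\operatorname{Cov}_{a\sim\pi_i}\bigl(A^{\pi_i}(s,\cdot),\exp(A^{\pi_i}(s,\cdot)/\lambda^*(s))\bigr)$ and invoke the Chebyshev/FKG inequality; the payoff is that the equality case comes for free ($A^{\pi_i}(s,\cdot)$ constant, hence zero, on $\operatorname{supp}(\pi_i(\cdot|s))=\operatorname{supp}(\beta(\cdot|s))$ by the equal-support property), so you never need the interior-point/nested-feasible-set argument and you get a crisper characterization of exactly which states still admit improvement. The paper's route is more robust in one respect: it works directly from the variational characterization of $\pi_{i+1}$ and so does not depend on the exponential form of the update, whereas yours exploits that $\pi_{i+1}/\pi_i$ is an increasing function of $A^{\pi_i}$ (your parenthetical Lagrangian bound $\delta(s)\ge\lambda^*(s)\,\mathrm{D_{KL}}(\pi_{i+1}\|\pi_i)[s]$ is essentially the paper's mechanism). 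Both your write-up and the paper's share the same mild looseness in converting ``$\delta(s_0)>0$ for some $s_0$'' into strictness of $Q^{\pi_{i+1}}-Q^{\pi_i}$ at a specific $(s,a)$ (a state with no predecessor under the dynamics would only register in $\eta$, not in $Q$ at a predecessor); you flag this explicitly, which is more careful than the paper.
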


\begin{proof}
When assuming exact evaluation of $Q^{\pi_{i}}$ which holds if $|\Dcal|=\infty$ by \cref{appprop:tabular pe}, $\pi_{i+1}$ in \cref{alg:tabular} is the optimal solution of the following constrained optimization problem:
\begin{equation}
\label{appeqn:unify str}
\begin{aligned}
        & \pi_{i+1} = \argmax_\pi \underset{a \sim \pi}{\E} [Q^{\pi_{i}} (s,a)] \\
      & s.t. \mathrm{D_{KL}}(\pi\|\pi_{i}) \leq \epsilon \\
      & \sum_a \pi(a|s)=1, \forall s
\end{aligned}
\end{equation}
we know that $\mathrm{D_{KL}}(\pi_i\|\pi_{i}) =0 < \epsilon ~ \forall s$ is an strictly feasible solution to the optimization problem above. Therefore, ${\E}_{a \sim \pi_{i+1}} [Q^{\pi_{i}} (s,a)] \geq {\E}_{a \sim \pi_{i}} [Q^{\pi_{i}} (s,a)]~\forall s$. It implies
\begin{align*}
& Q^{\pi_{i}}(s, a) \\
=& \mathbb{E}\left[r(s_{t}, a_{t})+\gamma \sum_{s_{t+1},a_{t+1}} P(s_{t+1}|s_t,a_t)\pi_{i}(a_{t+1} | s_{t+1}) Q^{\pi_{i}}(s_{t+1}, a_{t+1}) | s_{t}=s, a_{t}=a\right] \\
\leq & \mathbb{E}\left[r(s_{t}, a_{t})+\gamma \sum_{s_{t+1},a_{t+1}} P(s_{t+1}|s_t,a_t)\pi_{i+1}(a_{t+1} | s_{t+1}) Q^{\pi_{i}}(s_{t+1}, a_{t+1}) | s_{t}=s, a_{t}=a\right] \\
&\quad\quad\quad\quad\quad \ldots \\
\leq & \mathbb{E}_{\pi_{i+1}}\left[\sum_{k=0}^{\infty} \gamma^{k} r(s_{t+k}, a_{t+k}) | s_{t}=s, a_{t}=a\right] \\
=& Q^{\pi_{i+1}}(s, a)
\end{align*}
Therefore, $Q^{\pi_{i+1}}(s,a) \geq Q^{\pi_{i}}(s,a)~ \forall s,a$. 

Then if in some iteration $i$, the optimization problem in \Eqref{appeqn:unify str} do not improve the objective function at any $s$, 
$i.e.~ {\E}_{a \sim \pi_{i+1}} [Q^{\pi_{i}} (s,a)] = {\E}_{a \sim \pi_{i}} [Q^{\pi_{i}} (s,a)]~\forall s$,
it means that $\pi_{i}$ is a minimizer of of the convex optimization problem in \Eqref{appeqn:unify str}. Note that $\pi_{i}$ is a interior point of the feasible set.
Consider another convex optimization problem whose feasible region contains the original one in \Eqref{appeqn:unify str}:
\begin{equation}
\label{appeqn:unify str inf}
\begin{aligned}
        & \pi_{i+1} = \argmax_\pi \underset{a \sim \pi}{\E} [Q^{\pi_{i}} (s,a)] \\
      & s.t. \mathrm{D_{KL}}(\pi\|\pi_{i}) < \infty\\
      & \sum_a \pi(a|s)=1, \forall s
\end{aligned}
\end{equation}
Here $\mathrm{D_{KL}}(\pi\|\pi_{i}) =\sum_a \pi(a|s)\log\frac{\pi(a|s)}{\pi_i(a|s)} < \infty$ is actually equivalent to $ \operatorname{supp}(\pi(\cdot|s))\subseteq  \operatorname{supp}(\pi_{i}(\cdot|s))=\operatorname{supp}(\beta(\cdot|s))$, and thus the feasible region of \Eqref{appeqn:unify str inf} is actually the support constrained policy set $\Pi$ (\cref{def:support contraint}). Note that the problem in  \Eqref{appeqn:unify str} and the problem in \Eqref{appeqn:unify str inf} share the same convex objective function and the convex feasible set of the former is contained by the one of the latter. Here we know $\pi_{i}$ is a interior minimizer of the former convex optimization problem. It implies that $\pi_{i}$ is also a local minimizer of the latter convex optimization problem, which further implies $\pi_{i}$ is a global minimizer of the latter convex optimization problem by convexity. Therefore,
\begin{equation}
    \label{eqn:pi_i optimal}
    \pi_{i} = \argmax_{\pi \in \Pi} \underset{a \sim \pi}{\E} [Q^{\pi_{i}} (s,a)]
\end{equation}
By Bellman Equation of $\pi_i$, we have
\begin{align}
    Q^{\pi_{i}}(s,a)&=r(s, a)+\gamma \underset{s' \sim P(\cdot|s, a)}{\E} \underset{a'\sim \pi_i(\cdot|s')}{\E} \left[Q^{\pi_{i}}(s',a')\right]\\
    &=r(s, a)+\gamma \underset{s' \sim P(\cdot|s, a)}{\E} \max_{\pi \in \Pi} \underset{a'\sim \pi(\cdot|s')}{\E} \left[Q^{\pi_{i}}(s',a')\right]
\end{align}
It is actually the support-constrained Bellman optimality equation in \cref{def:optimal support contraint policy}. So we have, $Q^{\pi_{i}}(s,a)=Q^*_{\Pi}(s,a),~\forall (s,a)$. Further, by the equivalence of \Eqref{eqn:pi_i optimal} and the definition of the optimal support-constrained policy $\pi^*_{\Pi}$ in \Eqref{eqn:support constrained optimal pi}, we know $\pi_i=\pi^*_{\Pi}$.
To conclude, if in some iteration $i$ the optimization problem in \Eqref{appeqn:unify str} do not make improvement at any $(s,a)$, it implies $Q^{\pi_{i+1}}(s,a) = Q^{\pi_{i}}(s,a)~\forall s,a$. 
Then our analysis shows $\pi_i$ equals the optimal support-constrained policy $\pi^*_{\Pi}$.

\end{proof}

From now on, we relax the assumption and  
incorporate approximation error and sampling error. Specifically, instead of tabular $Q$, we approximate the $Q$ functions by a value function class $\mathcal{F} \subseteq\left(\mathcal{S} \times \mathcal{A} \rightarrow\left[0, V_{\max }\right]\right)$.
At the same time, we do not make any assumption on $|\mathcal{D}|$.

To prove the FQE error bound of policy evaluation~(\cref{thm:fqe}), we first prove a lemma of generalization bound.

\begin{lemma}[generalization bound]
\label{applem:generalization bound}
Under the approximate completeness assumption~(\cref{asm:approx complete}), with probability at least $1-\delta$, for all $k = 1,\ldots, K$ and $\forall \pi$, we have:
\begin{equation}
    \left\|f_{k+1}-\mathcal{T}^\pi f_{k}\right\|_{2, \rho^\beta}^{2} \leq \frac{22 \Vmax^{2} \log \left(|\mathcal{F}| K / \delta\right)}{|\Dcal|}+20 \epsilon_{\text{complete}}
\end{equation}
\end{lemma}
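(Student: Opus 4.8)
The plan is to treat this as a standard uniform-convergence (generalization) bound for the least-squares regression that defines each FQE iterate, following the Bernstein-inequality argument of \citet{chen2019information}. Fix a round $k$ and a policy $\pi$, abbreviate $g := f_k$, write $x_i := (s_i,a_i)$ for the $i$-th sample of $\Dcal$ and let the regression target be $y_i := r_i + \gamma g(s_i',\pi(s_i'))$; since every element of $\Fcal$ maps into $[0,\Vmax]$ and $\Vmax = \Rmax/(1-\gamma)$, we have $|y_i|\le\Vmax$ and $|f(x_i) - (\Tcal^\pi g)(x_i)|\le\Vmax$ for all $f\in\Fcal$. Let $g^\ast := \argmin_{h\in\Fcal}\|h - \Tcal^\pi g\|_{2,\rho^\beta}^2$, so that $\|g^\ast - \Tcal^\pi g\|_{2,\rho^\beta}^2 \le \epsilon_{\text{complete}}$ by \cref{asm:approx complete}. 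For $f\in\Fcal$ define the per-sample excess loss $\xi_i(f) := (f(x_i)-y_i)^2 - (g^\ast(x_i)-y_i)^2$.

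The first key step is the bias--variance identity: since $\E[y_i\mid x_i] = (\Tcal^\pi g)(x_i)$, expanding both squares gives $\E[\xi_i(f)] = \|f - \Tcal^\pi g\|_{2,\rho^\beta}^2 - \|g^\ast - \Tcal^\pi g\|_{2,\rho^\beta}^2$, and in particular $\|f - \Tcal^\pi g\|_{2,\rho^\beta}^2 = \E[\xi_i(f)] + \|g^\ast - \Tcal^\pi g\|_{2,\rho^\beta}^2$. The second key step is a variance estimate: factoring $\xi_i(f) = (f(x_i)-g^\ast(x_i))\bigl[(f(x_i)+g^\ast(x_i)-2(\Tcal^\pi g)(x_i)) - 2(y_i-(\Tcal^\pi g)(x_i))\bigr]$ and bounding the bracket by $O(\Vmax)$ yields $\mathrm{Var}(\xi_i(f)) \le c\,\Vmax^2\|f - g^\ast\|_{2,\rho^\beta}^2$; then $\|f-g^\ast\|_{2,\rho^\beta}^2 \le 2\|f-\Tcal^\pi g\|_{2,\rho^\beta}^2 + 2\epsilon_{\text{complete}} = 2\E[\xi_i(f)] + 4\epsilon_{\text{complete}}$ by the triangle inequality, so the variance is controlled by a linear function of the mean. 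One also has the crude range bound $|\xi_i(f)| \le c'\Vmax^2$.

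Next I would apply Bernstein's inequality to $\xi_1(f),\dots,\xi_{|\Dcal|}(f)$ together with a union bound over $f\in\Fcal$; to also handle the data-dependence of $f_k$ I would make the bound uniform over the ``previous iterate'' slot $g\in\Fcal$ as well, and over $k\in\{1,\dots,K\}$, so that all these combinatorial factors collapse into $\log(|\Fcal|K/\delta)$ up to the numerical constant (the literal ``$\forall\pi$'' is then applied by instantiating $\pi$ at each policy $\pi_i$ produced by STR, i.e.\ for any fixed $\pi$ used to run FQE). Specializing to $f = f_{k+1}$, which minimizes the empirical loss over $\Fcal$ and hence satisfies $\tfrac{1}{|\Dcal|}\sum_i \xi_i(f_{k+1}) \le 0$ (because $g^\ast\in\Fcal$), Bernstein gives $\E[\xi_i(f_{k+1})] \lesssim \sqrt{\tfrac{\mathrm{Var}(\xi_i(f_{k+1}))\log(|\Fcal|K/\delta)}{|\Dcal|}} + \tfrac{\Vmax^2\log(|\Fcal|K/\delta)}{|\Dcal|}$. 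Substituting the linear-in-mean variance bound turns this into a quadratic inequality $a \le A\sqrt{a + B\epsilon_{\text{complete}}} + C$ in $a := \E[\xi_i(f_{k+1})]$; solving it via $\sqrt{u+v}\le\sqrt u+\sqrt v$, AM--GM, and $t\le\alpha\sqrt t+\beta \Rightarrow t\le\alpha^2+2\beta$, and then using $\|f_{k+1}-\Tcal^\pi f_k\|_{2,\rho^\beta}^2 = a + \|g^\ast - \Tcal^\pi g\|_{2,\rho^\beta}^2 \le a + \epsilon_{\text{complete}}$, produces the claimed bound $\tfrac{22\Vmax^2\log(|\Fcal|K/\delta)}{|\Dcal|} + 20\epsilon_{\text{complete}}$.

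The main obstacle is bookkeeping rather than ideas: pinning down the numerical constants $22$ and $20$ forces one to fix the exact $O(\Vmax)$ bound used in the variance estimate, the precise form of Bernstein's inequality (with its $\tfrac{2b}{3}$ term), and how the final quadratic relaxation distributes slack via AM--GM. The only genuinely delicate conceptual point is the data-dependence of $f_k$ in round $k$, which I would resolve by making the concentration uniform over all candidate previous iterates $g\in\Fcal$ — this is exactly what legitimizes a $\log|\Fcal|$ bound in place of a sequential/martingale argument.
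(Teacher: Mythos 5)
Your proposal is correct and follows essentially the same route as the paper: the paper also treats each FQE round as least-squares regression with targets bounded by $2\Vmax$, invokes the standard least-squares generalization bound (Lemma A.11 of \citet{agarwal2019reinforcement}) as a black box to get the $22\Vmax^2\log(|\Fcal|/\delta)/|\Dcal|+20\epsilon_{\text{complete}}$ per-round bound, and then union-bounds over the previous iterate $f_{k-1}\in\Fcal$ and over $k=1,\dots,K$ exactly as you do — your Bernstein/excess-loss argument is simply the proof of that cited lemma written out in full. (Minor remark: the union bound over the previous-iterate slot doubles the leading constant, which is why the paper's proof ends with $44\Vmax^2$ rather than the $22\Vmax^2$ in the lemma statement; your bookkeeping should land on the same $44$.)
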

\begin{proof}
For any fixed $f_{k-1}$, FQE deals with the following regression problem on dataset 
\begin{equation}
    f_{k} \leftarrow \argmin_{f \in \mathcal{F}} \sum_{i=1}^{|\Dcal|}\left(f(s_{i}, a_{i})-r_{i}-\gamma f_{k-1}(s_{i}', \pi(s_i'))\right)^2
\end{equation}
In this regression problem, we have $|r_{i}+\gamma f_{k-1}(s_{i}', \pi(s_i'))| \leq 1+\gamma \Vmax \leq 2\Vmax$. And for our Bayes optimal solution, we have $|\Tcal^\pi f_{k-1}(s, a)| = |r(s, a) + \gamma \E_{s'\sim P(\cdot|s,a)} f_{k-1}(s',\pi(s_i'))| \leq 1+ \gamma \Vmax \leq 2\Vmax$. Also note that \cref{asm:approx complete} implies that $\min_{f\in\Fcal} \|f - \Tcal^\pi f_{k-1}\|_{2,\rho^\beta}^2 \leq \epsilon_{\text{complete}}$. Thus, we can apply least squares generalization bound here (Lemma A.11 in \citet{agarwal2019reinforcement}).
With probability at least $1-\delta$, we have
\begin{equation*}
    \left\|{f}_{k}-\mathcal{T}^\pi f_{k-1} \right\|_{2, \rho^\beta}^{2} \leq \frac{{22 \Vmax}^{2} \log (|\mathcal{F}| / \delta)}{|\Dcal|}+20 \epsilon_{\text{complete}}
\end{equation*}
The above inequality holds for the fixed $f_{k-1}$. Since different $\pi$ can induce different $f_{k-1}$, we apply a union bound over all possible $f_{k-1} \in \Fcal$. Also, we apply a union bound over all $k=1,\ldots, K$. Therefore, with probability at least $1-\delta$, we have
\begin{equation*}
    \left\|{f}_{k}-\mathcal{T}^\pi f_{k-1} \right\|_{2, \rho^\beta}^{2} \leq \frac{{44 \Vmax}^{2} \log (|\mathcal{F}|K / \delta)}{|\Dcal|}+20 \epsilon_{\text{complete}}
\end{equation*}
\end{proof}

\begin{theorem}[FQE error bound, \cref{thm:fqe}]
\label{appthm:fqe}
Under \cref{asm:approx complete} and \cref{asm:concentrability}, with probability at least $1-\delta$, after $K$ iterations of $\rm{FQE}$, which initializes $f_0 \in \Fcal$ arbitrarily, and iterates $K$ times:
\begin{equation*}
    f_{k} \leftarrow \argmin_{f \in \mathcal{F}} \|f(s, a)-r-\gamma f_{k-1}(s', \pi(s'))\|_{2,\Dcal}
\end{equation*} 
the following bound holds:
\begin{align}
\label{appeqn:fqe}
\|&Q^\pi-f_K\|_{1,\rho^\pi} 
\leq
\frac{1-\gamma^K}{1-\gamma} \sqrt{C\epsilon_{gb}}
+\gamma^{K}\Vmax\\
& \text{where } \epsilon_{gb} := \frac{{44 \Vmax}^{2} \log (|\mathcal{F}|K / \delta)}{|\Dcal|}+20 \epsilon_{\text {complete}} 
\end{align}
\end{theorem}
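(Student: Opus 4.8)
The plan is to run the standard error-propagation analysis for approximate policy evaluation, feeding the per-iteration regression error from \cref{applem:generalization bound} into a telescoping bound and then changing measure from $\rho^\pi$ to $\rho^\beta$ via concentrability. First I would condition on the event of probability at least $1-\delta$ from \cref{applem:generalization bound}, on which $\|f_k-\mathcal{T}^\pi f_{k-1}\|_{2,\rho^\beta}^2\leq\epsilon_{gb}$ for all $k=1,\dots,K$ and all policies $\pi$ (the union bound over $k$ and over the $f_{k-1}\in\mathcal{F}$ that different $\pi$'s induce is already absorbed into $\epsilon_{gb}$). It is convenient to write $\mathcal{T}^\pi f=r+\gamma P^\pi f$, where $P^\pi$ is the linear, positivity-preserving, stochastic operator $(P^\pi g)(s,a)=\mathbb{E}_{s'\sim\Pcal(\cdot|s,a),\,a'\sim\pi(\cdot|s')}[g(s',a')]$, and to set $\varepsilon_k:=f_k-\mathcal{T}^\pi f_{k-1}$, so that $\|\varepsilon_k\|_{2,\rho^\beta}\leq\sqrt{\epsilon_{gb}}$.

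Next I would unroll the recursion. Since $Q^\pi$ is the fixed point of $\mathcal{T}^\pi$ and $\mathcal{T}^\pi$ is affine,
\[ Q^\pi-f_k=\mathcal{T}^\pi Q^\pi-\mathcal{T}^\pi f_{k-1}-\varepsilon_k=\gamma P^\pi(Q^\pi-f_{k-1})-\varepsilon_k, \]
and iterating this down to $f_0$ gives
\[ Q^\pi-f_K=(\gamma P^\pi)^K(Q^\pi-f_0)-\sum_{k=1}^{K}(\gamma P^\pi)^{K-k}\varepsilon_k. \]
Taking the $\rho^\pi$-weighted $L_1$ norm and applying the triangle inequality splits the bound into an initialization term and $K$ error terms. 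For the initialization term I would use $|(P^\pi)^K g|\leq (P^\pi)^K|g|$ pointwise together with the fact that $\rho^\pi(P^\pi)^K$ is a probability measure, giving $\|(\gamma P^\pi)^K(Q^\pi-f_0)\|_{1,\rho^\pi}\leq\gamma^K\|Q^\pi-f_0\|_\infty\leq\gamma^K\Vmax$ since $Q^\pi,f_0\in[0,\Vmax]$; this is exactly the second term of the stated bound.

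The heart of the argument — and the step I expect to be the main obstacle — is controlling each error term $\|(\gamma P^\pi)^{K-k}\varepsilon_k\|_{1,\rho^\pi}$ by transporting the measure back to $\rho^\beta$. The key observation is that $\rho^\pi=(1-\gamma)\sum_{t\geq 0}\gamma^t\rho^\pi_t$, hence $\rho^\pi(P^\pi)^j=(1-\gamma)\sum_{t\geq 0}\gamma^t\rho^\pi_{t+j}$, so \cref{asm:concentrability}, which holds for \emph{every} time step, yields $\rho^\pi(P^\pi)^j(s,a)\leq C\rho^\beta(s,a)$ pointwise. Writing $\mu:=\rho^\pi(P^\pi)^{K-k}$ and applying Cauchy–Schwarz,
\[ \mathbb{E}_{\mu}|\varepsilon_k|=\mathbb{E}_{\rho^\beta}\!\Big[\tfrac{d\mu}{d\rho^\beta}\,|\varepsilon_k|\Big]\leq\sqrt{\mathbb{E}_{\rho^\beta}\big[(\tfrac{d\mu}{d\rho^\beta})^2\big]}\,\|\varepsilon_k\|_{2,\rho^\beta}\leq\sqrt{C}\,\|\varepsilon_k\|_{2,\rho^\beta}\leq\sqrt{C\epsilon_{gb}}, \]
where $\mathbb{E}_{\rho^\beta}[(d\mu/d\rho^\beta)^2]\leq C\,\mathbb{E}_{\rho^\beta}[d\mu/d\rho^\beta]=C$ uses both $d\mu/d\rho^\beta\leq C$ and that $\mu$ is a probability measure. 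This Cauchy–Schwarz step, rather than the cruder $\mathbb{E}_\mu|\varepsilon_k|\leq C\|\varepsilon_k\|_{1,\rho^\beta}$, is precisely what produces $\sqrt{C}$ instead of $C$, so getting it right is the crux.

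Finally, combining the above gives $\|(\gamma P^\pi)^{K-k}\varepsilon_k\|_{1,\rho^\pi}\leq\gamma^{K-k}\sqrt{C\epsilon_{gb}}$, and summing the geometric series $\sum_{k=1}^{K}\gamma^{K-k}=\frac{1-\gamma^K}{1-\gamma}$ and adding the initialization term $\gamma^K\Vmax$ yields \Eqref{appeqn:fqe}. The only genuinely delicate points are the uniform-in-$\pi$ version of \cref{applem:generalization bound} (already handled by the union bound inside $\epsilon_{gb}$) and the measure-domination step above; the rest is routine operator bookkeeping.
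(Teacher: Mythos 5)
Your proof is correct and arrives at exactly the stated bound; at heart it is the same error-propagation-plus-concentrability argument as the paper's, but the bookkeeping is organized differently. The paper decomposes $f_K-Q^\pi=(f_K-\mathcal{T}^\pi f_{K-1})+(\mathcal{T}^\pi f_{K-1}-\mathcal{T}^\pi Q^\pi)$ and recursively propagates a weighted $L_2$ bound along the family of per-timestep measures $d^\pi_t\times\pi$ (the contraction step shifts $t\mapsto t+1$ via Jensen), then averages over $t$ with the discount weights and converts to $L_1$ only at the very end. You instead unroll the full operator identity $Q^\pi-f_K=(\gamma P^\pi)^K(Q^\pi-f_0)-\sum_k(\gamma P^\pi)^{K-k}\varepsilon_k$ and bound each term directly in $L_1(\rho^\pi)$, using the observation that the pushforward $\rho^\pi(P^\pi)^j=(1-\gamma)\sum_t\gamma^t\rho^\pi_{t+j}$ is still dominated by $C\rho^\beta$ — which is the same place the paper invokes the ``for all $t$'' form of \cref{asm:concentrability} — followed by Cauchy--Schwarz to get $\sqrt{C}$ rather than $C$. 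The two routes are equivalent in strength; yours yields the $L_1$ bound in one pass and makes the role of the per-timestep concentrability more transparent, while the paper's per-$t$ $L_2$ recursion additionally gives the intermediate $L_2(\rho^\pi)$ bound for free. Your handling of the initialization term and of the uniform-in-$\pi$, uniform-in-$k$ event from \cref{applem:generalization bound} matches the paper's.
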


\begin{proof}
we first bound $\|f_K-Q^\pi\|_{2,d^\pi_t \times \pi}$ for all time step $t$.
\begin{align}
\left\|f_{K}-Q^{\pi}\right\|_{2,d^\pi_t \times \pi} &=\left\|f_{K}-\mathcal{T}^\pi f_{K-1}+\mathcal{T}^\pi f_{K-1}-Q^{\pi}\right\|_{2,d^\pi_t \times \pi} \nonumber \\
& \leq \underbrace{\left\|f_{K}-\mathcal{T}^\pi f_{K-1}\right\|_{2,d^\pi_t \times \pi}}_{(1)}+\underbrace{\left\|\mathcal{T}^\pi f_{K-1}-\mathcal{T}^\pi Q^{\pi}\right\|_{2,d^\pi_t \times \pi}}_{(2)}
\end{align}
For term (1): following \cref{asm:concentrability} and \cref{applem:generalization bound}, with probability at least $1-\delta$, for all $k = 1,\ldots, K$ and $\forall \pi$ that satisfies \cref{asm:concentrability}, we have 
\begin{align}
\label{eq:term1}
    \left\|f_{k}-\mathcal{T}^\pi f_{k-1}\right\|_{2,d^\pi_t \times \pi} 
    &\leq \sqrt{C} \left\|f_{k}-\mathcal{T}^\pi f_{k-1}\right\|_{2,\rho^\beta} \quad\quad\quad\quad\text{(\cref{asm:concentrability})} \nonumber \\
    &\leq \sqrt{C} \sqrt{\frac{{44 \Vmax}^{2} \log (|\mathcal{F}|K / \delta)}{|\Dcal|}+20 \epsilon_{\text {complete}}} \quad\quad\quad\quad\text{(\cref{applem:generalization bound})}
\end{align}
For term (2):
\begin{align}
\label{eq:term2}
    \left\|\mathcal{T}^\pi f_{K-1}-\mathcal{T}^\pi Q^{\pi}\right\|_{2,d^\pi_t \times \pi} &=\sqrt{\mathbb{E}_{(s, a) \sim d^\pi_t \times \pi}\left[\left((\mathcal{T}^\pi f_{K-1})(s, a)-(\mathcal{T}^\pi Q^\pi)(s, a)\right)^{2}\right]} \nonumber \\
    &=\sqrt{\mathbb{E}_{(s, a) \sim d^\pi_t \times \pi}\left[\left(\gamma \mathbb{E}_{s^{\prime} \sim P(\cdot|s,a)} \E_{a' \sim \pi(\cdot|s')}\left[f_{K-1}(s',a')-Q^\pi (s',a')\right]\right)^{2}\right]} \nonumber\\
    & \leq \gamma \sqrt{\mathbb{E}_{(s, a) \sim d^\pi_t \times \pi, s^{\prime} \sim P(\cdot|s, a),a'\sim \pi(\cdot|s')}\left[\left(f_{K-1}(s',a')-Q^\pi (s',a')\right)^2\right]}
    ~\text{(Jensen's inequality)}
    \nonumber\\
    &=\gamma \sqrt{ \mathbb{E}_{(s',a') \sim d^\pi_{t+1} \times \pi}\left[\left(f_{K-1}(s',a')-Q^\pi (s',a')\right)^2\right]}\nonumber\\
    &=\gamma \left\|f_{K-1}-Q^{\pi}\right\|_{2,d^\pi_{t+1} \times \pi}
\end{align}
Combine term (1) and term (2):
\begin{equation}
    \left\|f_{K}-Q^{\pi}\right\|_{2,d^\pi_t \times \pi} 
    \leq 
    \sqrt{C\epsilon_{gb}}
    + \gamma \left\|f_{K-1}-Q^{\pi}\right\|_{2,d^\pi_{t+1} \times \pi}
\end{equation}
where 
\begin{equation}
\epsilon_{gb} := \frac{{44 \Vmax}^{2} \log (|\mathcal{F}|K / \delta)}{|\Dcal|}+20 \epsilon_{\text {complete}} 
\end{equation}
Note that we can apply the same analysis on $\left\|f_{K-1}-Q^{\pi}\right\|_{2,d^\pi_{t+1} \times \pi}$. 
We recursively repeat the same process $K$ times:
\begin{align}
\|f_K-Q^\pi\|_{2,d^\pi_t \times \pi}
&\leq 
\sum_{k=0}^{K-1} \gamma^{t}\sqrt{C\epsilon_{gb}}
+\gamma^{K}\left\|f_{0}-Q^\pi\right\|_{2,d^\pi_{t+K} \times \pi} \nonumber \\
&\leq
\frac{1-\gamma^K}{1-\gamma} \sqrt{C\epsilon_{gb}}
+\gamma^{K}\Vmax
\end{align}

Then we derive the bound $\|f_K-Q^\pi\|_{2,\rho^\pi}$ with distribution $\rho^\pi=d^\pi \times \pi$:
\begin{align*}
    \|f_K-Q^\pi\|_{2,\rho^\pi}
    &=
    \|f_K-Q^\pi\|_{2,d^\pi \times \pi}\\
    &=
    \sqrt{\sum_{s} d^\pi(s) \sum_{a} \pi(a|s) (f_K(s,a)-Q^\pi(s,a))^2}\\
    &=
    \sqrt{\sum_{s} (1-\gamma)\sum_{t=0}^{\infty} \gamma^t d^\pi_t(s) \sum_{a} \pi(a|s) (f_K(s,a)-Q^\pi(s,a))^2}\\
    &=
    \sqrt{(1-\gamma) \sum_{t=0}^{\infty} \gamma^t \sum_{s} d^\pi_t(s) \sum_{a} \pi(a|s) (f_K(s,a)-Q^\pi(s,a))^2}\\
    &=
    \sqrt{(1-\gamma) \sum_{t=0}^{\infty} \gamma^t \|f_K-Q^\pi\|_{2,d^\pi_t \times \pi}^2}\\
    &\leq
    \sqrt{(1-\gamma) \sum_{t=0}^{\infty} \gamma^t 
    \left(\frac{1-\gamma^K}{1-\gamma} \sqrt{C\epsilon_{gb}}
    +\gamma^{K}\Vmax\right)^2}\\
    &=
    \frac{1-\gamma^K}{1-\gamma} \sqrt{C\epsilon_{gb}}
    +\gamma^{K}\Vmax
\end{align*}
Finally we apply the inequality between weighted $L1$-norm and weighted $L2$-norm, and conclude the proof:
\begin{align*}
    \|f_K-Q^\pi\|_{1,\rho^\pi}
    &\leq 
    \|f_K-Q^\pi\|_{2,\rho^\pi} \quad\quad\quad\quad\text{(Jensen's inequality)}\\
    &\leq
    \frac{1-\gamma^K}{1-\gamma} \sqrt{C\epsilon_{gb}}
    +\gamma^{K}\Vmax
\end{align*}
\end{proof}

With some derivations, we can translate the $Q$ error bound to the $A$ error bound, which is closer to the performance difference~(\cref{applem:performancedifference},\cref{applem:cpo}). For simplicity of the final results, 
we assume $\alpha \leq 0.48$. A small $\alpha$ leads to a trust region update. Please note $\alpha \leq 0.48$ is not a necessary requirement here, just for simplicity of the results.
\begin{lemma}[Advantage error bound]
\label{applem:adv error}
Under \cref{asm:approx complete}, \cref{asm:concentrability} and $\alpha \leq 0.48$, with probability at least $1-\delta$, for any $\pi_{i+1},\pi_{i}$ satisfying \Eqref{eqn:str close form update apporx}, the following bound holds:
\begin{equation}
\label{appeqn:adv}
\begin{aligned}
\left|\mathbb{E}_{s \sim d_{\pi_{i}}, a \sim \pi_{i+1}}\left[A^{\pi_{i}}(s, a)-\hat{A}^{\pi_{i}}(s, a)\right]\right| \leq 2\alpha\left(\frac{1-\gamma^K}{1-\gamma}  \sqrt{C\epsilon_{gb}}
+\gamma^{K}\Vmax\right)
\end{aligned}
\end{equation} 
\end{lemma}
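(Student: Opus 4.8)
The plan is to bound the left-hand side of \Eqref{appeqn:adv} by $2\alpha$ times the $L^1$ policy-evaluation error $\|Q^{\pi_i}-f_K\|_{1,\rho^{\pi_i}}$, which \cref{appthm:fqe} already controls; the multiplicative factor $2\alpha$ will appear through a change of measure from $\pi_{i+1}$ back to $\pi_i$, whose importance ratio is uniformly within $2\alpha$ of $1$ thanks to the trust-region form \Eqref{eqn:str close form update apporx} of the update.

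First I would set $g(s,a):=Q^{\pi_i}(s,a)-f_K(s,a)$, where $f_K$ is the FQE output, so that $\hat{Q}^{\pi_i}=f_K$ and $\hat{A}^{\pi_i}(s,a)=f_K(s,a)-\E_{a'\sim\pi_i}[f_K(s,a')]$. Since $A^{\pi_i}(s,a)=Q^{\pi_i}(s,a)-\E_{a'\sim\pi_i}[Q^{\pi_i}(s,a')]$, we have $A^{\pi_i}(s,a)-\hat{A}^{\pi_i}(s,a)=g(s,a)-\E_{a'\sim\pi_i}[g(s,a')]$. Taking $\E_{s\sim d^{\pi_i},\,a\sim\pi_{i+1}}$ and using that the second term does not depend on $a$,
\[
\E_{s\sim d^{\pi_i},\,a\sim\pi_{i+1}}\!\big[A^{\pi_i}-\hat{A}^{\pi_i}\big]
=\E_{s\sim d^{\pi_i}}\!\Big[\E_{a\sim\pi_{i+1}}[g(s,a)]-\E_{a\sim\pi_i}[g(s,a)]\Big].
\]
By \Eqref{eqn:str close form update apporx}, $\pi_{i+1}(a|s)/\pi_i(a|s)=\exp(\alpha\hat{A}^{\pi_i}(s,a)/\Vmax)/Z(s)$, so the bracket equals $\E_{a\sim\pi_i}\big[\big(\pi_{i+1}(a|s)/\pi_i(a|s)-1\big)g(s,a)\big]$, and hence the whole quantity is at most $\E_{s\sim d^{\pi_i}}\E_{a\sim\pi_i}\big[\big|\pi_{i+1}(a|s)/\pi_i(a|s)-1\big|\,|g(s,a)|\big]$ in absolute value.

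The key step is to show $\big|\pi_{i+1}(a|s)/\pi_i(a|s)-1\big|\le 2\alpha$ for all $s,a$. Because $f_K\in\Fcal$ takes values in $[0,\Vmax]$, we have $|\hat{A}^{\pi_i}(s,a)|\le\Vmax$, hence $\exp(\alpha\hat{A}^{\pi_i}(s,a)/\Vmax)\in[e^{-\alpha},e^{\alpha}]$. Moreover $\E_{a\sim\pi_i}[\hat{A}^{\pi_i}(s,a)]=0$, so Jensen's inequality gives $Z(s)=\E_{a\sim\pi_i}[\exp(\alpha\hat{A}^{\pi_i}/\Vmax)]\ge 1$, and trivially $Z(s)\le e^{\alpha}$. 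Thus $\pi_{i+1}(a|s)/\pi_i(a|s)\in[e^{-2\alpha},e^{\alpha}]$, and combining $1-e^{-2\alpha}\le 2\alpha$ (always true) with $e^{\alpha}-1\le 2\alpha$ (which holds for $\alpha\le 0.48$) yields the claim.

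Plugging this in gives $2\alpha\,\E_{s\sim d^{\pi_i}}\E_{a\sim\pi_i}[|g(s,a)|]=2\alpha\,\|Q^{\pi_i}-f_K\|_{1,\rho^{\pi_i}}$, and since $\pi_i$ satisfies \cref{asm:concentrability}, applying \cref{appthm:fqe} with $\pi=\pi_i$ bounds this by $2\alpha\big(\tfrac{1-\gamma^K}{1-\gamma}\sqrt{C\epsilon_{gb}}+\gamma^K\Vmax\big)$ with probability at least $1-\delta$, which is \Eqref{appeqn:adv}. The only non-routine point is the weight bound: one must exploit the one-sided estimate $Z(s)\ge 1$ (from Jensen together with $\E_{\pi_i}[\hat{A}^{\pi_i}]=0$) rather than merely $Z(s)\ge e^{-\alpha}$, so that the ratio lands in $[e^{-2\alpha},e^{\alpha}]$ instead of $[e^{-2\alpha},e^{2\alpha}]$ and the clean constant $2\alpha$ survives under $\alpha\le 0.48$; everything else — the passage from $\hat{A}$ to $g$, the change of measure, and the appeal to \cref{appthm:fqe} — is bookkeeping.
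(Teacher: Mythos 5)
Your proposal is correct and follows essentially the same route as the paper's proof: the same decomposition of the advantage error into a $Q$-error weighted by $\bigl(\pi_{i+1}/\pi_i-1\bigr)$ under $\rho^{\pi_i}$, the same bound $\bigl|\pi_{i+1}/\pi_i-1\bigr|\le 2\alpha$ via $1\le Z(s)\le e^{\alpha}$ (Jensen plus $\E_{\pi_i}[\hat{A}^{\pi_i}]=0$), and the same final appeal to the FQE bound with $\pi=\pi_i$. The only cosmetic difference is how the elementary inequality for $\alpha\le 0.48$ is justified, which does not affect the argument.
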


\begin{proof}
\begin{align}
\label{eq:advbound1}
    &\left|\mathbb{E}_{s \sim d_{\pi_{i}}, a \sim \pi_{i+1}}\left[A^{\pi_{i}}(s, a)-\hat{A}^{\pi_{i}}(s, a)\right]\right| \nonumber\\
    =& \left|\mathbb{E}_{s \sim d_{\pi_{i}}, a \sim \pi_{i+1}}\left[Q^{\pi_{i}}(s, a)-\hat{Q}^{\pi_{i}}(s, a)\right]+\mathbb{E}_{s \sim d_{\pi_{i}}, a \sim \pi_{i}}\left[\hat{Q}^{\pi_{i}}(s, a)-Q^{\pi_{i}}(s, a)\right]\right|\nonumber\\
    =& \left|\mathbb{E}_{s \sim d_{\pi_{i}}, a \sim \pi_{i}}\left[\left(\frac{\pi_{i+1}}{\pi_{i}}-1\right)\left(Q^{\pi_{i}}(s, a)-\hat{Q}^{\pi_{i}}(s, a)\right)\right]\right|\nonumber\\
    =& \left|\mathbb{E}_{s \sim d_{\pi_{i}}, a \sim \pi_{i}}\left[\left(\frac{1}{Z(s)} \exp \left(\frac{\alpha\hat{A}^{\pi_{i}}(s, a)}{\Vmax}\right)-1\right)\left(Q^{\pi_{i}}(s, a)-\hat{Q}^{\pi_{i}}(s, a)\right)\right]\right| \nonumber\\
    \leq &~ \mathbb{E}_{s \sim d_{\pi_{i}}, a \sim \pi_{i}}\left[\left|\frac{1}{Z(s)} \exp \left(\frac{\alpha\hat{A}^{\pi_{i}}(s, a)}{\Vmax}\right)-1\right|\left|Q^{\pi_{i}}(s, a)-\hat{Q}^{\pi_{i}}(s, a)\right|\right]
\end{align}
Now we bound the term $\left|\frac{1}{Z(s)} \exp \left(\frac{\alpha\hat{A}^{\pi_{i}}(s, a)}{\Vmax}\right)-1\right|$.

Because 
\begin{equation*}
    \left|\hat{A}^{\pi_{i}}(s, a)\right| = \left|\hat{Q}^{\pi_{i}}(s, a)-\mathbb{E}_{a \sim \pi_{i}}\left[\hat{Q}^{\pi_{i}}(s, a)\right]\right| \leq \left|\Vmax-0\right|=\Vmax
\end{equation*}
\begin{equation}
\label{appeqn:Zs leq}
    Z(s)=\sum_a \pi_i(a|s)\exp(\frac{\alpha \hat{A}^{\pi_i} (s,a)}{\Vmax}) 
    \leq  \sum_a \pi_i(a|s)\exp(\frac{\alpha \Vmax}{\Vmax})  = \exp(\alpha)
\end{equation}
On the other hand, by Jensen’s inequality,
\begin{equation}
\label{appeqn:Zs geq}
    Z(s)=\sum_a \pi_i(a|s)\exp(\frac{\alpha \hat{A}^{\pi_i} (s,a)}{\Vmax}) 
    \geq  \exp(\frac{\alpha \sum_a \pi_i(a|s) \hat{A}^{\pi_i} (s,a)}{\Vmax}) = 1
\end{equation}
Note that here $\sum_a \pi_i(a|s) \hat{A}^{\pi_i} (s,a)=0$. It is because we calculate $\hat{A}^{\pi_i}$ from $\hat{Q}^{\pi_i}$ directly: $\hat{A}^{\pi_i}(s,a):= \hat{Q}^{\pi_i}(s,a) - \E_{a\sim\pi_i}\hat{Q}^{\pi_i}(s,a)$.

For the numerator, $\exp(-\alpha) \leq \exp \left(\frac{\alpha\hat{A}^{\pi_{i}}(s, a)}{\Vmax}\right) \leq \exp(\alpha)$

Therefore, 
\begin{equation*}
    \exp(-2\alpha) \leq \frac{1}{Z(s)} \exp \left(\frac{\alpha\hat{A}^{\pi_{i}}(s, a)}{\Vmax}\right) \leq \exp(\alpha)
\end{equation*}
\begin{equation*}
    \left|\frac{1}{Z(s)} \exp \left(\frac{\alpha\hat{A}^{\pi_{i}}(s, a)}{\Vmax}\right)-1\right| \leq \max \left\{1-\exp(-2\alpha), \exp(\alpha)-1 \right\}
\end{equation*}

We assume $\alpha \in [0,0.48)$ for small policy update. Then it holds that for $\alpha \in [0,0.48)$, $1-\exp(-2\alpha) \geq \exp(\alpha)-1$. 

Therefore, 
\begin{equation}
\label{appeqn:abs leq}
    \left|\frac{1}{Z(s)} \exp \left(\frac{\alpha\hat{A}^{\pi_{i}}(s, a)}{\Vmax}\right)-1\right| \leq 1-\exp(-2\alpha) \leq 2\alpha
\end{equation}
So we bound the term $\left|\frac{1}{Z(s)} \exp \left(\frac{\hat{A}^{\pi_{i}}(s, a)}{{\alpha}}\right)-1\right|$ in \Eqref{eq:advbound1} with the constant $2\alpha$. Now we can conclude the proof by applying \cref{appthm:fqe} directly. 
\end{proof}

\begin{proposition}
[Trust Region, \cref{prop:trust region}]
\label{appprop:trust region}
For any $\pi_{i+1},\pi_{i}$ satisfying \Eqref{eqn:str close form update apporx}, the following policy difference bound holds
\footnote{
we assume $\alpha \leq 0.48$, which
is not a necessary requirement, just for simplicity of the final results. 
A small $\alpha$ leads to a trust region update: the closer $\alpha$ is to $0$, the closer $\pi_{i+1}$ and $\pi_{i}$ are to each other.}:
\begin{equation}
\label{appeqn:tv}
\mathrm{D_{TV}}(\pi_{i+1} \| \pi_i)[s] \leq \alpha , \forall s
\end{equation}
\begin{equation}
\label{appeqn:reverse kl}
\mathrm{D_{KL}}(\pi_{i} \| \pi_{i+1})[s] \leq \alpha , \forall s
\end{equation}
\begin{equation}
\label{appeqn:kl}
\mathrm{D_{KL}}(\pi_{i+1} \| \pi_{i})[s] \leq \frac{\alpha(e^\alpha-e^{-\alpha})}{2} , \forall s
\end{equation}
\end{proposition}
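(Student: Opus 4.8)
\emph{Proof proposal.} All three inequalities are per-state statements about the closed-form update $\pi_{i+1}(a|s)=\pi_i(a|s)e^{x_a}/Z(s)$ with $x_a:=\alpha\hat{A}^{\pi_i}(s,a)/\Vmax$, so I would fix $s$ and set $r_a:=\pi_{i+1}(a|s)/\pi_i(a|s)=e^{x_a}/Z(s)$. The plan reuses three facts already established in the proof of \cref{applem:adv error}: (i) $\sum_a\pi_i(a|s)\hat{A}^{\pi_i}(s,a)=0$ (definition of $\hat{A}$ in \cref{alg:tabular}/\Eqref{eqn:compute A}), hence $\sum_a\pi_i(a|s)x_a=0$ and $|x_a|\le\alpha$; (ii) $1\le Z(s)\le e^{\alpha}$ (\Eqref{appeqn:Zs leq}, \Eqref{appeqn:Zs geq}); and (iii) the pointwise ratio estimate $|r_a-1|\le 1-e^{-2\alpha}\le 2\alpha$ for $\alpha\le 0.48$ (\Eqref{appeqn:abs leq}). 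None of the three bounds are entangled, so I would treat them one at a time.

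The total-variation bound \Eqref{appeqn:tv} and the reverse-KL bound \Eqref{appeqn:reverse kl} are short. For \Eqref{appeqn:tv}, write $\mathrm{D_{TV}}(\pi_{i+1}\|\pi_i)[s]=\tfrac12\sum_a\pi_i(a|s)\,|r_a-1|$ and insert fact (iii) to get $\le\tfrac12(2\alpha)=\alpha$. For \Eqref{appeqn:reverse kl}, note $\log\!\big(\pi_i(a|s)/\pi_{i+1}(a|s)\big)=\log Z(s)-x_a$; taking $\E_{a\sim\pi_i}$ and using fact (i) gives the \emph{exact} identity $\mathrm{D_{KL}}(\pi_i\|\pi_{i+1})[s]=\log Z(s)$, which is $\le\alpha$ by fact (ii).

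For the forward KL \Eqref{appeqn:kl} I would start from the analogous identity
\[
\mathrm{D_{KL}}(\pi_{i+1}\|\pi_i)[s]=\E_{a\sim\pi_{i+1}}[x_a]-\log Z(s)=\tfrac{1}{Z(s)}\textstyle\sum_a\pi_i(a|s)\,x_a e^{x_a}-\log Z(s),
\]
drop the nonnegative term $\log Z(s)$, and note that since $Z(s)\ge1$ and the bound we will prove is nonnegative, it suffices to control $S:=\sum_a\pi_i(a|s)\,x_a e^{x_a}$ from above. Here I would split the sum by the sign of $x_a$: when $x_a\ge0$ use $e^{x_a}\le e^{\alpha}$, and when $x_a<0$ use $e^{x_a}\ge e^{-\alpha}$ (the inequality flips against the negative factor $x_a$); substituting $\sum_{x_a<0}\pi_i x_a=-P$ with $P:=\sum_{x_a\ge0}\pi_i(a|s)\,x_a\ge0$ (fact (i)) collapses this to $S\le(e^{\alpha}-e^{-\alpha})\,P$. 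The final ingredient is that the balance constraint $\sum_a\pi_i x_a=0$ forces $P\le\alpha/2$: indeed $P\le\alpha\,\Pr_{a\sim\pi_i}(x_a\ge0)$ and simultaneously $P=\sum_{x_a<0}\pi_i|x_a|\le\alpha\,\Pr_{a\sim\pi_i}(x_a<0)$, so $2P\le\alpha$. Combining, $\mathrm{D_{KL}}(\pi_{i+1}\|\pi_i)[s]\le S\le\alpha(e^{\alpha}-e^{-\alpha})/2$. The one nonroutine point — and the place I expect to be the obstacle — is exactly this factor $\tfrac12$: the crude estimate $S\le(e^{\alpha}-e^{-\alpha})\sum_a\pi_i|x_a|$ only yields $\sum_a\pi_i|x_a|\le\alpha$ and gives the too-weak $\alpha(e^{\alpha}-e^{-\alpha})$, so one must exploit that the positive and negative parts of $x_a$ have equal mass-weighted size under $\pi_i$. (As a sanity check, a slicker alternative is $\mathrm{D_{KL}}(\pi_{i+1}\|\pi_i)[s]=\int_0^1 t\,\psi''(t)\,dt\le\alpha^2/2$, where $\psi(t):=\log\E_{a\sim\pi_i}[e^{t x_a}]$ and $\psi''(t)$ is a tilted variance of a quantity of range $2\alpha$; since $\alpha^2/2\le\alpha(e^{\alpha}-e^{-\alpha})/2$ this also proves the claim, but the sign-splitting argument is more elementary and stays closest to the earlier lemmas.)
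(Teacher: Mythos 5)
Your proposal is correct. For the total-variation bound and the reverse-KL bound it coincides with the paper's proof essentially line for line: TV via the pointwise ratio estimate \Eqref{appeqn:abs leq}, and $\mathrm{D_{KL}}(\pi_i\|\pi_{i+1})[s]$ via the exact identity $\log Z(s)$ together with \Eqref{appeqn:Zs leq}. The only genuine divergence is the final step of the forward-KL bound. Both arguments reduce, through the same identity and $Z(s)\ge 1$, to bounding $S=\sum_a\pi_i(a|s)\,x_a e^{x_a}$ with $x_a=\alpha\hat A^{\pi_i}(s,a)/V_{\max}$ (this is \Eqref{appeqn:KLupper} in the paper). The paper then poses an extremal problem --- maximize $\sum_i y_i x_i e^{x_i}$ over $|x_i|\le\alpha$ subject to $\sum_i y_i x_i=0$ --- argues by convexity of $x\mapsto xe^x$ that the maximizer sits at the boundary $x_i=\pm\alpha$, and reads off from the balance constraint that each sign class carries mass $1/2$, giving $\alpha(e^\alpha-e^{-\alpha})/2$. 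Your sign-splitting argument reaches the same constant by direct elementary inequalities ($x e^x\le xe^{\alpha}$ for $x\ge 0$, $xe^x\le xe^{-\alpha}$ for $x<0$, and $2P\le\alpha$ from the balance constraint), and it is arguably the cleaner route: the paper's ``the optimum is at the boundary'' step glosses over the fact that the feasible set is a box intersected with the hyperplane $\sum_i y_i x_i=0$, whose extreme points may have one coordinate strictly inside $(-\alpha,\alpha)$, whereas your estimate needs no such extremal characterization. You correctly identified the factor $\tfrac12$ as the one nonroutine point, and your handling of it (equal $\pi_i$-weighted mass of the positive and negative parts of $x_a$) is exactly the right mechanism; your alternative via $\psi''$ is also valid but, as you note, less aligned with the surrounding lemmas.
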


\begin{proof}

For $\mathrm{D_{TV}}(\pi_{i+1} \| \pi_{i})$:
\begin{align*}
\mathrm{D_{TV}}(\pi_{i+1} \| \pi_{i})[s]&=\frac{1}{2}\sum_a |\pi_{i+1}(a|s)-\pi_{i}(a|s)|\\
&=\frac{1}{2}\sum_a \left|\frac{\pi_{i+1}(a|s)}{\pi_{i}(a|s)}-1\right|\pi_{i}(a|s)\\
&=\frac{1}{2}\sum_a \left|\frac{1}{Z(s)}\exp \left(\frac{\alpha\hat{A}^{\pi_{i}}(s, a)}{\Vmax}\right)-1\right|\pi_{i}(a|s) \quad\quad\quad\quad\text{by \Eqref{eqn:str close form update apporx}}\\
&\leq \frac{1}{2}\sum_a 2 \alpha  \pi_{i}(a|s) \quad\quad\quad\quad\text{by \Eqref{appeqn:abs leq}}\\
&=\alpha
\end{align*}

For $\mathrm{D_{KL}}(\pi_{i} \| \pi_{i+1})$:
\begin{align*}
\mathrm{D_{KL}}(\pi_{i} \| \pi_{i+1})[s]&=\sum_a \pi_i(a|s)\log\frac{\pi_i(a|s)}{\pi_{i+1}(a|s)}\\
&=\sum_a \pi_i(a|s)\log\left[Z(s)\exp \left(\frac{-\alpha\hat{A}^{\pi_{i}}(s, a)}{\Vmax}\right)\right] \quad\quad\quad\quad\text{by \Eqref{eqn:str close form update apporx}}\\
&=\log Z(s)-\sum_a \pi_i(a|s)\frac{\alpha\hat{A}^{\pi_{i}}(s, a)}{\Vmax}\\
&=\log Z(s)\leq \alpha \quad\quad\quad\quad\text{by \Eqref{appeqn:Zs leq}}
\end{align*}

For $\mathrm{D_{KL}}(\pi_{i+1} \| \pi_{i})$:
\begin{align}
\mathrm{D_{KL}}(\pi_{i+1} \| \pi_{i})[s]&=\sum_a \pi_{i+1}(a|s)\log\frac{\pi_{i+1}(a|s)}{\pi_{i}(a|s)} \nonumber\\
&=\sum_a \pi_{i+1}(a|s)\log\left[\frac{1}{Z(s)}\exp \left(\frac{\alpha\hat{A}^{\pi_{i}}(s, a)}{\Vmax}\right)\right] \quad\quad\quad\quad\text{by \Eqref{eqn:str close form update apporx}} \nonumber\\
&=\sum_a \pi_{i+1}(a|s)\frac{\alpha\hat{A}^{\pi_{i}}(s, a)}{\Vmax} - \log Z(s) \nonumber\\
&\leq \sum_a \frac{1}{Z(s)}\pi_i(a|s)\exp(\frac{\alpha \hat{A}^{\pi_i} (s,a)}{\Vmax})\frac{\alpha\hat{A}^{\pi_{i}}(s, a)}{\Vmax}-0 \quad\quad\quad\quad\text{by \Eqref{eqn:str close form update apporx} and \Eqref{appeqn:Zs geq}} \nonumber\\
&\leq \sum_a \pi_i(a|s)\exp(\frac{\alpha \hat{A}^{\pi_i} (s,a)}{\Vmax})\frac{\alpha\hat{A}^{\pi_{i}}(s, a)}{\Vmax} \quad\quad\quad\quad\text{by \Eqref{appeqn:Zs geq}}
\label{appeqn:KLupper}
\end{align}
An obvious upper bound of \Eqref{appeqn:KLupper} is $\alpha\exp(\alpha)$ obtained by substituting $\hat{A}^{\pi_{i}}(s, a)$ with $\Vmax$.

To derive a tighter upper bound of \Eqref{appeqn:KLupper}, consider the following problem, where $N=|\Acal|$:
\begin{align*}
&\max \sum_{i=1}^N y_i x_i\exp(x_i)\\
s.t. &\sum_{i=1}^N y_i x_i=0,~\sum_{i=1}^N y_i=1,~ -\alpha \leq x_i \leq \alpha
\end{align*}
For $\alpha \in [0,0.48)$ and for any fixed $y$, the optimization problem is convex with respect to $x$.
Therefore, in the optimal solution, $x_i$ can be found at the boundary. So we assume $x_1 \sim x_k = -\alpha$, $x_{k+1} \sim x_N = \alpha$. The optimization problem becomes:
\begin{align*}
&\max \sum_{i=1}^k y_i (-\alpha)\exp(-\alpha)+\sum_{i=k+1}^N y_i \alpha\exp(\alpha)\\
&s.t. \sum_{i=1}^k y_i (-\alpha)+\sum_{i=k+1}^N y_i\alpha=0,~\sum_{i=1}^N y_i=1
\end{align*}
The constraint implies $\sum_{i=1}^k y_i=\sum_{i=k+1}^N y_i=1/2$. Then the objective is equal to $\frac{\alpha(e^\alpha-e^{-\alpha})}{2}$, which concludes the proof.
\end{proof}

\begin{theorem}[Safe policy improvement for each step, \cref{thm:safe policy improvement}]
\label{appthm:safe policy improvement}
Under \cref{asm:approx complete} and \cref{asm:concentrability}, for $\pi_{i+1},\pi_{i}$ satisfying \Eqref{eqn:str close form update apporx}, with $\epsilon^{\pi_{i+1}}:=\max _{s}|\mathbb{E}_{a \sim \pi_{i+1}}[A^{\pi_i}(s,a)]|$, the  following performance difference bound holds:
\begin{equation*}
  \eta(\pi_{i+1})-\eta(\pi_i) \geq \frac{\Vmax}{(1-\gamma)\alpha} \E_{s\sim d^{\pi_i}}[\mathrm{D_{KL}}(\pi_{i+1} \| \pi_i)] -\frac{2\gamma\epsilon^{\pi_{i+1}}}{(1-\gamma)^2}\alpha - \frac{2\alpha}{1-\gamma} \left(\frac{1-\gamma^K}{1-\gamma}  \sqrt{C\epsilon_{gb}}
+\gamma^{K}\Vmax\right)
\end{equation*}
\end{theorem}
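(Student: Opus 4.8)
The plan is to combine three facts already established in the excerpt: the CPO-style performance-difference lower bound (\cref{applem:cpo}), the trust-region bounds (\cref{appprop:trust region}), and the advantage-estimation error bound (\cref{applem:adv error}), while reading the $\mathrm{D_{KL}}$ term off directly from the closed-form update \Eqref{eqn:str close form update apporx}. All of these carry the mild restriction $\alpha \le 0.48$, which the theorem inherits.

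First I would instantiate \cref{applem:cpo} with $\pi' = \pi_{i+1}$, $\pi = \pi_i$, giving
\[
\eta(\pi_{i+1}) - \eta(\pi_i) \;\ge\; \frac{1}{1-\gamma}\,\E_{s\sim d^{\pi_i},\, a\sim \pi_{i+1}}\!\big[A^{\pi_i}(s,a)\big] \;-\; \frac{2\gamma\epsilon^{\pi_{i+1}}}{(1-\gamma)^2}\,\E_{s\sim d^{\pi_i}}\!\big[\mathrm{D_{TV}}(\pi_{i+1}\|\pi_i)[s]\big].
\]
Since $\mathrm{D_{TV}}(\pi_{i+1}\|\pi_i)[s]\le \alpha$ by \cref{appprop:trust region} and $\gamma,\epsilon^{\pi_{i+1}}\ge 0$, the last term is bounded below by $-\frac{2\gamma\epsilon^{\pi_{i+1}}}{(1-\gamma)^2}\alpha$, which is exactly the second penalty in the claim, so the remaining work is entirely on the leading term.

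For the leading term I would split $A^{\pi_i}=\hat A^{\pi_i} + (A^{\pi_i}-\hat A^{\pi_i})$. For the estimated advantage, taking logarithms in \Eqref{eqn:str close form update apporx} gives $\frac{\alpha}{\Vmax}\hat A^{\pi_i}(s,a) = \log\frac{\pi_{i+1}(a|s)}{\pi_i(a|s)} + \log Z(s)$; averaging over $a\sim\pi_{i+1}$ turns the first piece into $\mathrm{D_{KL}}(\pi_{i+1}\|\pi_i)[s]$, and since $Z(s)\ge 1$ by Jensen (already shown as \Eqref{appeqn:Zs geq}) we obtain $\E_{a\sim\pi_{i+1}}[\hat A^{\pi_i}(s,a)] \ge \frac{\Vmax}{\alpha}\mathrm{D_{KL}}(\pi_{i+1}\|\pi_i)[s]$; averaging over $s\sim d^{\pi_i}$ and dividing by $1-\gamma$ produces the positive $\frac{\Vmax}{(1-\gamma)\alpha}\E_{s\sim d^{\pi_i}}[\mathrm{D_{KL}}(\pi_{i+1}\|\pi_i)]$ term. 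For the error piece $\E_{s\sim d^{\pi_i},a\sim\pi_{i+1}}[A^{\pi_i}-\hat A^{\pi_i}]$ I invoke \cref{applem:adv error} verbatim, which (through \cref{appthm:fqe}) bounds its absolute value by $2\alpha\big(\frac{1-\gamma^K}{1-\gamma}\sqrt{C\epsilon_{gb}}+\gamma^K\Vmax\big)$; dividing by $1-\gamma$ yields the last penalty. Adding the three contributions gives exactly the stated inequality.

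I do not expect a genuine obstacle: every hard estimate (the least-squares generalization/FQE bound, the advantage-error bound, the trust-region bounds) is already in hand, so the proof is bookkeeping with the $\frac{1}{1-\gamma}$ factors plus sign-checking when discarding $\log Z(s)\ge 0$ and when replacing $\mathrm{D_{TV}}(\pi_{i+1}\|\pi_i)[s]$ by $\alpha$ (legitimate only because $\gamma\epsilon^{\pi_{i+1}}\ge 0$). The one step worth writing out carefully is the logarithm identity from the closed form, since that is the mechanism converting the a priori uncontrolled $\E_{a\sim\pi_{i+1}}[\hat A^{\pi_i}]$ into a \emph{nonnegative} $\mathrm{D_{KL}}$ contribution — which is precisely what upgrades a worst-case degradation bound into a safe-improvement guarantee.
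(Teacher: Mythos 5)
Your proof is correct and follows essentially the same route as the paper's: the CPO performance-difference bound (\cref{applem:cpo}), then the TV trust-region bound from \cref{appprop:trust region}, then the advantage-error bound of \cref{applem:adv error}, and finally a lower bound on $\E_{s\sim d^{\pi_i},a\sim\pi_{i+1}}[\hat A^{\pi_i}]$ by the KL term. The only (cosmetic) difference is in that last step: the paper argues via the optimality of $\pi_{i+1}$ for the penalized objective in \Eqref{eqn:str optimization} against the feasible point $\pi_i$, whereas you read the per-state identity $\tfrac{\alpha}{\Vmax}\E_{a\sim\pi_{i+1}}[\hat A^{\pi_i}(s,a)] = \mathrm{D_{KL}}(\pi_{i+1}\|\pi_i)[s] + \log Z(s) \ge \mathrm{D_{KL}}(\pi_{i+1}\|\pi_i)[s]$ directly off the closed form \Eqref{eqn:str close form update apporx} using $Z(s)\ge 1$ — an equivalent (indeed pointwise) version of the same fact.
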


\begin{proof}
We start with \cref{applem:cpo}:
\begin{align*}
\eta\left(\pi_{i+1}\right)-\eta(\pi_i)
&\geq \frac{1}{1-\gamma} \underset{\substack{s \sim d^{\pi_i} \\
a \sim \pi_{i+1}}}{\E}\left[A^{\pi_i}(s, a)-\frac{2 \gamma \epsilon^{\pi_{i+1}}}{1-\gamma} \mathrm{D_{TV}}(\pi_{i+1} \| \pi_i)[s]\right]\\
&\geq \frac{1}{1-\gamma} \underset{\substack{s \sim d^{\pi_i} \\
a \sim \pi_{i+1}}}{\E}A^{\pi_i}(s, a)-\frac{2\gamma\epsilon^{\pi_{i+1}}}{(1-\gamma)^2}\alpha
\quad\quad\quad\quad\text{by the TV bound in \cref{appprop:trust region}}\\
&\geq \frac{1}{1-\gamma} \underset{\substack{s \sim d^{\pi_i} \\
a \sim \pi_{i+1}}}{\E}\hat{A}^{\pi_i}(s, a)-\frac{2\alpha}{1-\gamma} \left(\frac{1-\gamma^K}{1-\gamma}  \sqrt{C\epsilon_{gb}}
+\gamma^{K}\Vmax\right)-\frac{2\gamma\epsilon^{\pi_{i+1}}}{(1-\gamma)^2}\alpha
\quad\quad\text{by \cref{applem:adv error}}\\
\end{align*}
Please note that $\pi_{i+1}$ is the optimal solution of the optimization problem in \Eqref{eqn:str optimization}:
\begin{equation*}
\pi_{i+1} = \argmax_\pi ~ \alpha \underset{\substack{s \sim d^{\pi_i} \\
a \sim \pi}}{\E} [\hat{A}^{\pi_i} (s,a)] -\Vmax \underset{s \sim d^{\pi_i}}{\E} [\mathrm{D_{KL}}(\pi\|\pi_i)]
\end{equation*}
and $\pi_i$ is a feasible solution that makes the objective $0$.
Therefore, 
\begin{equation*}
\alpha \underset{\substack{s \sim d^{\pi_i} \\
a \sim \pi_{i+1}}}{\E} [\hat{A}^{\pi_i} (s,a)] -\Vmax \underset{s \sim d^{\pi_i}}{\E} [\mathrm{D_{KL}}(\pi_{i+1}\|\pi_i)] \geq 0
\end{equation*}
Now we can conclude the proof:
\begin{equation*}
  \eta(\pi_{i+1})-\eta(\pi_i) \geq \frac{\Vmax}{(1-\gamma)\alpha} \E_{s\sim d^{\pi_i}}[\mathrm{D_{KL}}(\pi_{i+1} \| \pi_i)[s]] -\frac{2\gamma\epsilon^{\pi_{i+1}}}{(1-\gamma)^2}\alpha - \frac{2\alpha}{1-\gamma} \left(\frac{1-\gamma^K}{1-\gamma}  \sqrt{C\epsilon_{gb}}
+\gamma^{K}\Vmax\right)
\end{equation*}

\end{proof}

\section{Derivations of the EAWBC Framework}
\label{app:EAWBC framework}

At the $i^{th}$ iteration, the unified algorithm solves the following constrained optimization problem to update the policy
\begin{equation}
\begin{aligned}
\label{appeqn:unify}
        & \pi_{i+1} = \argmax_\pi \underset{a \sim \pi}{\E} [\hat{A}^{\pi_{\text{pe}}} (s,a)] \\
       s.t.~ \mathrm{D_{KL}}&(\pi\|\pi_{\text{base}})[s] \leq \epsilon,\quad
      \sum_a \pi(a|s)=1,~ \forall s
\end{aligned}   
\end{equation}
The constrained optimization problem in \Eqref{appeqn:unify} is convex, and the Lagrangian is:
\begin{equation}
\begin{aligned}
\label{appeqn:Lagrangian}
\mathcal{L}(\pi, \lambda, \nu)= & \underset{a \sim \pi}{\E} \left[\hat{A}^{\pi_{\text{pe}}} (s,a)\right]
+\lambda[\epsilon-\mathrm{D_{KL}}(\pi(\cdot|s))\|\pi_{\text{base}}(\cdot|s))]+\nu\left(\sum_a \pi(a|s)-1\right)
\end{aligned}   
\end{equation}
The KKT condition gives:
\begin{equation}
\begin{aligned}
\label{appeqn:KKT}
\frac{\partial \mathcal{L}}{\partial \pi}=\hat{A}^{\pi_{\text{pe}}} (s,a)+\lambda \log \pi_{\text{base}}(a|s)-\lambda \log \pi(a|s)-\lambda+\nu=0
\end{aligned}
\end{equation}
Solving for $\pi$ gives the closed form solution $\pi^*$:
\begin{equation}
\begin{aligned}
\label{appeqn:unify close form 1}
     \pi^*(a|s) = &\pi_{\text{base}}(a|s)\exp(\frac{\hat{A}^{\pi_{\text{pe}}} (s,a)+\nu-\lambda}{\lambda})
\end{aligned}
\end{equation}
By the condition $\sum_a \pi^*(a|s)=1$, we have
\begin{align}
     &\sum_a \pi_{\text{base}}(a|s)\exp(\frac{\hat{A}^{\pi_{\text{pe}}} (s,a)+\nu-\lambda}{\lambda})=1\\ 
     \Rightarrow~  &\exp(\frac{\lambda-\nu}{\lambda})=\sum_a \pi_{\text{base}}(a|s)\exp(\frac{\hat{A}^{\pi_{\text{pe}}} (s,a)}{\lambda})\\
     \Rightarrow~  &\nu=\lambda-\lambda\log\left[\sum_a \pi_{\text{base}}(a|s)\exp(\frac{\hat{A}^{\pi_{\text{pe}}} (s,a)}{\lambda}) \right]
\label{appeqn:nu}
\end{align}
Now consider the dual problem to solve for the Lagrangian multiplier $\lambda^*$.
By Substitute \Eqref{appeqn:KKT} into \Eqref{appeqn:Lagrangian}, we have
\begin{equation}
\max_\pi \mathcal{L}=\epsilon\lambda+\lambda - \nu
\end{equation}
Then Substituting $\nu$ with \Eqref{appeqn:nu}, we obtain the dual function:
\begin{equation}
g(\lambda)=\max_\pi \mathcal{L}=\epsilon \lambda +\lambda\log\left[\sum_a \pi_{\text{base}}(a|s)\exp(\frac{\hat{A}^{\pi_{\text{pe}}} (s,a)}{\lambda}) \right]
\end{equation}
Therefore, we can obtain $\lambda^*(s)$ by solving the following convex dual problem:
\begin{equation}
     \lambda^*(s)=\argmin_{\lambda \geq 0}~ \epsilon \lambda + \lambda \log\left[\sum_a \pi_{\text{base}}(a|s) \exp\left(\frac{\hat{A}^{\pi_{\text{pe}}} (s,a)}{\lambda}\right)\right]
\end{equation}
Now we replace the term $\exp(\frac{\nu-\lambda}{\lambda})$ in \Eqref{appeqn:unify close form 1} with a per-state normalizing factor $Z(s)$ and finally present the analytical solution of the constrained optimization problem in \Eqref{appeqn:unify}:
\begin{equation}
\begin{aligned}
\label{appeqn:unify close form}
     \pi_{i+1}(a|s) = &\pi_{\text{base}}(a|s)f(s,a;\pi_{\text{pe}})\\
     \text{where } f(s,a;\pi_{\text{pe}})&:=\frac{1}{Z(s)}\exp\left(\frac{\hat{A}^{\pi_{\text{pe}}} (s,a)}{\lambda^*(s)}\right)\\
     Z(s):=\sum_a &\pi_{\text{base}}(a|s)\exp\left(\frac{ \hat{A}^{\pi_{\text{pe}}} (s,a)}{\lambda^*(s)}\right)
\end{aligned}
\end{equation}

In practice, the non-parametric solution in \Eqref{appeqn:unify close form} can be projected onto the parametric policy class by minimizing the KL divergence:
\begin{align}
&\argmin_{\phi} \underset{s \sim \Dcal}{\E} \left[\mathrm{D_{KL}}(\pi_{i+1}(\cdot|s)\|\pi_\phi(\cdot|s))\right]\\
=&\argmin_{\phi} \underset{s \sim \Dcal,a \sim \pi_{i+1}(\cdot|s)}{\E} \left[ \log\left(\frac{\pi_{i+1}(a|s)}{\pi_\phi(a|s)}\right)\right]\\
=&\argmin_{\phi} \underset{s \sim \Dcal,a \sim \pi_{base}(\cdot|s)}{\E} \left[ f(s,a;\pi_{\text{pe}}) \log\left(\frac{\pi_{i+1}(a|s)}{\pi_\phi(a|s)}\right)\right] \quad\text{by \Eqref{appeqn:unify close form}}\\
=&\argmax_{\phi} \underset{s \sim \Dcal,a \sim \pi_{base}(\cdot|s)}{\E} \left[ f(s,a;\pi_{\text{pe}}) \log\left(\pi_\phi(a|s)\right)\right]
\label{appeqn:unify KL}
\end{align}

\section{Experimental Details and Extended Results}
\label{app:implementation}
\subsection{Reasonableness for STR  to omit $Z(s)$ in practice}
\label{app:omit normalize}
In \Eqref{eqn:pi objective}, as all prior EAWBC works, we omit the normalization factor $Z(s)$,
because it only affects the relative weight of different states in the training objective, not different actions. The EAWBC objective in \Eqref{eqn:eawbc obj is} is derived from the minimization of $\mathrm{D_{KL}}(\pi_{i+1}(\cdot|s)\|\pi_\phi(\cdot|s))$. 
Since the weight at each state do not have specific meaning, 
we can minimize this KL divergence under any distribution  whose support is equal to $d^{\pi_{i+1}}(s)$. And because of the equal-support property of STR $\operatorname{supp}(d^{\pi_{i+1}}(\cdot)) = \operatorname{supp}(d^{\beta}(\cdot))$ by \cref{prop:tabular pi supp}, $d^{\beta}(s)$ is a qualified distribution, which allows us to sample directly from $\Dcal$ to optimize. 
Since the density of $d^{\beta}$ already differs from $d^{\pi_{i+1}}$, it has no meaning to restore the correct $d^{\beta}(s)$ density by the normalization factor $Z(s)$, which is empirically hard to estimate and will introduce more instability.

\subsection{Experimental Details}
\label{app:implementation_details}

\begin{table}[htbp]
\caption{Hyperparameters of policy training in STR.}

\label{tab:hyper_str}
\begin{center}
\begin{small}
\begin{tabular}{cll}
\toprule
                              & Hyperparameter          & \multicolumn{1}{l}{Value}           \\ \midrule
\multirow{12}{*}{STR}         & Optimizer               & \multicolumn{1}{l}{Adam ~\citep{kingma2014adam}}            \\
                              & Critic learning rate    & \multicolumn{1}{l}{$3\times 10^{-4}$}            \\
                              & Actor learning rate     & \multicolumn{1}{l}{$3\times 10^{-4}$ with cosine schedule}  \\
                              & Batch size              & 256                                 \\
                              & Discount factor                & 0.99                                \\
                              & Number of iterations    & $10^6$                             \\
                              & Target update rate $\tau$      & 0.005                               \\
                              & Policy update frequency & 2                                   \\
                              & Number of Critics & 4                                   \\
                              & Temperature $\lambda$   & \multicolumn{1}{l}{\{0.5, 2\} for Gym-MuJoCo} \\
                              &                         & \multicolumn{1}{l}{\{0.1\} for AntMaze}    \\
                              &Variance of Gaussian Policy  & 0.1              \\
                              \midrule
\multirow{2}{*}{Architecture} & Actor    & input-256-256-output                                 \\
                              & Critic & input-256-256-1                                      \\ \bottomrule
\end{tabular}
\end{small}
\end{center}
\end{table}

For the MuJoCo locomotion tasks, we average returns over 10 evaluation trajectories and 5 random seeds, while for the Ant Maze tasks, we average over 100 evaluation trajectories and 5 random seeds. Following the suggestions of the authors of the dataset, we subtract 1 from the rewards for the Ant Maze datasets. We choose TD3~\citep{fujimoto2018addressing} as our base algorithm and optimize a deterministic policy. To compute the importance sampling ratio, we need the density of any action under the deterministic policy. For this, we assume all policies are Gaussian with a fixed variance $0.1$. Note that the only hyperparameter we tuned is the temperature $\lambda$. We use $\lambda=0.1$ for Ant Maze tasks and $\lambda = \{0.5,2\}$ for MuJoCo locomotion tasks ($\lambda=2$ for expert and medium-expert datasets, $\lambda=0.5$ for medium, medium-replay, random datasets). And following previous work ~\citep{brandfonbrener2021offline}, 
we clip exponentiated advantages to $(-\infty, 100]$.
All hyperparameters are included in \Cref{tab:hyper_str}.

\subsection{From Theoretical to Practical}
The practical STR algorithm takes larger update steps~(smaller $\lambda$) than what theory recommends, which is common among trust region methods. In addition, the behavior density is estimated using a specific model, which will inevitably have errors. However, compared with other algorithms that require the behavior density, STR is less susceptible to such errors. This is because STR only needs to query the behavior density of the in-dataset $(s,a)$ pairs, i.e., $\hat \beta(a|s)$ where $(s,a)\sim \Dcal$, and therefore does not require much generalization ability of the model, making it relatively easier to estimate accurately.

\subsection{KL Divergence between Trained Policy and Behavior Policy.}
\label{app:KL}

\begin{figure}[ht]
	\centering
	\includegraphics[width=0.9\linewidth]{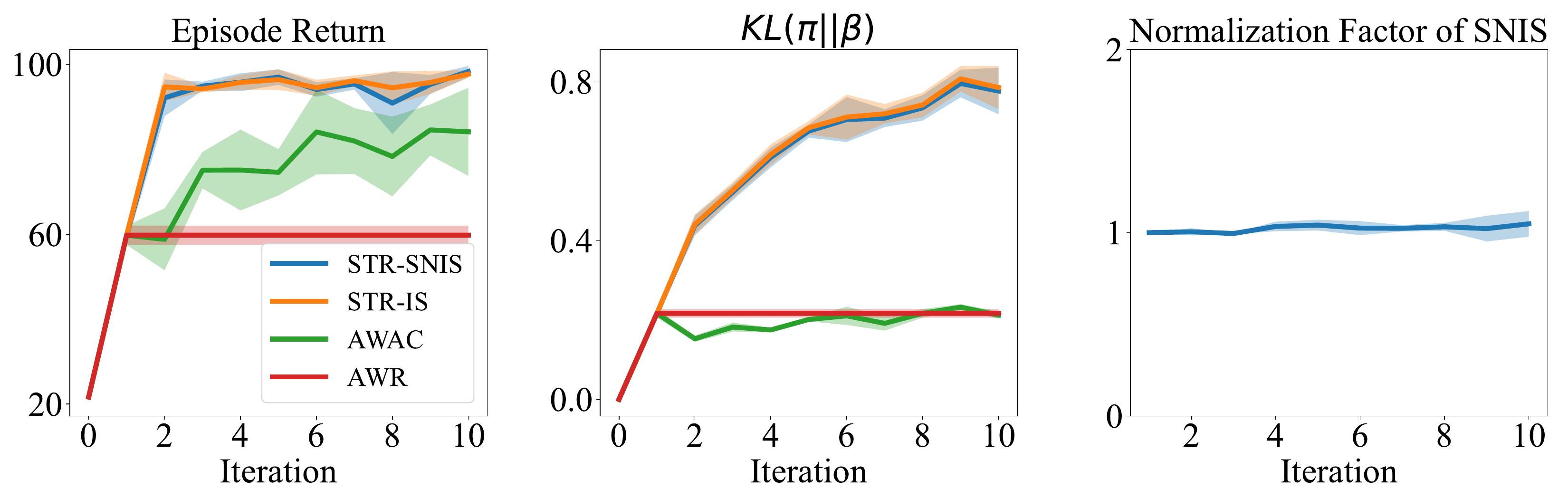}
	\caption{Extended Results on hopper-medium-replay-v2. The experimental setup is the same as that in \cref{sec:Experimental Verification of the Theories}.
	}
	\label{fig:app_convergence}
\end{figure}

In this section, we investigate the KL divergence between the trained policy and the behavior policy under various EAWBC algorithms. We conduct experiments in the same setup as \cref{sec:Experimental Verification of the Theories}~(\cref{fig:convergence}) for hopper-medium-replay-v2. Both policy evaluation and policy improvement are trained to convergence at each iteration, and all algorithms adopt the same hyperparameter that controls the constraint strength. The results are shown in \cref{fig:app_convergence}.
Compared to STR-SNIS and STR-IS, AWAC and AWR have substantially smaller $\mathrm{KL}(\pi||\beta)$ and inferior performance due to their implicit density constraint.
On the other hand, $\mathrm{KL}(\pi||\beta)$ of STR-IS and STR-SNIS increases significantly with iteration. With a less restrictive support constraint, STR is able to deviate more from the behavior policy~(larger $\mathrm{KL}(\pi||\beta)$) to achieve better performance.
Note that STR-IS and STR-SNIS yield similar results. This is because the normalization factor of SNIS is actually very close to $1$.
Therefore, IS plays a critical role in STR by relaxing the density constraint to support constraint while SNIS is a non-critical component.
In addition, compared with IS, the variance reduction effect of SNIS does not implicitly decreases $\mathrm{KL}(\pi||\beta)$~\citep{chatterjee2018sample}.

\subsection{Experimental Verification of Theories in the Tabular Setting}
\label{app:Experimental Verification of Theories in the Tabular Setting}
\begin{figure}[ht]
	\centering
	\begin{subfigure}{0.23\textwidth}
    \includegraphics[width=1\textwidth]{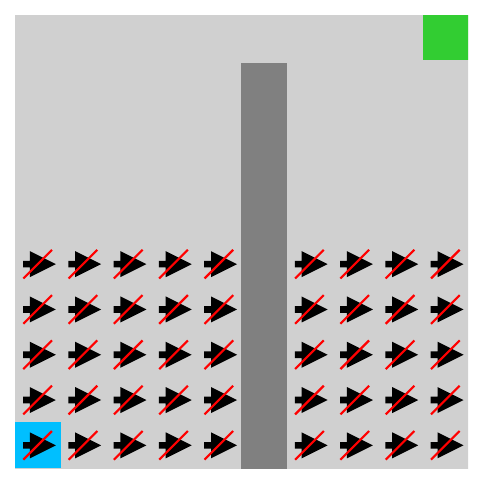}
    \vspace{1mm}
    \caption{Toy Maze}
    \label{fig:app_tabular_maze}
    \end{subfigure}
    \hspace{5mm}
    \begin{subfigure}{0.55\textwidth}
    \includegraphics[width=\textwidth]{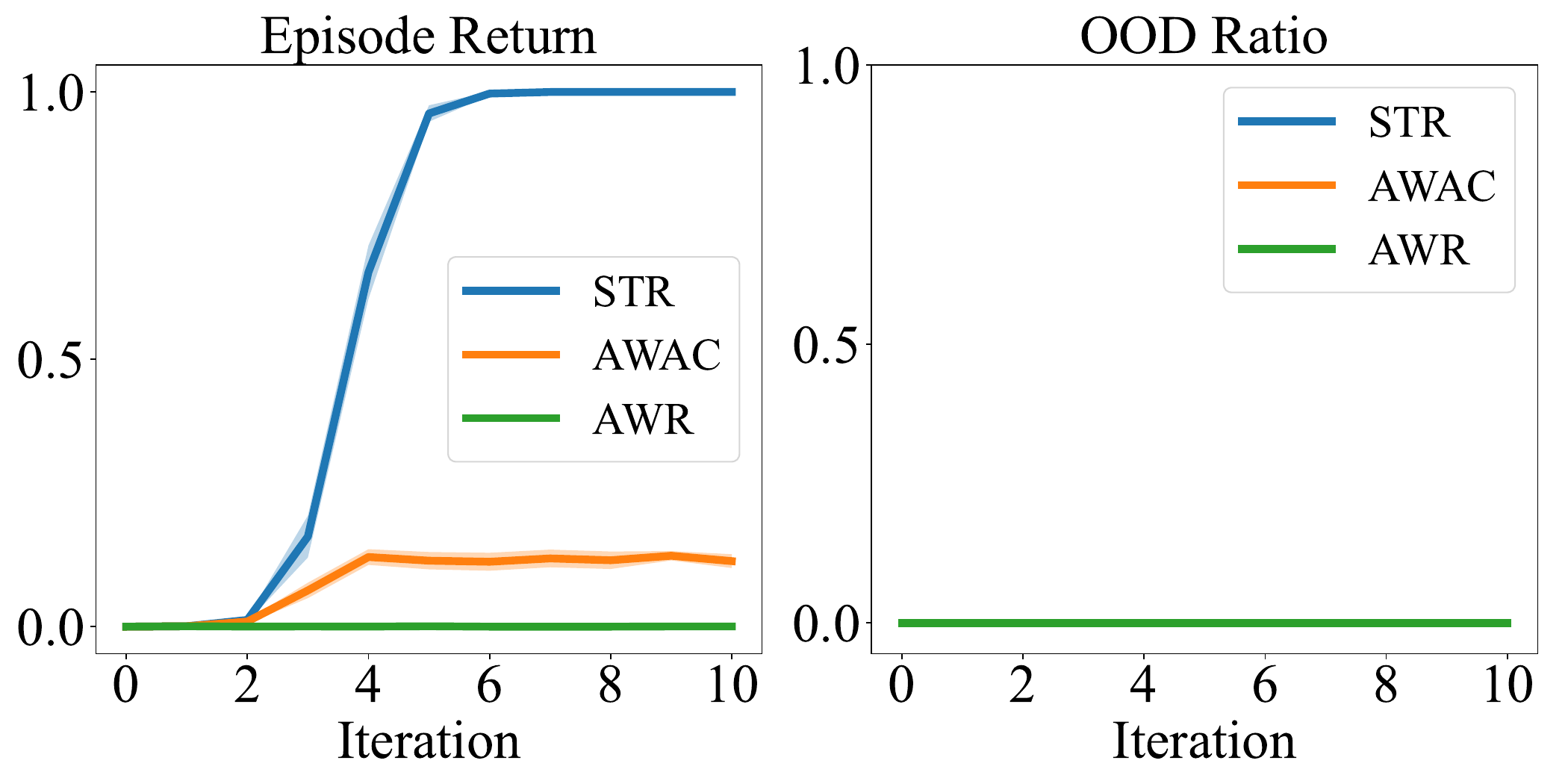}
    \caption{Learning Curves on the Toy Maze}
    \label{fig:app_tabular_curve}
    \end{subfigure}
	\caption{(a) The maze environment with OOD actions. (b) The Episode Return of STR, AWAC and AWR~(averaged over 1000 trajectories). (c) The OOD Ratio of STR, AWAC and AWR. The curves are averaged over 5 seeds, with the shaded area representing the standard deviation across seeds. SVR guarantees strict policy improvement until convergence to the optimal support-constrained policy.
	}
	\label{fig:app_tabular}
\end{figure}
To verify the theories in \cref{sec:str tabular}, we conduct a $10\times10$ maze experiment. As depicted in \cref{fig:app_tabular_maze}, the task is to navigate from  bottom-left to top-right, with a wall in the middle. The agent receives a reward of $1$ upon reaching the goal. Episodes are terminated after $25$ steps and $\gamma$ is set to $0.9$. We first collect $10,000$ transitions using a random policy. Then we remove all the transitions containing rightward actions in the lower half of the maze, so that the rightward action in that region is Out-of-Distribution (OOD). For some policy $\pi$, the OOD Ratio indicates the proportion of $(s,a)$ not in the dataset among all $(s,a)$ pairs satisfying $\pi(a|s)>0$.
The results are shown in \cref{fig:app_tabular_curve}. The performance of STR increases monotonically with iteration until it converges to the optimal policy, verifying \cref{thm:tabular monotone}. Furthermore, the policy obtained by STR is completely within the behavior support~($0$ OOD Ratio). For AWAC and AWR, due to being implicitly subject to the more restrictive $\mathrm{KL}(\pi||\beta)$ density constraint, their policies are also within the support. However, they do not have strictly increasing performance and have no guarantees at convergence. This is consistent with our theoretical analysis~(\cref{tab:awbc theory}).

\subsection{Learning Curves of STR on MuJoCo and Antmaze Tasks}
\label{app:learning curves}
Learning curves of STR on MuJoCo locomotion tasks and Antmaze tasks are presented in \cref{fig:str_mujoco_appendix} and \cref{fig:str_ant_appendix} respectively. The curves are averaged over 5 seeds, with the shaded area representing the standard deviation across seeds.
\begin{figure}[ht]
	\centering
	\includegraphics[width=\linewidth]{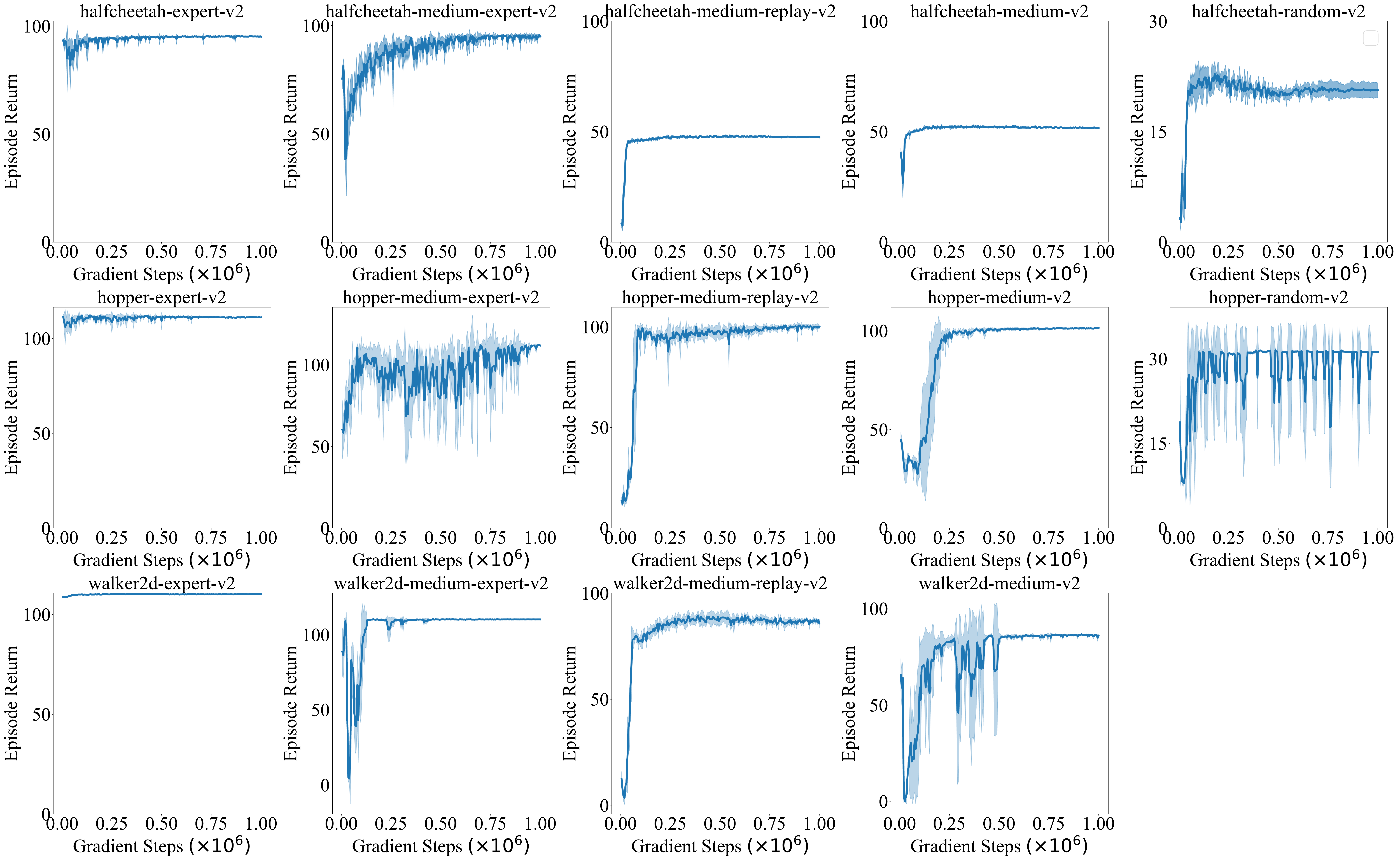}
	\caption{Learning Curves of STR on MuJoCo Locomotion Tasks.
	} 
	\label{fig:str_mujoco_appendix}
\end{figure}
\begin{figure}[ht]
	\centering
	\includegraphics[width=0.85\linewidth]{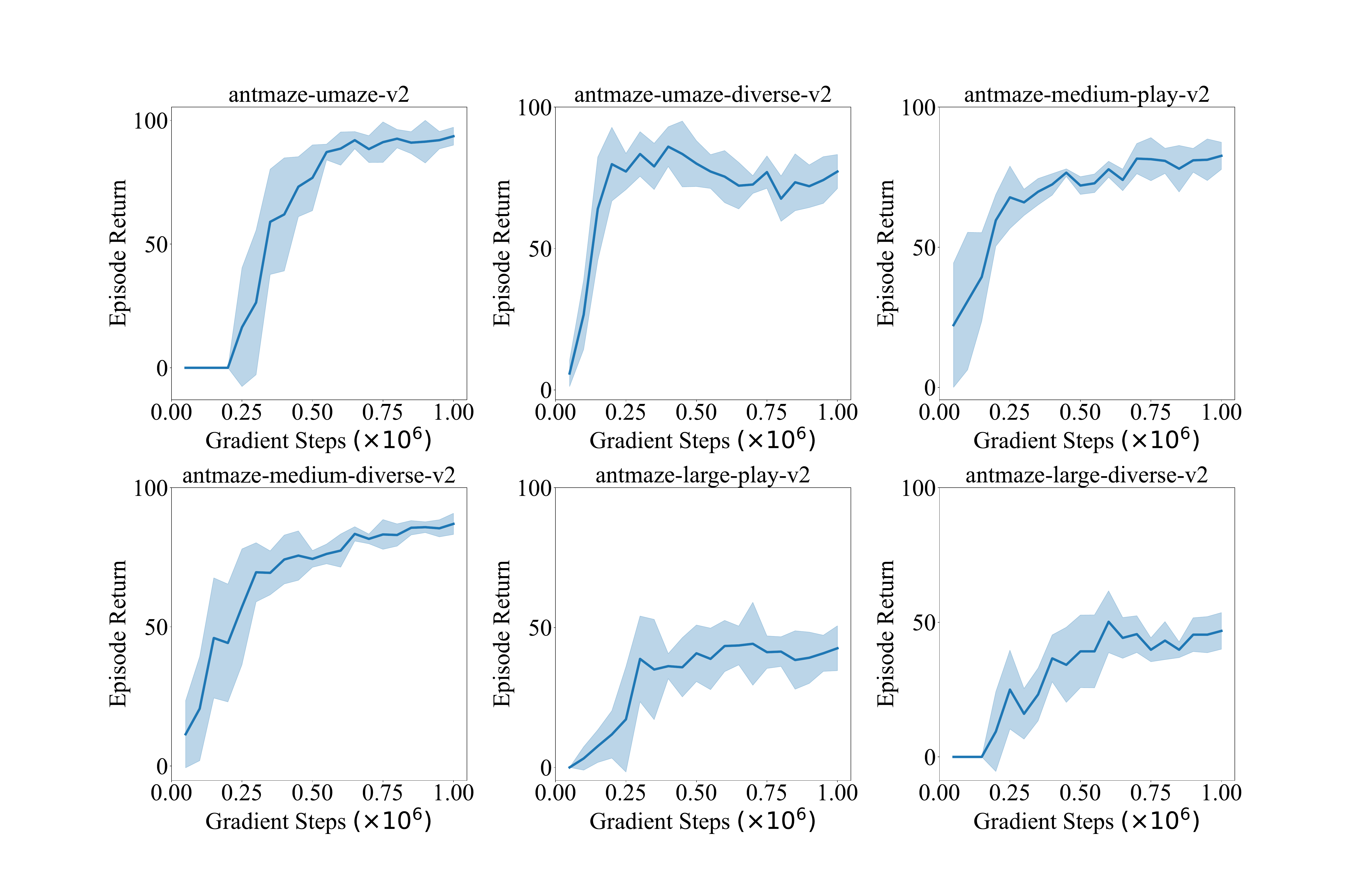}
	\vspace{-5mm}
	\caption{Learning Curves of STR on AntMaze Tasks.
	}
	\label{fig:str_ant_appendix}
\end{figure}

\end{document}